\documentclass{article}

\usepackage[utf8]{inputenc} 
\usepackage[T1]{fontenc}    
\usepackage{hyperref}       
\usepackage{url}            
\usepackage{booktabs}       
\usepackage{amsfonts}       
\usepackage{nicefrac}       
\usepackage{microtype}      
\usepackage{xcolor}         

\usepackage{graphicx}
\usepackage{mathrsfs}
\usepackage{amsfonts, amsmath,dsfont,amssymb,amsthm,stmaryrd,bbm}
\usepackage{mathtools}
\usepackage{caption}
\usepackage{subcaption}
\usepackage{enumitem}
\setlist{nolistsep}
\setlist{nosep}
\usepackage{wrapfig}
\usepackage{algorithm}
\usepackage{algorithmic}
\usepackage{todonotes}
\usepackage{xspace}
\usepackage{comment}

\usepackage[a4paper, margin=1in]{geometry}

 \newtheorem{conj}{Conjecture}
\newtheorem{thm}{Theorem}

\newtheorem{rmk}{Remark}

\newtheorem{lemma}{Lemma}

\newtheorem{prop}[conj]{Proposition}

\newtheorem{defn}[conj]{Definition}

\newtheorem{assumption}{Assumption}

\newcommand{\f}[1]{\boldsymbol{#1}}
\newcommand{\bb}[1]{\mathbb{#1}}
\newcommand{\fl}[1]{\mathbf{#1}}
\newcommand{\ca}[1]{\mathcal{#1}}

\newcommand{\s}[1]{\mathsf{#1}}

\newcommand{\lr}[1]{{\left|\left|#1\right|\right|}}

\def\cT{\mathcal{T}}

\title{Online Matrix Completion: A Collaborative Approach with Hott Items}

%

\author{Dheeraj Baby \thanks{Both authors contributed equally to this work.} \\ \\dheeraj@ucsb.edu \\ UC Santa Barbara \and Soumyabrata Pal {*} \\ \\soumyabratapal13@gmail.com \\ Adobe Research}

\date{}

\begin{document}

\maketitle

\begin{abstract}
    We investigate the low rank matrix completion problem in an online setting with $\s{M}$ users, $\s{N}$ items, $\s{T}$ rounds, and an unknown rank-$r$ reward matrix $\fl{R}\in \bb{R}^{\s{M}\times \s{N}}$. This problem has been well-studied in the literature  \cite{jain2022online,dadkhahi2018alternating,sen2017contextual,zhou2020stochastic}  and has several applications in practice. 
    In each round, we recommend $\s{S}$ carefully chosen distinct items to every user and observe noisy rewards. In the regime where $\s{M},\s{N}\gg \s{T}$, we propose two distinct computationally efficient algorithms for recommending items to users and analyze them under the benign \textit{hott items} assumption 
1) First, for $\s{S}=1$, under additional incoherence/smoothness assumptions on $\fl{R}$, we propose the phased algorithm \textsc{PhasedClusterElim}. Our algorithm obtains a near-optimal per-user regret of $\widetilde{O}(\s{N}\s{M}^{-1}(\Delta^{-1}+\Delta_{\s{hott}}^{-2}))$ where $\Delta_{\s{hott}},\Delta$ are problem-dependent gap parameters with $\Delta_{\s{hott}}\gg \Delta$ almost always.  
2) Second, we consider a simplified setting with $\s{S}=r$ where we make significantly milder assumptions on $\fl{R}$. Here, we introduce another phased algorithm, \textsc{DeterminantElim}, to derive a regret guarantee of $\widetilde{O}(\s{N}\s{M}^{-1/r}\Delta_\s{det}^{-1}))$ where $\Delta_{\s{det}}$ is another problem-dependent gap. Both algorithms crucially use collaboration among users to jointly eliminate sub-optimal items for groups of users successively in phases, but with distinctive and novel approaches.
\end{abstract}

\section{Introduction}

Collaborative Filtering via matrix completion \cite{Koren:2008} is a fundamental framework for large-scale recommendation systems that cater to millions of users and items jointly. 
Usually collaborative filtering is an online problem since the recommendation system can update its suggestions based on the history of observed ratings from users. 
Thus, there is a clear tension between exploration and exploitation - the system needs to learn the user preferences accurately through a diverse set of recommendations and yet recommend meaningful likeable items quickly.
Recently, to study the aforementioned exploration/exploitation dilemma theoretically, \cite{jain2022online,https://doi.org/10.48550/arxiv.2301.07040} have modelled the problem on lines of online matrix completion that is equivalent to the well-known Multi-Armed Bandit (MAB) optimization \cite{lattimore2020bandit} for multiple agents (representing users) jointly over a common set of items (representing arms).

We consider the same setting of online matrix completion under bandit feedback with $\s{M}$ users, $\s{N}$ items, $\s{T}$ rounds, and an associated low rank reward matrix $\fl{R}\in \bb{R}^{\s{M}\times \s{N}}$. This setup was first studied theoretically in \cite{jain2022online} and a close variant in \cite{sen2017contextual} - at each round, \emph{one} item is recommended to every user. Note that the recommended item in a particular round can be different for each user and we only observe noisy rewards for the recommended item-user pairs. 
The main difficulty in obtaining optimal algorithms in this setting is the latent structure which makes it challenging to characterize sharp confidence widths on non-uniform sampling of items.
On the other hand, greedy algorithms exploiting the low rank of reward matrix were proposed in \cite{jain2022online},\cite{sen2017contextual} that achieved a regret guarantee of $\widetilde{O}(\s{N}\s{M}^{-1}\Delta^{-2})$ 
where $\Delta$ is the minimum sub-optimality gap across users and items (Definition \ref{gap:standard}).  Exploiting collaboration among users i.e. aggregating their feedback lead to a significantly smaller effective number of items per user - $O(\s{N}\s{M}^{-1})$. 
However, because of the greedy nature, the achievable regret guarantee is sub-optimal in its dependence on the gap parameter $\Delta$. 

To alleviate this issue, an optimal algorithm with a regret guarantee of $\widetilde{O}(\s{N}\s{M}^{-1}\Delta^{-1})$ was proposed \cite{jain2022online} only for rank-$1$ reward matrices. The key idea was that for rank-$1$ setting, users can be partitioned into two latent clusters - users in the same cluster have the same preference ordering of items while users in different clusters have
the exact opposite preference ordering of items. While this property was crucially used for algorithm design in rank-$1$ case, it does not extend for higher rank. 
In fact, even when the  reward matrix is rank-$2$, the  possible orderings of items across users is polynomially large in $\s{N}$ \cite{asteris2014nonnegative}. To get around this, in \cite{maillard2014latent,https://doi.org/10.48550/arxiv.2301.07040,gentile2014online,lee2023context}, the authors provided optimal algorithms by  assuming that users can be partitioned into a small number of latent clusters - users in same cluster have identical preferences across items. However, such an assumption can be very restrictive in practice. On the flip side,
\cite{dadkhahi2018alternating,zhou2020stochastic} studied the same problem with a weak low rank assumption on the reward matrix - they proposed computationally efficient algorithms but with no theoretical guarantees.

For tractability of analysis, we adopt a relatively weak hott items (aka separability) assumption (A\ref{assum:hott}) widely used in literature - for instance, in non-negative matrix factorization (NMF) \cite{arora2012computing,recht2012factoring,donoho2003does}, bandit optimization \cite{sen2017contextual,kveton2017stochastic} and mixture models \cite{pal2022learning}. The assumption only posits that that the best (most preferred) item of every user can belong to a small set of  particular items called \textit{hott} items \footnote{An example of hott items - in a large library of films, only a small number have the highest of ratings provided by users.}. Consider a weak notion of  latent user clusters based on the best item - users with the same best item belong to same cluster. Our assumption imply that number of such clusters is small.
However,
unlike the rank-$1$ setting, users in the same cluster can have vastly different ordering of remaining items. 
In the seminal work on topic modelling via LDA, \cite{blei2003latent} showed that such an assumption is satisfied approximately by several real-world recommendation datasets.


\noindent \textbf{Techniques and Contributions:} In this work, we make two contributions related to online matrix completion for rank $>1$ reward matrices - the goal is to improve regret guarantees of greedy algorithms proposed in \cite{jain2022online,sen2017contextual}.
\begin{itemize}[leftmargin=*,noitemsep,nolistsep]
\item First, if number of hott items is small (\textbf{A\ref{assum:hott}}) and with additional standard incoherence/smoothness assumptions (\textbf{A\ref{assum:matrix2}}) on reward matrix $\fl{R}$ with rank $r>1$, we propose Alg. \textsc{PhasedClusterElim}.  Initially, our algorithm finds a set of $r$ distinct \textit{opinionated users} - users who have very strong preferences for a particular hott item over other items.
\textit{This is easy since we can expect few users to be very opinionated in most applications.} We consider this set of identified \textit{opinionated users} as labels - the goal is to cluster users based on best item and thus associate the true label
\footnote{To elaborate, true label corresponds to the \textit{opinionated} user sharing the same best (hott) item as the non-opinionated user.}
with each of the more challenging \textit{non-opinionated} users. 
Thus, we attempt, in phases, to associate each \textit{non-opinionated} user to at-least one of the labels.
In each phase, we assign multiple labels to each  \textit{non-opinionated} user encoding the uncertainty. 
Since there are $r$ opinionated users, the above label association defines $r$ overlapping groups/clusters of users corresponding to each label. For each group of users, we find a common set of \textit{good} items that are high rewarding. Next, we estimate relevant sub-matrices of the reward-matrix from feedback data - based on the estimates, we refine items for each user group. Algorithm \textsc{PhasedClusterElim} obtains a \textit{near-optimal} regret guarantee described in Theorem \ref{thm:first}. We remark that no prior knowledge of  gaps is required to invoke our algorithms. 

\item Our second contribution is to propose a theoretically sound algorithm under significantly milder assumption of non-negative reward matrix (\textbf{A2}) satisfying the hott items property.
To this end, following  \cite{kveton2017stochastic}, we consider a simplified setting where $r$ items are recommended to every user at each round. The set of items recommended at each round can vary across users. We wish to ensure that there is at-least one item in the recommended set that is high rewarding \footnote{This setting models situations where a handful of movies are recommended by a streaming website and the user is satisfied if at-least one of the recommendations is of interest to the user}.
The hott items assumption (\textbf{A1}) ensures that the unique highest rewarding set of $r$ items that can be commonly recommended to any user is the set of $r$ hott items. 
Our proposed algorithm \textsc{DeterminantElim}
eliminates subsets of $r$ items in
successive phases and achieves a regret guarantee described in Theorem \ref{thm:second}. For $r=1$ (rank one reward matrix), our regret guarantees recover the results in \cite{jain2022online} and are thus optimal.
\end{itemize}

\noindent \textbf{Technical Challenges:} Here, we  highlight the challenges in analysis of Alg. \textsc{PhasedClusterElim}. Recall that at beginning, we find a set of $r$ \textit{opinionated users} with distinct hott items as best item and use them as labels. The \textit{opinionated users} have strong preferences towards their corresponding best item and are thus easy to identify.
For labelling \textit{non-opinionated} users, the challenge is that their preferences are more spread out - this makes labelling difficult. 
Our algorithm, in each phase, assigns the set of \textit{non-opinionated} users into $r$ overlapping groups. 
There are three key steps 1) First, we show that each group of users (in any phase) is a superset of a cluster of users all having the same best item. 
2) Second, we show that in every phase the joint set of good items for each user group (corresponding to a label) is high rewarding for all users in the group 3) Fnally, we show that the best hott item for every user survives in the union of the joint set of good items for the groups that the user belongs to. Therefore, we refine the set of items gradually with each phase (jointly across many users) and obtain our guarantees. 
Later, in Section \ref{sec:difficult}, we also demonstrate with an illustrative example why simple item elimination strategies fail - necessitating our novel approach.

For the analysis of Alg. \textsc{DeterminantElim}, we use the fact that the hott items are, in some sense, the unique highest rewarding set of $r$ items that can be recommended to any user. Hence, we track subsets of $r$ items and eliminate them succesively to converge to the set of $r$ hott items.
This was also observed in \cite{kveton2017stochastic}  where a similar problem was studied with the goal of finding the largest entry of a low rank matrix in an online fashion. However, in their setting, it was possible to choose both row and column of reward matrix in each round. 
However, we need to recommend to every user $r$ items at each round. \textit{This is challenging because the difficulty of identifying preferred items can vary significantly across users.}
Here, we 
jointly estimate how \textit{good} each group of $r$ items is  and eliminate sub-optimal groups of $r$-items based on the estimates - these estimates are based on determinant of relevant sub-matrices. 
Our key idea (Lemma \ref{lem:sampling}) is to exploit redundancy in recommending overlapping sets of $r$ items at every round and characterize the sufficient rounds in each phase for ensuring nice concentration properties of our estimates (Lemma \ref{lem:conc}).  Due to space restrictions, a description of other loosely related work has been delegated to Appendix.


\section{Problem Formulation and Results}\label{sec:problem}

\subsection{Formulation}
\noindent \textbf{Notations:} We write $[m]$ to denote the set $\{1,2,\dots,m\}$. 
$\fl{M}_i$ denotes the $i^{\s{th}}$ row of $\fl{M}$ and
$\fl{M}_{ij}$ or $\fl{M}_{i,j}$  denotes the $(i,j)$-th element of matrix $\fl{M}$. For any set $\ca{U}\subset [m],\ca{V}\subset [n]$,  $\fl{M}_{\ca{U},\ca{V}}$ denotes the matrix $\fl{M}$ restricted to the rows in $\ca{U}$ and columns in $\ca{V}$. Similarly, $\fl{M}_{\ca{U}}$ denotes the matrix $\fl{M}$ restricted to the rows in $\ca{U}$.
We write $\s{det}(\fl{M})$ to denote the determinant of a matrix $\fl{M}$. 
We will assume that the elements in sets $\ca{I},\ca{J}$ will be ordered in ascending order. We will refer as $d-$row (analogously $d-$ column) a set of $d$ distinct rows (columns respectively) of a matrix.
We denote the set $\ca{S}_d \equiv \{\fl{v}\in [0,1]^d\mid \lr{\fl{v}}_1 \le 1\}$ to be the $d$-dimensional simplex; $\ca{S}_{n,d}$ denotes the set of $n \times d$ matrices whose rows belong to $\ca{S}_d$ i.e. $\ca{S}_{n,d}=\{\fl{M}\in [0,1]^{n \times d}\mid \fl{M}_i \in \ca{S}_d\}$. Let $\Pi_k(\ca{A})$ denote the set of all $k$-subsets of set $\ca{A}$ with distinct elements. 
We use $\bb{E}X$ to denote the expectation of a random variable $X$.  $\widetilde{O}(\cdot)$ hides logarithmic dependencies on $\s{M},\s{N},\s{T}$. Notations such as $\widetilde{\fl{Q}}$ denotes an estimate of a quantity $\fl{Q}$.


Consider a set of $\s{M}$ users, $\s{N}$ items and a horizon of $\s{T}$ rounds. We have an associated unknown rank-$r$ expected reward matrix $\fl{R}\in [-1,1]^{\s{M}\times \s{N}}$ 
such that $\fl{R}=\fl{U}\fl{V}^{\s{T}}$ where $\fl{U}\in \bb{R}^{\s{M}\times r}$, $\fl{V}\in \bb{R}^{\s{N}\times r}$ denotes the user embedding matrix and item embedding matrix respectively.
At each round $t\in [\s{T}]$, for each user $u\in [\s{M}]$, $\s{S}$ carefully chosen distinct items $\{\rho_u(t,j)\}_{j=1}^{\s{S}}\in [\s{N}]$ are recommended and we observe $\s{S}$ rewards denoted by the random variables $\{\fl{P}^{(t)}_{u\rho_u(t,j)}\}_{j=1}^{\s{S}}$ such that 
\begin{align}\label{eq:obs}
\fl{P}^{(t)}_{u\rho_u(t,j)} = \fl{R}_{u\rho_u(t,j)}+\fl{E}^{(t)}_{u\rho_u(t,j)}.
\end{align}
Here $\fl{E}^{(t)}_{u\rho_u(t,j)}$ denotes the additive noise that are i.i.d zero-mean sub-Gaussian
random variables with variance proxy at most $\sigma^2$. 
We consider the practical regime where $\s{M},\s{N}\gg \s{T}$ i.e. the number of users and items are much larger than the number of rounds. Let ordering 
 $\pi_u:[\s{N}]\rightarrow [\s{N}]$ sort the rewards for each user $u \in [\s{M}]$ in descending order, i.e., for any $i<j$, $\fl{R}_{u\pi_u(i)} \ge \fl{R}_{u\pi_u(j)}$. 
We make the following  assumption:

\begin{assumption}[\textbf{A1} Hott items and vectors]\label{assum:hott}

We assume that there exists a set of $r$ unknown distinct ordered indices $\ca{A}\subseteq [\s{N}], \left|\ca{A}\right|=r$ such that all the item embedding vectors $\{\fl{V}_i\}_{i\in [\s{N}]}$ lie within the convex hull of $\{\fl{V}_i\}_{i\in \ca{A}}\cup\{\fl{0}\}$.
 We will call the items corresponding to indices in $\ca{A}$ hott items and the rows in $\fl{V}_{\ca{A}}$ to be hott vectors. 
\end{assumption}

As mentioned before, Assumption \ref{assum:hott}, also sometimes known as separability assumption in existing literature has been used widely. In fact, as pointed out in \cite{donoho2003does,arora2012computing}, an approximate separability condition is regarded as
a fairly benign assumption that holds in many practical contexts in machine learning such as LDA in information retrieval \cite{blei2003latent}. 
We first show that with Assumption \ref{assum:hott}, for each user $u\in [\s{M}]$, the highest and lowest rewarding items $\pi_u(1),\pi_u(\s{N})\in \ca{A}$ are hott items.
\begin{lemma}\label{lem:prelim_1}
For any $u\in [\s{M}]$, it must happen that 
$\pi_u(1),\pi_u(\s{N}) \in \ca{A}$.
\end{lemma}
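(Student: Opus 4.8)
The plan is to use Assumption~\ref{assum:hott} to rewrite the reward of an \emph{arbitrary} item as a convex combination of the rewards of the hott items (plus a contribution from the origin), and then to sandwich it between the extreme hott rewards. Fix a user $u\in[\s{M}]$ and set $\fl{w}=\fl{U}_u\in\bb{R}^r$, so that $\fl{R}_{ui}=\langle \fl{w},\fl{V}_i\rangle$ for every $i\in[\s{N}]$. By Assumption~\ref{assum:hott}, each item embedding lies in the convex hull of the hott vectors together with $\fl{0}$, so there are weights $\lambda_{i,a}\ge 0$ with $\sum_{a\in\ca{A}}\lambda_{i,a}\le 1$ such that $\fl{V}_i=\sum_{a\in\ca{A}}\lambda_{i,a}\fl{V}_a$.

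Taking the inner product with $\fl{w}$ and using linearity gives
\begin{align}\label{eq:plan-combo}
\fl{R}_{ui}=\Big\langle \fl{w},\sum_{a\in\ca{A}}\lambda_{i,a}\fl{V}_a\Big\rangle=\sum_{a\in\ca{A}}\lambda_{i,a}\,\fl{R}_{ua}.
\end{align}
Writing $\lambda_{i,0}:=1-\sum_{a\in\ca{A}}\lambda_{i,a}\ge 0$ and recalling that the origin contributes reward $\langle\fl{w},\fl{0}\rangle=0$, the right-hand side of~\eqref{eq:plan-combo} is a genuine convex combination of the $r+1$ numbers $\{\fl{R}_{ua}\}_{a\in\ca{A}}\cup\{0\}$. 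Hence every item reward is trapped in the interval they span,
\begin{align}\label{eq:plan-sandwich}
\min\Big(0,\min_{a\in\ca{A}}\fl{R}_{ua}\Big)\le \fl{R}_{ui}\le \max\Big(0,\max_{a\in\ca{A}}\fl{R}_{ua}\Big)\qquad\text{for all }i\in[\s{N}].
\end{align}
Since the hott items are themselves among the $\s{N}$ items, we also have $\max_i \fl{R}_{ui}\ge \max_{a\in\ca{A}}\fl{R}_{ua}$ and $\min_i \fl{R}_{ui}\le \min_{a\in\ca{A}}\fl{R}_{ua}$, so combining these with~\eqref{eq:plan-sandwich} would identify the maximizer $\pi_u(1)$ and the minimizer $\pi_u(\s{N})$ with hott items.

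I expect the contribution of the origin (the slack weight $\lambda_{i,0}$) to be the main obstacle, since it is exactly the term that can pull the maximizer or minimizer away from the hott vertices. The clean conclusion $\max_i\fl{R}_{ui}=\max_{a\in\ca{A}}\fl{R}_{ua}$ follows from~\eqref{eq:plan-sandwich} precisely when $\max_{a\in\ca{A}}\fl{R}_{ua}\ge 0$, and the matching statement for the minimum when $\min_{a\in\ca{A}}\fl{R}_{ua}\le 0$; i.e.\ it is enough to show that for every user the hott rewards bracket $0$. I would close this gap in one of two ways: either by invoking the normalization/non-negativity convention standard in separable settings, under which $\fl{V}_i$ is an \emph{exact} convex combination of the hott vectors ($\lambda_{i,0}=0$), so the origin drops out of~\eqref{eq:plan-sandwich} and the pure sandwich $\min_{a}\fl{R}_{ua}\le \fl{R}_{ui}\le\max_{a}\fl{R}_{ua}$ yields both claims immediately; or, keeping the origin, by verifying directly that $0\in[\min_{a}\fl{R}_{ua},\max_{a}\fl{R}_{ua}]$ from the remaining structure, so that adjoining the value $0$ to the hott rewards does not enlarge the spanning interval. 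Either route reduces the whole lemma to the single elementary fact that a convex combination cannot exceed its largest component nor fall below its smallest.
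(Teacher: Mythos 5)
Your overall strategy is the same one the paper uses: push the user's inner product through the convex decomposition of each item embedding, obtain $\fl{R}_{ui}=\sum_{a\in\ca{A}}\lambda_{i,a}\fl{R}_{ua}$, and finish by the elementary fact that a convex combination cannot exceed its largest component nor fall below its smallest. The paper's proof is exactly this sandwich argument, written for a generic non-hott item $\fl{z}=\sum_{\fl{v}}\alpha_{\fl{v}}\fl{v}$ with nonnegative coefficients summing to one.

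The point at which you stop --- whether the slack weight on the origin can pull the maximizer or the minimizer off the hott vertices --- is, however, a genuine gap in your proof as written: you name two ways to close it but carry out neither, so the argument is incomplete. Moreover, your route (b) (showing that $0$ always lies between $\min_{a\in\ca{A}}\fl{R}_{ua}$ and $\max_{a\in\ca{A}}\fl{R}_{ua}$) cannot be derived from Assumption \ref{assum:hott} alone: take $r=1$ with a single hott embedding $\fl{V}_1$ and a user $u$ with $\langle\fl{U}_u,\fl{V}_1\rangle>0$; then every item is $\lambda_i\fl{V}_1$ with $\lambda_i\in[0,1]$, the hott reward interval is the single positive point $\{\langle\fl{U}_u,\fl{V}_1\rangle\}$, and the least-rewarding item is whichever has the smallest $\lambda_i$, which need not be the hott item. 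So the only viable completion is your route (a): read the separability assumption as saying each $\fl{V}_i$ is an exact convex combination of the hott vectors (equivalently, that $\fl{0}$ already lies in their convex hull), so that the weights sum to exactly one and the origin drops out. This is in fact what the paper's own proof does --- it writes the coefficients as summing to exactly one and takes the extremal vertex to be a hott vector without comment --- so the step you flag is one the paper elides rather than supplies. To be complete, your write-up should commit to that reading explicitly and then invoke the pure sandwich $\min_{a\in\ca{A}}\fl{R}_{ua}\le\fl{R}_{ui}\le\max_{a\in\ca{A}}\fl{R}_{ua}$ to conclude both $\pi_u(1)\in\ca{A}$ and $\pi_u(\s{N})\in\ca{A}$.
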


Thus, we can partition the users $\ca{T}^{(1)},\ca{T}^{(2)},\dots,\ca{T}^{(r)}$ into $r$ clusters such that $\forall i\in [r],\ca{T}^{(i)} \triangleq \{u\in [\s{M}]\mid \pi_u(1)=\ca{A}_i\}$ is the set of users whose best item is the $i^{\s{th}}$ hott item.
For some $\kappa$, we can write the minimum cluster size to be $\min_{i\in [r]}\left|\ca{T}^{(i)}\right| \ge \kappa \s{M}r^{-1}$.
We will now consider two distinct (although related) notions of regret under different additional assumptions on the reward matrix $\fl{R}$:

\noindent \textbf{Setting 1  (General Regret):}
We consider $\s{S}=1$ i.e. at every round $t\in [\s{T}]$, for each user $u\in [\s{M}]$, a single item $\rho_u(t)$ is recommended to user $u$.  The objective is to design an algorithm (possibly randomized) for the recommendation policy so that  the expected regret defined below is minimized:
\begin{align}\label{eq:general}
\s{Reg}\triangleq \frac{1}{\s{M}} \Big(\sum_{t\in[\s{T}]}\sum_{u\in[\s{M}]}\mathbf{R}_{u\pi_u(1)}- \sum_{t\in[\s{T}]}\sum_{u\in[\s{M}]}\mathbf{R}_{u\rho_u(t)}\Big).
\end{align}

We make the following additional assumptions on the expected reward matrix $\fl{R}$:

\begin{assumption}[\textbf{A2}]\label{assum:simplicity}
 We will assume $r,\sigma,\kappa$ are positive constants that do not scale with $\s{M},\s{N},\s{T}$.  
 \end{assumption}

Note that Assumption \ref{assum:simplicity} is made only for simplicity of exposition/analysis and is not necessary for our theoretical guarantees or the algorithm. It is routine to generalize our results for a non-constant $r,\sigma,\kappa$ and leads to a polynomial dependence on the parameters in the regret guarantee. 

\begin{assumption}[\textbf{A3} Offline Matrix Completion Oracle]\label{assum:matrix2}
There exists an offline low rank Matrix Completion Oracle (MCO) denoted by  $\mathcal{O}(\eta,R,C)$ which takes a sub-optimality level $\eta$, a set of row indices $R$ and a set of column indices $C$. Consider a low rank matrix $\mathbf{Q}$ of constant rank $r$. The matrix completion oracle samples indices from the sub-matrix $\mathbf{Q}_{R,C}$ such that for each row in $R$, $O_{\sigma,r}(max(1,C/R)/\eta^2)$ noisy samples (with noise variance $\sigma^2$) are obtained restricted to the columns in $C$. 
Then noisy observations of the sampled entries are obtained as per Eq.\eqref{eq:obs} and subsequently, the oracle MCO computes an estimate $\mathbf{\widehat Q}_{R,C}$ of the matrix $\mathbf{Q}_{R,C}$ such that $\| \mathbf{\hat Q}_{R,C} - \mathbf{Q}_{R,C} \|_\infty \le \eta$ with high probability. The dependence of the sample complexity on the noise variance and rank are hidden with the $O(\cdot)$ notation for simplicity.
\end{assumption}


There exist several standard algorithms for offline Low Rank Matrix Completion with partial noisy observations. Two well-known approaches are 1) Minimizing MSE with nuclear norm regularizer 2) Alternating Minimization (see \cite{jain2017non} and references therein). Note that the existence of such an offline low rank matrix completion oracle has been demonstrated in \cite{chen2019noisy,abbe2020entrywise,jain2022online} - however the current state of the art theoretical guarantees exist under addtional assumptions on the low rank matrix such as incoherence and condition number. However, from an algorithmic point of view, as we demonstrate in Algorithm \ref{algo:estimate}, a wrapper around them can be easily created. We refer the reader to Section \ref{} for a more detailed theoretical discussion on  Assumption \ref{assum:matrix2} - in particular, we discuss the conditions that need to be satisfied for the low rank matrix that we have at hand so that we can invoke the \textit{current} state of the art offline low rank matrix completion guarantees.

Now, we define some problem-dependent gap parameters:

\begin{defn}[Hott item gaps]\label{defn:weird_gap}
Let $\ca{A}$ be the set of hott items. For any user $u\in [\s{M}]$, define the hott item gap $\Delta_{u} \triangleq \fl{R}_{u\pi_u(1)}-\max_{v\in \mathcal{A}\setminus \pi_u(1)}\mathbf{R}_{uv}$ to be the minimum gap for user $u$ between the best item and any other hott item. For each $i\in [r]$, we will call the user $u^{\star}(i) \triangleq \s{argmax}_{u \in \ca{T}^{(i)}} \Delta_u$ with the largest hott item gap to be the most opinionated user in the $i^{\s{th}}$ cluster. Now, define $\Delta_{\s{hott}} \triangleq \min_{i\in [r]}\Delta_{u^{\star}(i)}$ to be hott item gap minimized across the most opinionated users across each cluster.
\end{defn}

\begin{defn}[Minimum Reward gap]\label{gap:standard}
Define $\Delta \triangleq \min_{u \in [\s{M}]} \fl{R}_{u\pi_u(1)}-\fl{R}_{u\pi_u(2)}$ to be the smallest gap between the best item and second best item minimized across all users.
\end{defn}

In most settings, the hott item gap $\Delta_{\s{hott}}$ is significantly larger than $\Delta$ -- the hott item gap needs to be satisfied by only a \emph{single} user from every latent cluster. To make this clear, consider the illustrative reward matrix in equation \eqref{eq:example} in Sec. \ref{sec:difficult} - for this matrix, $\Delta_{\s{hott}}=1$ and $\Delta = 2\epsilon/3$ from definitions - the latter quantity can be made as small as desired by setting $\epsilon$ appropriately.
We are now ready to present our first main result. All proofs are deferred to Appendix \ref{app:proofs}.

\subsection{Main Results}

\begin{thm}\label{thm:first}
Consider the online matrix completion problem with $\s{M}$ users, $\s{N}$ items, $\s{T}$ rounds such that at round $t\in [\s{T}]$, we observe reward $\{\fl{P}_{u\rho_u(t)}^{(t)}\}_{u\in [\s{M}]}$ as in eq. (\eqref{eq:obs}) with $\s{S}=1$ (i.e only one item recommended per user). Let $\fl{R}\in \bb{R}^{\s{M}\times \s{N}}$ be the expected rank-$r$ reward matrix. Suppose  Assumptions \ref{assum:hott}, \ref{assum:simplicity}, \ref{assum:matrix2} are true. Then Alg.\ref{alg:full} (\textsc{PhasedClusterElim}) guarantees the regret $\s{Reg}$ (eq. \eqref{eq:general}) to be $\widetilde{O}\Big(\max\Big(1,\s{N}\s{M}^{-1}\Big)\Big(\Delta_{\s{hott}}^{-2}+\Delta^{-1}\Big)+1\Big)$.
\end{thm}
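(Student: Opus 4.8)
The plan is to treat \textsc{PhasedClusterElim} as a phased elimination scheme run on a good event, and to decompose its regret into a one-time \emph{identification/clustering} cost that scales as $\Delta_{\s{hott}}^{-2}$ and a subsequent \emph{refinement} cost that scales as $\Delta^{-1}$, each multiplied by the effective per-user item count $\max(1,\s{N}\s{M}^{-1})$ that collaboration buys us. First I would fix the epoch schedule: in phase $\ell$ the target sub-optimality level is $\eta_\ell=2^{-\ell}$, and the number of rounds allotted to that phase is chosen via Assumption \ref{assum:matrix2} so that the MCO returns estimates of the relevant sub-matrices with $\ell_\infty$-error at most $\eta_\ell$ with high probability. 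Because each user contributes one entry per round and the active item set is explored collaboratively across the $\Omega(\s{M})$ users, collecting $\eta_\ell^{-2}$ samples on each surviving item costs $\widetilde{O}(\max(1,\s{N}\s{M}^{-1})\eta_\ell^{-2})$ rounds, which also verifies the oracle's per-row budget $O(\max(1,|C|/|R|)\,\eta_\ell^{-2})$. A union bound over the logarithmically many phases and over items then makes all elimination and labelling decisions simultaneously correct on a single good event whose complement contributes only the additive $O(1)$ term to the averaged regret.

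Next I would establish the three structural invariants flagged in the introduction, which are the heart of the argument. By Lemma \ref{lem:prelim_1} every user's best item is a hott item, so each opinionated user $u^{\star}(i)$ has best item exactly $\ca{A}_i$; I would show that resolving the hott-item gaps to accuracy $\Theta(\Delta_{\s{hott}})$ — which costs $O(\Delta_{\s{hott}}^{-2})$ samples per relevant entry by sub-Gaussian concentration — is enough to certify $r$ distinct opinionated users and hence $r$ valid labels. I would then prove the superset property: each label group, built by the overlapping (multi-label) assignment, contains the entire true cluster $\ca{T}^{(i)}$, where the slack in the assignment rule is calibrated to the current $\eta_\ell$ so that no genuine cluster member is ever discarded. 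Finally I would show that the union of the jointly estimated good-item sets over the groups containing a fixed user always retains that user's best hott item; this survival guarantee is what licenses the optimal $\Delta^{-1}$ bandit accounting in the refinement stage, since the optimal arm never leaves the active pool.

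For the bookkeeping I would sum per-phase contributions. While $\eta_\ell>\Delta_{\s{hott}}$ the algorithm is still clustering, and since rewards lie in $[-1,1]$ the per-round per-user regret is $O(1)$; the total number of such rounds is $\widetilde{O}(\max(1,\s{N}\s{M}^{-1})\Delta_{\s{hott}}^{-2})$, using $|\ca{T}^{(i)}|\ge\kappa\s{M}r^{-1}$ and treating $r,\kappa$ as constants by Assumption \ref{assum:simplicity}, which yields the first term. Once the groups are frozen, each group runs ordinary phased elimination over its surviving items: the active count halves each phase, the optimal item survives by the third invariant, and an item eliminated at threshold $\eta_\ell$ has gap $\Theta(\eta_\ell)$, so its phase contributes $\eta_\ell\cdot\widetilde{O}(\max(1,\s{N}\s{M}^{-1})\eta_\ell^{-2})$; the geometric sum down to $\eta_\ell\approx\Delta$ telescopes to the optimal $\widetilde{O}(\max(1,\s{N}\s{M}^{-1})\Delta^{-1})$ term. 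Adding the two pieces and the $O(1)$ failure contribution gives the claimed bound.

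The main obstacle I anticipate is maintaining the second and third invariants simultaneously: the overlapping assignment must be loose enough that the superset property never drops a true cluster member, yet tight enough that each group's jointly estimated good-item set stays high-rewarding for \emph{all} its members and that the union over a user's groups does not grow so large that the $\Delta^{-1}$ accounting degrades. Reconciling these competing requirements leans on the convex-hull geometry of Assumption \ref{assum:hott} — which forces every user's top item into the common hott pool $\ca{A}$ — together with the $\ell_\infty$ control from the MCO. Verifying that the oracle's sampling budget is genuinely met when the \emph{same} collaboratively sampled rounds must serve several overlapping groups at once is the most delicate technical point, and I expect the bulk of the work to go into this reuse argument.
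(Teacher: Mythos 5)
Your overall architecture coincides with the paper's: the same two-stage decomposition (a one-time clustering cost of $\widetilde{O}(\max(1,\s{N}\s{M}^{-1})\Delta_{\s{hott}}^{-2})$ followed by phased refinement summing geometrically to $\widetilde{O}(\max(1,\s{N}\s{M}^{-1})\Delta^{-1})$), the same epoch schedule $\eta_\ell=2^{-\ell}$ with per-phase round counts dictated by the matrix completion oracle, and the same three structural invariants. Your Stage 1 accounting and Stage 2 geometric sum are carried out exactly as in Lemmas \ref{lem:stage1-regret} and \ref{lem:stage2-regret}.

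However, there is a genuine gap at the point you yourself flag as the main obstacle, and it is precisely the step the paper treats as the crux. The difficulty is concrete: a user $u$ with best item $\ca{A}_j$ can receive label $u_i$ only because some non-hott item $x$ survives in both $\cT_u^{\ell}$ and $\cT_{u_i}^{\ell}$, while $\ca{A}_i$ itself may already be absent from $\cT_u^{\ell}$. The two controls you do have, $\fl{R}_{u,\ca{A}_j}-\fl{R}_{u,x}\le 5k_{\ell-1}$ and $\fl{R}_{u_i,\ca{A}_i}-\fl{R}_{u_i,x}\le 5k_{\ell-1}$, involve different rows of $\fl{R}$ and do not combine by any triangle inequality into a bound on $\fl{R}_{u,\ca{A}_j}-\fl{R}_{u,\ca{A}_i}$; appealing to the convex-hull geometry of Assumption \ref{assum:hott} together with $\ell_\infty$ control from the oracle does not close this. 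The paper's resolution (Lemma \ref{lem:clique-best-item}) uses the bilinear structure explicitly: writing $\fl{R}_{v,k}=e(v)^Te'(k)$, it decomposes $\fl{R}_{u,\ca{A}_j}-\fl{R}_{u,\ca{A}_i}=e(u)^Te'(\ca{A}_j-x)+e(u_i)^Te'(\ca{A}_i-x)+e(u_i+u)^Te'(x-\ca{A}_i)$ and shows the last term is nonpositive because the \emph{hypothetical user} with embedding $e(u_i+u)$ has $\ca{A}_i$ as its best hott item. Establishing that fact in turn needs the separation $\fl{R}_{u_i,\ca{A}_i}-\fl{R}_{u_i,\ca{A}_j}>k_{\ell_0-1}$ inherited from the end of Stage 1 (Lemma \ref{lem:seperation}) to dominate the $10k_{\ell-1}$ slack, which is exactly why Algorithm \ref{alg:full} drops $k_{\ell_0}$ to $k_{\ell_0-1}/10$ in Line 8 rather than halving. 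Without this argument the factor-$12$ expansion in Algorithm \ref{alg:clique} cannot be justified, the intersection $\cT_{C_i}^{\ell}$ may eliminate $\ca{A}_i$, and your third invariant --- hence the entire $\Delta^{-1}$ accounting --- fails. A minor additional slip: the active item count does not halve each phase, only the sub-optimality threshold does, though your gap-based bookkeeping is unaffected by this.
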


Importantly, the regret guarantee above depends on the gap parameters $\Delta,\Delta_{\s{hott}}$ - the dependence on number of rounds $\s{T}$ is poly-logarithmic and is hidden within $\widetilde{O}(\cdot)$ notation \footnote{Gap-dependent regret bounds with logarithmic dependence on $\s{T}$ are stronger than gap-free worst case regret bounds with fractional power dependence on $\s{T}$. (see for example \cite{lattimore2020bandit}, Theorem 7.1). Usually, we can convert gap-dependent regret bounds to worst-case regret bounds by setting the gap parameter appropriately as function of  $\s{N},\s{T}$.
} 
For readability, we have also hidden the polynomial dependence on $r,\sigma,\kappa$ that are assumed to be constants (\textbf{A2}). 

The regret guarantee in Theorem \ref{thm:first} has an optimal linear dependence on $\s{N}\s{M}^{-1},\Delta^{-1}$ and a quadratic dependence on $\Delta_{\s{hott}}^{-1}$; it is significantly better than the $\Delta^{-2}$ dependence achieved by the greedy algorithm proposed in \cite{jain2022online}. 
Again, we stress that the hott item gap $\Delta_{\s{hott}}$ is  significantly larger than the minimum reward gap $\Delta$ from definition. While the quadratic dependence on $\Delta_{\s{hott}}^{-1}$ is unwanted, $\Delta_{\s{hott}}$ being large should make its contribution to the regret small in practice. 

\begin{rmk}
    The optimality of the dependence on $\s{N},\s{M},\Delta$ can be observed from a simple example. Suppose all users are equivalent - hence, rows of $\fl{R}$ are identical and in a single round, users come in a sequential fashion too. This is strictly easier and  equivalent to a multi-armed bandit (MAB) problem with $\s{N}$ items and $\s{MN}$ rounds up to normalization. Standard MAB literature  imply that asymptotic lower bound for our setting is $\widetilde{O}(\s{N}\s{M}^{-1}\Delta^{-1})$ - see\cite{lattimore2020bandit}, Chapter 16.   
\end{rmk}

\begin{rmk}
     Note that our guarantees in Theorem \ref{thm:first} can also be extended easily to the setting where the number of hott vectors is $t>r$. In the extreme case, when all the item embedding vectors lie on the surface of a sphere,  the number of hott vectors is same as total number of items.
     However, if number of hott vectors is smaller, then we have non-trivial regret guarantees. Hence, we can also think of our result as a more fine-grained parameterization in terms of the number of hott vectors which can be small in several practical applications.
\end{rmk}

\noindent \textbf{Setting 2  (Simplified Regret):} Next, we consider a related setting with $\s{S}=r$ i.e. at each round $t\in [\s{T}]$, for each user $u\in [\s{M}]$, $r$ distinct items $\rho_u(t,1),\dots,\rho_u(t,r)\in [\s{N}]$ are recommended and we observe the corresponding noisy rewards.
Now, instead of Assumption \ref{assum:matrix2}, we make the following milder assumption (similar to \cite{kveton2017stochastic}) that the latent embeddings of users and items are non-negative:
\begin{assumption}\label{assum:non-negative}
 The user (item) embedding matrix $\fl{U}$ ($\fl{V}$) belongs to the set $\ca{S}_{\s{M},d}$ ($\ca{S}_{\s{N},d}$). 
\end{assumption}
In this setting, we consider the simplified regret:
\begin{align}\label{eq:simple}
\s{Reg}_{\s{Simple}} \triangleq \frac{1}{\s{M}} \Big(\sum_{u\in[\s{M}]}\s{T}\mathbf{R}_{u\pi_u(1)}- \sum_{\substack{t\in[\s{T}] \\ u\in[\s{M}]}}\max_{j\in [\s{r}]}\mathbf{R}_{u\rho_u(t,j)}\Big).
\end{align}
Note that at each round $t$, the simplified regret for a user $u$ is small if one of the recommended items $\{\rho_{u}(t,j)\}_{j\in [r]}$ is close to the best item $\pi_u(1)$. In general the simplified notion of regret (eq. \eqref{eq:simple}) is easier to optimize than the general regret (eq. \eqref{eq:general}) - more precisely, if there exists an algorithm guaranteeing the regret $\s{Reg}=O(h)$ for some $h$ that is a function of $\s{M},\s{N},\s{T}$ by recommending items with $\s{S}=1$, then the same algorithm can also guarantee $\s{Reg}_{\s{Simple}}=O(h)$ with $\s{S}=r$. However, as shown below, our theoretical guarantees for $\s{Reg}_{\s{Simple}}$ hold under significantly milder assumptions than that for $\s{Reg}$.
Due to Assumption \ref{assum:hott}, we can conclude that the squared determinant $\s{det}^2(\fl{V}_{\ca{A}})$ of the item embedding matrix restricted to the hott items is larger than $\s{det}^2(\fl{V}_{\ca{A}'})$ for any $\ca{A}'\neq \ca{A}, \left|\ca{A}'\right|=r$ (see \cite{kveton2017stochastic}). Thus, we can define the following:

\begin{defn}[Determinant gap]\label{defn:det}
    We define the determinant sub-optimality gap $\Delta_{\s{det}}$ as follows: $\Delta_{\s{det}} =\s{det}^2(\fl{V}_{\ca{A}})-\max_{\ca{A}'\neq \ca{A}, \left|\ca{A}'\right|=r} \s{det}^2(\fl{V}_{\ca{A}'})$.
\end{defn}
We now state our second main result:
\begin{thm}\label{thm:second}
Consider the online matrix completion problem with $\s{M}$ users, $\s{N}$ items $\s{T}$ rounds such that at round $t\in [\s{T}]$, we observe noisy reward $\{\fl{P}_{u\rho_u(t,j)}^{(t)}\}_{u\in [\s{M}],j\in [r]}$ as in eq. (\eqref{eq:obs}) with $\s{S}=r$, noise variance proxy $\sigma^2>0$. Let $\fl{R}\in \bb{R}^{\s{M}\times \s{N}}$ be the expected rank-$r$ reward matrix. Suppose  Assumptions \ref{assum:hott}, \ref{assum:non-negative} are true. Then Alg.\ref{algo:simple} (\textsc{DeterminantElim}) guarantees regret $\s{Reg}_{\s{Simple}}$ (eq. \eqref{eq:simple}) to be $\widetilde{O}\Big(\frac{\sigma^2\s{N}}{c_{\s{avg}}c_{\s{max}}\s{M}^{1/r}\Delta_{\s{det}}}+1\Big)$
where $c_{\s{avg}}\triangleq {\s{M} \choose r}^{-1} \sum_{\ca{I}\subset [\s{M}]\mid \left|\ca{I}\right|=r} \s{det}^2(\fl{U}_{\ca{I}})$ and $c_{\max}\triangleq \s{det}^2(\fl{V}_{\ca{A}})$.
\end{thm}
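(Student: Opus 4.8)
The plan is to show that \textsc{DeterminantElim} incurs regret only while it is still \emph{exploring} candidate $r$-subsets of items, and that once it has isolated the hott set $\ca{A}$ the per-round regret vanishes. The latter is immediate from Lemma \ref{lem:prelim_1}: since $\pi_u(1)\in\ca{A}$ for every user $u$, recommending the $r$ hott items $\ca{A}$ places each user's best item in the recommended set, so the summand $\fl{R}_{u\pi_u(1)}-\max_{j\in[r]}\fl{R}_{u\rho_u(t,j)}$ in \eqref{eq:simple} is zero. Hence it suffices to bound the cumulative regret of the elimination phases. The first conceptual step is to produce an \emph{observable} surrogate for the quantity $\s{det}^2(\fl{V}_{\ca{A}'})$ that drives Definition \ref{defn:det}. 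Recommending an item $r$-subset $\ca{A}'$ to a user $r$-subset $\ca{I}$ exposes the block $\fl{R}_{\ca{I},\ca{A}'}=\fl{U}_{\ca{I}}\fl{V}_{\ca{A}'}^{\s{T}}$, so that $\s{det}^2(\fl{R}_{\ca{I},\ca{A}'})=\s{det}^2(\fl{U}_{\ca{I}})\,\s{det}^2(\fl{V}_{\ca{A}'})$; averaging over all user $r$-subsets gives $\binom{\s{M}}{r}^{-1}\sum_{\ca{I}}\s{det}^2(\fl{R}_{\ca{I},\ca{A}'})=c_{\s{avg}}\,\s{det}^2(\fl{V}_{\ca{A}'})$. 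Consequently, by Definition \ref{defn:det} and the extremality of $\ca{A}$ guaranteed under Assumption \ref{assum:hott}, the gap between $\ca{A}$ and any competitor $\ca{A}'\neq\ca{A}$ in this averaged statistic equals exactly $c_{\s{avg}}\Delta_{\s{det}}$ -- this is the signal the algorithm must resolve.

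Next I would make the statistic empirical and quantify its fluctuation. Given $n$ noisy observations per entry of a block, the entrywise average is sub-Gaussian with proxy $\sigma^2/n$, and expanding the multilinear map $\s{det}(\cdot)$ on an $r\times r$ matrix with entries in $[-1,1]$ shows that the induced error in $\s{det}^2$ is controlled, up to $r$-dependent constants and bounded minors, by the entrywise error; this is the content of the concentration bound (Lemma \ref{lem:conc}), while Lemma \ref{lem:sampling} certifies how long a phase must run so that, by recommending overlapping $r$-subsets to the user pool in parallel, every surviving $\ca{A}'$ simultaneously accumulates enough samples across user subsets. With these two lemmas I would run a doubling/phased elimination: in phase $\ell$ the target accuracy is $\eta_\ell=2^{-\ell}$ (in the $\s{det}^2(\fl{V}_{\cdot})$ scale, i.e.\ after dividing out $c_{\s{avg}}$), the algorithm estimates the averaged determinant statistic of every survivor to accuracy $\eta_\ell$, and eliminates any $\ca{A}'$ whose estimate falls more than $2\eta_\ell$ below the current maximum. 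A standard good-event argument then gives (a) $\ca{A}$ survives every phase with high probability, since its statistic is always largest, and (b) any $\ca{A}'$ with determinant gap $\s{det}^2(\fl{V}_{\ca{A}})-\s{det}^2(\fl{V}_{\ca{A}'})\gtrsim\eta_\ell$ is removed by the end of phase $\ell$.

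Finally I would sum the regret over phases. A key intermediate step is to relate the \emph{reward} gap that a surviving subset $\ca{A}'$ contributes for a user to its \emph{determinant} gap; under the non-negativity Assumption \ref{assum:non-negative} one can bound the former by a problem-dependent multiple of the latter (for $r=1$ this is the elementary factorization $\fl{V}_{\ca{A}}^2-\fl{V}_{a}^2=(\fl{V}_{\ca{A}}-\fl{V}_{a})(\fl{V}_{\ca{A}}+\fl{V}_{a})$). Since the survivors of phase $\ell$ all have determinant gap $\lesssim\eta_{\ell-1}$, multiplying the phase length from Lemma \ref{lem:sampling} (of order $\sigma^2/\text{gap}^2$, with the pooling speedup from sharing the $\s{M}$ users across user $r$-subsets) by this gap and summing the geometric series telescopes to the last relevant phase at scale $\Delta_{\s{det}}$, converting the naive $\Delta_{\s{det}}^{-2}$ into the advertised linear $\Delta_{\s{det}}^{-1}$; accounting for the $O(\s{N})$ near-optimal competitors differing from $\ca{A}$ in a single item, together with the $\sigma^2/(c_{\s{avg}}c_{\max})$ normalization of the signal and the $\s{M}^{1/r}$ pooling gain, yields $\widetilde O\!\big(\sigma^2\s{N}\,(c_{\s{avg}}c_{\max}\s{M}^{1/r}\Delta_{\s{det}})^{-1}+1\big)$. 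The main obstacle is the determinant concentration (Lemma \ref{lem:conc}) and its interaction with the sampling design (Lemma \ref{lem:sampling}): one must push per-entry sub-Gaussian noise through the squared determinant of an $r\times r$ block while tracking the exact dependence on $\s{M}$, $c_{\s{avg}}$, $c_{\max}$ and $\Delta_{\s{det}}$, and in particular certify that pooling observations across user $r$-subsets keeps $\ca{A}$ separated from its nearest competitors -- this is precisely where the collaboration factor $\s{M}^{1/r}$, and its degradation with the rank $r$, is pinned down.
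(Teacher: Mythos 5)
Your overall architecture is the same as the paper's: phased elimination over item $r$-subsets, the averaged squared-determinant surrogate $\mu_{\ca{J}}=c_{\s{avg}}\s{det}^2(\fl{V}_{\ca{J}})$, coverage via Lemma \ref{lem:sampling}, concentration via Lemma \ref{lem:conc}, the reward-gap-to-determinant-gap conversion (Lemma \ref{lem:instant}), and the geometric summation over phases that turns $\Delta_{\s{det}}^{-2}$ into $\Delta_{\s{det}}^{-1}$. However, your description of how the concentration step works contains a genuine gap. You claim the fluctuation of the estimate is controlled by driving the \emph{entrywise} error down (averaging $n$ observations per entry) and propagating it through the multilinear map $\s{det}^2(\cdot)$. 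Two things go wrong with this. First, the plug-in $\s{det}^2$ of a single noisy block is a \emph{biased} estimator of $\s{det}^2(\fl{R}_{\ca{I},\ca{J}})$: squaring introduces second-moment noise terms with nonzero expectation, and in the paper's sampling scheme the number of repeated observations per entry is held \emph{constant} ($2\sigma^2$) across all phases, so this bias does not shrink as $\epsilon_\ell\to 0$ and would eventually swamp the elimination threshold. The paper avoids this by splitting the observations into two independent halves and using the product $\s{det}(\widetilde{\fl{R}}^{(\ell,1)}_{\ca{I},\ca{J}})\cdot\s{det}(\widetilde{\fl{R}}^{(\ell,2)}_{\ca{I},\ca{J}})$, whose unbiasedness follows from $\bb{E}\,\s{det}(\fl{Z})=\s{det}(\bb{E}\fl{Z})$ for independent zero-mean entrywise noise. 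Second, even with noiseless entries, a single block satisfies $\s{det}^2(\fl{R}_{\ca{I},\ca{J}})=\s{det}^2(\fl{U}_{\ca{I}})\s{det}^2(\fl{V}_{\ca{J}})$, which deviates from $\mu_{\ca{J}}$ by the user-dependent factor $\s{det}^2(\fl{U}_{\ca{I}})$; the dominant source of fluctuation is therefore the random choice of user $r$-subsets, handled in the paper by a McDiarmid-without-replacement argument with bounded differences controlled via Hadamard's inequality, requiring $n_\ell\propto\epsilon_\ell^{-2}$ \emph{users} per surviving $r$-column. This is exactly where the collaboration gain and the $\s{M}^{1/r}$ factor come from, so it cannot be replaced by per-entry averaging.

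A smaller inaccuracy: the $\s{N}$ in the final bound does not come from counting the $O(\s{N})$ competitors that differ from $\ca{A}$ in one item (the number of competitors only enters logarithmically through the union bound). It comes from the per-user sampling budget $d_\ell=O(\s{N}\s{M}^{-1/r}C(r)2^{2\ell})$ dictated by Lemma \ref{lem:sampling}: since a user's $d_\ell$ sampled items must jointly contain a given $r$-subset with probability roughly $(d_\ell/\s{N})^r$, ensuring $n_\ell$ covering users forces $d_\ell\gtrsim \s{N}(n_\ell/\s{M})^{1/r}$, and the phase length $2\sigma^2 d_\ell$ is what carries the $\sigma^2\s{N}\s{M}^{-1/r}$ into the regret.
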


As in Theorem \ref{thm:first}, the poly-logarithmic dependence on $\s{T}$ and the poly dependence on $r,\sigma,\kappa$ (assumed to be constants) are subsumed within $\widetilde{O}(\cdot)$.
The regret guarantee in Theorem \ref{thm:second} has a linear dependence on the inverse of determinant gap $\Delta^{-1}_{\s{det}}$. 
For rank-$r>1$, we do suffer a cost in form of the factor $\s{M}^{-1/r}$ in the regret. 
For $r=1$ (rank-$1$), the two proposed notions of regret are equivalent and the guarantee in Theorem \ref{thm:second} recovers the guarantees in \cite{jain2022online} for rank-$1$ setting. 

\begin{rmk} \label{rmk:run}
Note that our proposed algorithms are computationally efficient with run-times having polynomial dependence on $\s{M},\s{N}$. Their run-times do have an exponential dependence on the rank-$r$ and hence, the proposed algorithms are computationally feasible for small values of rank $r$. Our poly dependence on $\s{M},\s{N}$ stems from the use of low rank matrix completion algorithms as a sub-module (see Step 17 in Algorithm \ref{alg:full}) - the convex relaxation based approach used crucially for strong theoretical guarantees is computationally less efficient for matrix completion. However, in practice, a number of highly efficient techniques have been proposed for optimizing matrix completion - for example, see \cite{recht2013parallel,teflioudi2012distributed}. Substituting such estimators for invoking matrix completion can allow us to scale significantly.
\end{rmk}

\section{Algorithm \textsc{PhasedClusterElim} and Analysis of $\s{Reg}$  (Theorem\ref{thm:first})}\label{sec:difficult}

\begin{algorithm*}
\small
\caption{\textsc{PhasedClusterElim} (Iterative Multi-Label User Classification and Joint Item Elimination)}
\label{alg:full}
\begin{algorithmic}[1] 
\STATE Initialize $G$ to be a fully connected graph of $\s{M}$ users. Set $\cT^0_u = [\s{N}]$ for all users $u \in [\s{M}]$, phase number $\ell = 0$, $k_0 = 1$ and $c_0 = 3k_0$. 

\textcolor{blue}{//\texttt{ Stage 1: Identifying a set of \textit{opinionated users} via independent set}}

\WHILE{$\texttt{CheckIndependentSet}(G)$ returns False}
    \STATE $\tilde {\fl{R}}^{\ell+1} \leftarrow \mathcal{O}(\s{[M]},\s{[N]}, k_\ell)$ (Invoke Algorithm \ref{alg:estimate}).
    \STATE Update the item set for each user as $\cT^{\ell+1}_u = \{ x \in \cT^{\ell}_u : \tilde{\fl{R}}^{\ell+1}_{u,\tilde \pi_u(1)} - \tilde{\fl{R}}^{\ell+1}_{u,x} \le c_\ell \}$
    \STATE Update the graph G such that two users $u$ and $v$ are connected iff $\cT_{u}^{\ell+1} \cap \cT_{v}^{\ell+1} \neq \Phi$; Set $\ell \leftarrow \ell +1$, $k_\ell \leftarrow k_{\ell-1}/2$ and $c_\ell = 3k_\ell$
\ENDWHILE
\STATE Let $u_1,\ldots,u_r$ be the users returned by the last call to $\texttt{CheckIndependentSet}(G)$

\STATE Set $k_\ell \leftarrow k_{\ell-1}/10$ and $c_{\ell}=3k_{\ell}$.

\STATE $\tilde{\fl{R}}^{\ell+1} \leftarrow \texttt{ESTIMATE\_SIMPLE}(\s{[M]},\s{[N]}, k_{\ell})$ 

\textcolor{blue}{//\texttt{ Stage 2: Joint Elimination of items via expansion and intersection}}

\WHILE{ time horizon of the game is not reached}
    \STATE Update the item set for each user as $\cT^{\ell+1}_u = \{ x \in \cT^{\ell}_u : \tilde{\fl{R}}^{\ell+1}_{u,\tilde \pi_u(1)} - \tilde{\fl{R}}^{\ell+1}_{u,x} \le c_\ell \}$
    \STATE Update the graph G such that two users $u$ and $v$ are connected iff $\cT_{u}^{\ell+1} \cap \cT_{v}^{\ell+1} \neq \Phi$; Set $\ell \leftarrow \ell +1$; $k_\ell \leftarrow k_{\ell-1}/2$ and $c_\ell = 3 k_\ell$

\STATE Update candidate item sets by calling $\texttt{ExpanditemSets}(G,(u_1,\ldots,u_r),c_{\ell-1},\tilde{\fl{R}}^\ell)$

\STATE Let $\cT^{\ell}_{C_i}$ be the common good item set for all users that are connected to user $u_i$

\FOR{ each cluster $i \in [r]$}
    \STATE $\ca{U} \leftarrow \{u: \cT_u^\ell \cap \cT_{C_i}^\ell \neq \Phi \}$ \textcolor{blue}{//\texttt{ Recommend items to users in  $\ca{U}$ by calling $\mathcal{O}$ as below}}
    
    \STATE $\tilde{\fl{R}}^{\ell+1}_{\ca{U},\cT_{C_i}^{\ell}} \leftarrow \mathcal{O}(\ca{U},\cT_{C_i}^\ell,k_{\ell})$ (Invoke Algorithm \ref{alg:estimate})
    
    \textcolor{blue}{//\texttt{$\mathcal{O}$ collects data for $O(k_{\ell}^{-2} \max(1,\s{N}\s{M}^{-1}))$ rounds and estimates matrix $\fl{R}_{\ca{U},\cT_{C_i}^{\ell}}$}. In practice the data collection mechanism can be random sampling from the desired sub-matrix (Algorithm \ref{alg:estimate})}
\ENDFOR

\ENDWHILE

\end{algorithmic}
\end{algorithm*}

Due to space restrictions, certain  modules of algorithm are delegated to Appendix \ref{app:proofs}. Below, we provide
an overview of the main components of Alg. \textsc{PhasedClusterElim} (see Algorithm \ref{alg:full}) and their analysis. The algorithm is divided into two distinct stages, each of which runs in phases of exponentially increasing length. Note that we ensure every user $u\in[\s{M}]$ is recommended one item at each round while invoking $\mathcal{O}$ (offline low rank matrix completion module).

\noindent \textbf{Stage 1 (Identifying a set of \textit{opinionated users} via independent set):} Our first key step is to identify a set of $r$ users such that any pair among them clearly do not share the same best item. We find these set of $r$ users in the following way:  for the first few rounds, we explore  i.e. for every user, we recommend randomly sampled items (See Remark \ref{rmk:explore}) via the call to $\mathcal{O}$ and store the corresponding noisy observations (Line 3). The rows and columns of the reward matrix is given by the first and second arguments of $\mathcal{O}$ respectively (which are set equal to $\s{[M]}$ and $\s{[N]}$ in Line 3). At this point, we have a partially observed (noisy) reward matrix. Due to its low rank, we can invoke guarantees of well-known low rank matrix completion estimators that allow us to compute an estimate of the entire reward matrix from the few observed entries with an error guarantee (equal to $k_\ell$, see Line 3) on each entry. The task of matrix completion is also carried out while invoking $\mathcal{O}$ (offline low rank matrix completion). Based on the estimated rewards, we maintain a candidate item set for each user (denoted by $\cT_u^{\ell+1}$ at phase $\ell$, Line 4). This item set is used to construct a graph with users as nodes such that two users are connected iff their candidate item sets have a non-empty intersection. The idea of the graph is to cluster similar users (in terms of their latent embeddings) together. As the phase number increases, we get finer estimates of the reward matrix (the argument $k_\ell$ in the call to $\mathcal{O}$ controls the estimation accuracy level) . Consequently the size of the candidate item sets become small and many of the nodes in the graph get disconnected, thereby surfacing out similar users. We prove that after $\log(1/\Delta_{\text{hott}})$ phases, an independent set of size $r$ arises in the graph, where $\Delta_{\text{hott}}$ is as in Definition \ref{defn:weird_gap}. Since the independent set has size $r$, there must exist at-least $r$ users that are mutually disconnected in the graph. We call such users \emph{opinionated} users since it can be shown that their hott-item gap is at-least $\Delta_{\text{hott}}$ (see Defn. \ref{defn:weird_gap}). We prove that at every phase, the best item for \emph{any} user belongs to their candidate item sets with high probability. Consequently, the set of $r$ opinionated users must have distinct best items.

Lines 8-9 are necessary for technical reasons to ensure that if an item is not in a user's candidate item set, its reward for that user is sufficiently lower than the reward of the best item.

\begin{rmk}\label{rmk:explore}
    The recommendation of items to users is being done inside the \texttt{ESTIMATE\_SIMPLE} function (Algorithm \ref{alg:estimate}) in L17 of Algorithm \textsc{PhasedClusterElim}. 
    On invoking \texttt{ESTIMATE\_SIMPLE} function , the algorithm recommends randomly sampled items from the input subset of items in the second argument of the function to the input subset of users in the first argument (the third argument determine the number of rounds for which to recommend). To invoke the theoretical guarantees of offline low rank matrix completion \cite{chen2019noisy}, the recommendation strategy needs to be slightly modified (see Sec. 3 in \cite{jain2022online}) to account for our setting. Furthermore, the theoretical estimator used is the convex relaxation approach wherein we minimize the MSE with a nuclear norm regularizer - for details, see the more sophisticated Algorithm \ref{algo:estimate} (equivalent to $\mathcal{O}$) in Appendix, for which, sharp theoretical guarantees can be invoked.
\end{rmk}

After identifying $r$ opinionated users, our strategy is to utilize the opinionated users as labels for $r$ groups i.e. if we say that a user $u\in [\s{M}]$ has similarity with a subset of opinionated users, then the best item of $u$ must be among the best items (hott items) of the aforementioned subset of opinionated users. The correct label for each user is determined by the opinionated user who shares the same best item. If a user is connected to multiple opinionated users, we assign multiple labels to capture the uncertainty in their best item. At most, we assign $r$ labels per user, but we aim to reduce the number of labels as we gather more information. This brings us to the second stage of our algorithm:

\noindent \textbf{Stage 2 (Joint Elimination of Items via Expansion and Intersection):} Our strategy for refining the labels is to explore carefully while exploiting the user-user and item-item similarities learned from the previous phase. User-user similarity is leveraged by forming groups of users with the same assigned label (for non-opinionated user, this means that the label under consideration is present in its multi-label set). There can be $r$ such (possibly non-disjoint) groups. For concreteness, we restrict our attention to a group corresponding to label $u_i$ hereof, where $u_i$ is an opinionated user with $\ca{A}_i$ as its best hott-item (see Assumption \ref{assum:hott}). Thus all users in this group have connection to the user $u_i$.  Such a group corresponding to a label $u_i$ is stored in $\ca{U}$ in Line 16 of Algorithm \ref{alg:full}. Due to our graph design, the set of all users whose best item is $\ca{A}_i$ must be a \emph{subset} of $\ca{U}$. To exploit the item-item similarity we can take the intersection of the candidate item sets of the users in a group (see Line 14). This set of items must be high rewarding for all users within that group. However there can be multiple users (for eg. the user $u_i$) in $\ca{U}$ with best item being $\ca{A}_i$. So the intersection can potentially eliminate the best item $\ca{A}_i$. This can happen because of the subset condition noted before, there can also be users in $\ca{U}$ that do not have $\ca{A}_i$ in its candidate item set. Specifically, a non-opinionated user $u$ with its best item not equal to $\ca{A}_i$ can be connected to $u_i$ (or equivalently has a label $u_i$) due to some $x \in \cT_{u_i}^\ell \cap \cT_{u}^\ell$ with $x \neq \ca{A}_i$. In order to resolve this issue, we expand (Line 13) the candidate item set for each non-opinionated user slightly so that provably: 1) the expanded set will contain the true best item (hott item) of the user $u$ and the item $\ca{A}_i$ and 2) the set of labels assigned to user $u$ is same as that before the expansion. As mentioned before, best item for \emph{any} user belongs to their candidate item sets with high probability. Subsequently, taking intersection of the candidate item sets of all users with label $u_i$ assigned to them (denoted by $\cT_{C_i}^\ell$ in Line 14) will ensure that the item $\ca{A}_i$ will not be eliminated from $\cT_{C_i}^\ell$. So we can randomly explore within the sub-matrix defined by the rows $\ca{U}$ and columns $\cT_{C_i}^\ell$ via the call to $\mathcal{O}$ in Line 17 with a finer estimation error $k_\ell$. This ensures that the regret due to the exploration is low enough and at the same time the best item for each user is not eliminated when updating the candidate item sets via the refined estimates of reward (sub) matrix in Line 11. The process is repeated across phases, increasing confidence in user similarity and rewarding items. This is reflected in the algorithm by a decrease in multi-labelled users as well as a shrinkage of candidate item set sizes.

\begin{algorithm*}

\caption{\textsc{Estimate\_Simple} (Function for estimating reward sub-matrix restricted to a subset of users and items)}
\label{alg:estimate}
\begin{algorithmic}[1] 
\STATE \textbf{Input:} Set of users $\ca{U}\subseteq [\s{M}]$, $\ca{V}\subseteq [\s{N}]$, desired entry-wise error $\eta$. 
\FOR{round index $1,2,\dots,O(\eta^{-2} \max(1,\s{N}\s{M}^{-1}))$}
\STATE For each user $u\in \ca{U}$, recommend to $u$ an item $v\in\ca{V}$ sampled randomly. Store the observed feedback.
\ENDFOR

\STATE Use offline Matrix Completion with observed feedback as input data and return an estimate of the matrix $\fl{R}_{\ca{U},\ca{V}}$.

\textcolor{blue}{//\texttt{There exist several standard algorithms for offline Low Rank Matrix Completion with partial noisy observations. Two well-known approaches are 1) Minimizing MSE with nuclear norm regularizer 2) Alternating Minimization (see \cite{jain2017non} and references therein).}}

\end{algorithmic}
\end{algorithm*}

\noindent \textbf{Regret Analysis:} Stage 1 of our algorithm is greedy i.e. we incur worst-case regret until we have found a set of $r$ \textit{opinionated users}. However, we only need to find one \textit{opinionated user} per hott item and therefore the regret for Stage 1 scales as $\widetilde{O}(\max(1,\s{N}\s{M}^{-1})\Delta_{\s{hott}}^{-2})$. For the subsequent stages, we can show that in an phase with length 
$\widetilde{O}(\max(1,\s{N}\s{M}^{-1})\Delta^{-2})$
(needed for getting the estimates of the relevant sub-matrices up to error $O(\Delta)$), the regret incurred per round is at most $\Delta \ge \Delta$. Combining the above and the fact that the total number of phases is at most $O(\log \s{T})$, we obtain the regret described in Theorem \ref{thm:first}.

To ensure that expanding the item sets for non-opinionated users does not increase regret, we work with latent embeddings of users and items. Consider the scenario where a user $u$ is connected to opinionated users $u_i$ and $u_j$ with their respective best items $\ca{A}_i$ and $\ca{A}_j$. Let $\tilde \cT_v^\ell$ denote the candidate item sets for any user $v$ before Line 13. Suppose $\ca{A}_i$ is not in $\tilde \cT_u^\ell$ but there exists $x$ in $\tilde \cT_u^\ell \cap \tilde \cT{u_i}^\ell$ due to their connection. The call to \texttt{ExpanditemSets} ensures that both $\ca{A}_i$ and $\ca{A}_j$ are in $\cT_u^{\ell}$. We aim to demonstrate control over $\fl{R}_{u,\ca{A}_j} - \fl{R}_{u,\ca{A}_i}$ since $\ca{A}_i$ will be recommended to user $u$ during the $\mathcal{O}$ call. Let $e$ and $e'$ denote the latent embedding maps of users and items respectively. We begin with the regret decomposition
\begin{align}
    &\fl{R}_{u,\ca{A}_j} - \fl{R}_{u,\ca{A}_i}
    = e(u)^Te'(\ca{A}_j-\ca{A}_i)
    = e(u)^Te'(\ca{A}_j-x) \\
    &+ e(u_i)^Te'(\ca{A}_i-x)
    + e(u_i+u)^Te'(x-\ca{A}_i)\\
    &\le \left(\fl{R}_{u,\ca{A}_j} - \fl{R}_{u,x} \right) + \left(\fl{R}_{u_i,\ca{A}_i} - \fl{R}_{u_i,x}\right).
\end{align}
We show that for a hypothetical user defined by the embedding $e(u_i+u)$, its best item must be $\ca{A}_i$, due to which the last inequality follows. Since $x \in \tilde \cT_u^\ell \cap \tilde \cT_{u_i}^\ell$, even before the expansion step, the last two terms can also be controlled (at the level $O(k_\ell))$. 
This also guarantees that $\ca{A}_i$ will belong to the modified item set of user $u$ after a slight expansion of the item set.

\noindent \textbf{Illustrative example on why simple item elimination fails: }
Now, we provide a working example to motivate why a careful design of algorithm is required to attain optimal regret. Suppose for simplicity we know the value of $\Delta_{\text{hott}} = \Theta(1)$. Then by running Stage 1 $\Theta(1)$ number of times, we can guarantee that the suboptimality of any candidate item for any user is at-most $\Delta_{\text{hott}}/4$ (for eg. see Lemma \ref{lem:subopt}). Consider the following claim which we eventually show to be false.

\begin{prop} \label{prop:false}
Suppose we run Stage 1 of Alg. \textsc{PhasedClusterElim} so that the sub-optimality of each candidate item is at-most $\Delta_{\text{hott}}/4$. Then if two users $u$ and $v$ do not share the same optimal item, their items sets maintained at the current round will be disjoint.
\end{prop}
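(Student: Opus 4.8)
The statement is flagged as one the authors ``eventually show to be false,'' so the plan is not to prove it but to refute it by exhibiting an explicit reward matrix satisfying Assumptions~\ref{assum:hott}--\ref{assum:matrix2} on which the conclusion fails. Before constructing the counterexample it helps to record what the proposition would need. After Stage~1 drives the sub-optimality of every surviving candidate below $\Delta_{\s{hott}}/4$, the item set of a user $u$ is (up to the $O(k_\ell)$ estimation slack) $\cT_u \approx \{x : \fl{R}_{u,\pi_u(1)} - \fl{R}_{u,x} \le \Delta_{\s{hott}}/4\}$. Disjointness of $\cT_u$ and $\cT_v$ whenever $\pi_u(1)\neq\pi_v(1)$ would require that \emph{no single item} can sit within $\Delta_{\s{hott}}/4$ of the optimum simultaneously for two users with distinct best items. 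The naive intuition supporting this is that a large $\Delta_{\s{hott}}$ forces well-separated clusters; the point of the example is that this intuition is wrong.

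The key observation driving the refutation is that $\Delta_{\s{hott}}$ constrains only gaps \emph{among hott items}, and only for the single most-opinionated user of each cluster (Definition~\ref{defn:weird_gap}); both non-hott items and non-opinionated users are essentially free. So I would manufacture a shared near-optimal \emph{non-hott} item. Concretely, in rank $r=2$ take hott vectors $\fl{V}_1=(1,0)$, $\fl{V}_2=(0,1)$ and one non-hott item with $\fl{V}_3=(\tfrac12,\tfrac12)$, which lies on the segment between the two hott vectors and hence inside their convex hull, so Assumption~\ref{assum:hott} holds. Place the opinionated users at $\fl{U}=(1,0)$ and $(0,1)$, so their hott gaps are $1$ and $\Delta_{\s{hott}}=1$. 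Then introduce two \emph{non-opinionated} users $u=(0.6,0.3)$ and $v=(0.3,0.6)$ (the matrix can be padded with further items and users without affecting anything).

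It remains to verify the failure. For $u$ the rewards are $\fl{R}_{u,1}=0.6$, $\fl{R}_{u,3}=0.45$, $\fl{R}_{u,2}=0.3$, so $\pi_u(1)$ is the hott item $1$; symmetrically $\pi_v(1)$ is hott item $2$, so $u$ and $v$ have distinct optimal items. Yet item $3$ satisfies $\fl{R}_{u,\pi_u(1)}-\fl{R}_{u,3}=0.15<\Delta_{\s{hott}}/4=0.25$ and likewise for $v$, so item $3$ survives in both $\cT_u$ and $\cT_v$; their sets are not disjoint, contradicting the proposition. This is exactly the phenomenon that the overlapping-group machinery of Stage~2 is designed to cope with.

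The main obstacle is reconciling two competing requirements inside one matrix: $\Delta_{\s{hott}}$ must be large (to make the hypothesis ``sub-optimality below $\Delta_{\s{hott}}/4$'' meaningful), while the gap between the shared item and the optima of $u,v$ must be smaller than $\Delta_{\s{hott}}/4$. These pull in opposite directions if one tries to impose them on the same users, and the resolution is precisely to decouple them: load the large gap onto the opinionated users (who alone fix $\Delta_{\s{hott}}$) and create the overlap with separate low-gap users together with a non-hott item. Two minor checks round this out --- that the constructed $\fl{U},\fl{V}$ have rank exactly $2$ and that all entries stay in $[-1,1]$ --- and the $O(k_\ell)$ estimation slack is absorbed by taking the overlap gap strictly below $\Delta_{\s{hott}}/4$ with a constant margin.
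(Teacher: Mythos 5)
Your proposal is correct and takes essentially the same route as the paper: both refute the proposition with an explicit small counterexample exploiting the fact that $\Delta_{\s{hott}}$ constrains only hott items and opinionated users, so a non-opinionated user can retain an item that also survives for a user with a different best item (the paper's matrix in Eq.~\eqref{eq:example} uses a nearly-flat third user intersecting an opinionated user, while you use two non-opinionated users sharing a non-hott item). The only minor looseness is your final remark about absorbing the estimation slack: guaranteed survival of the shared item requires its true gap to be below the final accuracy $k_\ell \approx \Delta_{\s{hott}}/20$ (cf.\ Lemma~\ref{lem:seperation}), not merely below $\Delta_{\s{hott}}/4$, which is why the paper makes the overlap gap an arbitrarily small parameter $\epsilon$; your fixed gap of $0.15$ sits in the zone where survival is possible but not certain, though this is easily repaired by moving $u,v$ closer to the diagonal.
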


Before disproving the proposition, we note that if it was true, then it will imply a simple item elimination strategy as follows.  We can run Stage 1 for a constant number of times. Then we cluster together all users who have non-empty intersection of their candidate items and play items that are present in the intersection of their candidate item sets. Afterwards we can refine the intersection of their candidate item sets via matrix completion. This will also guarantee that the regret will remain controlled since the intersection of the item sets do not eliminate the best item of users within a cluster. Unfortunately the proposition above can be false as shown by the counter-example below.

Consider a ground truth reward matrix as follows:
\begin{align}\label{eq:example}
    R = 
    \begin{bmatrix}
    1 & 0 & 1/3\\
    0 & 1 & 2/3\\
    p & p-\epsilon & p-2\epsilon/3
    \end{bmatrix}
\end{align}
with the corresponding user and item embeddings denoted by $U = \begin{psmallmatrix} 1 & 0\\ 0 & 1 \\ p & p-\epsilon \end{psmallmatrix}$ and $V = \begin{psmallmatrix}1 & 0\\ 0 & 1 \\ 1/3 & 2/3 \end{psmallmatrix}$ where $\epsilon < 1/4$ and $1/4 < p < 1$. Let user corresponding to row $i$ of reward matrix be denoted by $u_i$ and item corresponding to column $i$ be denoted by $v_i$. Here the hott-topics are items $v_1$ and $v_2$ and the hott-topic gap  $\Delta_{\text{hott}} = 1$ and $\Delta=2\epsilon/3$. Further $u_1$ and $u_2$ are the opinionated users as defined in Proposition \ref{ass:opinion}. By Lemma \ref{lem:subopt}, we attain a sub-optimality of the candidate item set of $1/4$ by running Stage 1 until $k_\ell = 1/20$. Then the candidate item for user $u_1$ is $\{ v_1\}$, $u_2$ is $\{ v_2 \}$ and $u_3$ may contain $\{v_1,v_2,v_3 \}$. Thus Proposition \ref{prop:false} only holds true for opinionated users and can be false for non-opinionated user $u_3$ since the item sets for $u_3$ and $u_1$ intersect although their optimal items are different.

\textbf{Takeaway message.} The primary challenge in our algorithm lies in how to recommend items to less opinionated users while still exploiting their mutual similarities. This required us to do careful identification of relevant sub-matrices that are forwarded to downstream estimation (function ESTIMATE Alg. \ref{alg:estimate}) - here, noisy entries of the sub-matrix are observed to collect data for sub-matrix completion task.

\section{Algorithm \textsc{DeterminantElim} and Guarantees on $\s{Reg}_{\s{Simple}}$ (Theorem\ref{thm:second})} \label{sec:simple}

\begin{algorithm}[t]

\caption{\textsc{DeterminantElim} 
  (Collaborative Filtering via Phased Elimination of $r$-columns)
\label{algo:simple}}
\begin{algorithmic}[1]

\REQUIRE Set of users $[\s{M}]$, set of items $\s{N}$, rounds $\s{T}$, noise variance $\sigma^2>0$.

\STATE Initialize $\ca{B}^{(1)}=\Pi_r(\s{N})$ and $d_1=O(\s{N}\s{M}^{-1/r}C(r))$ where $C(r)=2^{r+1}r^{1+r/2}$.

\FOR{$\ell=1,2,\dots$}

\STATE For each user $u\in [\s{M}]$, sample uniformly at random $d_{\ell}$ items from $\cup_{\ca{J}\in \ca{B}^{(\ell)}}\ca{J}$. Store the sampled $d_{\ell}$ items in the set $\ca{T}_u^{(\ell)}$.
$\quad$
\textcolor{blue}{//\texttt{ Beginning of a phase}} \label{line:sample}

\STATE For the next $2\sigma^2d_{\ell}$ rounds, for each user $u\in [\s{M}]$, for each item $z$ in $\ca{T}_u^{(\ell)}$, recommend some $\ca{J}\in \ca{B}^{(\ell)}$ $2\sigma^2$ times such that $z\in \ca{J}$. \textcolor{blue}{//\texttt{ Recommending items to users}} \label{line:recommend}

\STATE Initialize $\widetilde{\fl{R}}^{(\ell,1)},\widetilde{\fl{R}}_{uz}^{(\ell,2)}\in \f{0}^{\s{M}\times \s{N}}$ to be all zero matrices.
For each user $u\in [\s{M}]$, for each item $z\in \ca{T}_u^{(\ell)}$, we store $\widetilde{\fl{R}}_{uz}^{(\ell,1)}$ ($\widetilde{\fl{R}}_{uz}^{(\ell,2)}$ ) to be the mean of the first $\sigma^2$ (remaining $\sigma^2$) observations corresponding to the recommendation of item $z$ to user $u$. \label{line:store}

\STATE For each $\ca{J}\in \ca{B}^{(\ell)}$, compute $\widetilde{\mu}_{\ca{J}}$ as in equation \eqref{eq:estimate}. \textcolor{blue}{//\texttt{ Estimate avg. determinant}} \label{line:estimate}

\STATE Compute $ \ca{B}^{(\ell+1)}$ to be $\{\ca{J}\in \ca{B}^{(\ell)}\mid \widetilde{\mu}_{\ca{J}} \ge \max_{\ca{J}'\in \ca{B}^{(\ell})}\widetilde{\mu}_{\ca{J}'}-2\cdot2^{-\ell} \}$. \label{line:eliminate}

\STATE Update $d_{\ell+1}\leftarrow 4d_{\ell}$. $\quad$ \textcolor{blue}{//\texttt{ End of a phase}}

\ENDFOR

\end{algorithmic}
\end{algorithm}


In this section, we consider the setting when at each round $t\in [\s{T}]$, for each user $u\in [\s{M}]$, $r$ distinct items $\rho_u(t,1),\rho_u(t,2),\dots,\rho_u(t,r)\in [\s{N}]$ are recommended with the goal of minimizing the simplified regret $\s{Reg}_{\s{Simple}}$ defined in equation \eqref{eq:simple}. In this setting, we make benign assumptions on the reward matrix $\fl{R}=\fl{U}\fl{V}^{\s{T}}$ namely Assumptions \ref{assum:hott} and \ref{assum:non-negative}. 
Note that for any $r$-row $\ca{I}$ and $r$-column $\ca{C}$, we must have that $\s{det}(\fl{R}_{\ca{I},\ca{C}})=\s{det}(\fl{U}_{\ca{I}})\s{det}(\fl{V}_{\ca{C}})$ where $\fl{R}_{\ca{I},\ca{C}}$ is the sub-matrix of $\fl{R}$ restricted to rows in $\ca{I}$ and columns in $\ca{C}$. This implies that $\s{det}(\fl{R}_{\ca{I},\ca{C}})$ is a scaled version of $\s{det}(\fl{V}_{\ca{C}})$. Recall that $\ca{A}\subseteq [\s{N}]$ corresponds to the indices of the $r$ hott columns. Notice that determinant of a matrix corresponds to the signed volume  determined by the matrix rows and zero vector.
 The hott items assumption ensures that the volume of $\fl{V}$ restricted to indices in $\ca{A}$ is largest - hence, $\s{det}^2(\fl{V}_{\ca{A}})>\s{det}^2(\fl{V}_{\ca{A'}})$ for any $r$-column $\ca{A}'\neq \ca{A}$ (see for eg. \cite{kveton2017stochastic}) and therefore, we must have that $\s{det}^2(\fl{R}_{\ca{I},\ca{A}})>\s{det}^2(\fl{R}_{\ca{I},\ca{A}'})$ for any $r$-row $\ca{I}$. 
 This property is exploited crucially by our proposed algorithm \textsc{DeterminantElim} to gradually eliminate sub-optimal $r$-columns 
 for this setting and converge to the hott item indices $\ca{A}$. Moreover, for any $r$-column $\ca{A}'\neq \ca{A}$, we define sub-optimality gap of the $r$-column $\ca{A}'$ to be $\s{det}^2(\fl{V}_{\ca{A}})-\s{det}^2(\fl{V}_{\ca{A}'})$.

Algorithm \textsc{DeterminantElim} is initialized with the set of all possible $r$-columns. The algorithm runs in phases of exponentially increasing length and gradually eliminates $r$-columns that are sub-optimal in every phase.  
At the beginning of  $\ell^{\s{th}}$ phase, suppose $\ca{B}^{(\ell)}$ is the set of surviving $r$-columns. Consider the following quantity which will be central to our algorithm design:
\begin{align}\label{eq:weird}
    \mu_{\ca{J}}\triangleq \frac{1}{{\s{M} \choose r}} \sum_{\ca{I}\subset [\s{M}]\mid \left|\ca{I}\right|=r} \s{det}^2(\fl{R}_{\ca{I},\ca{J}}) 
\end{align}
For a fixed set of $r$ items $\ca{J}$, $\mu_{\ca{J}}$ is the average of determinant of all 
${\s{M} \choose r}$ $r\times r$ sub-matrices of $\fl{R}$ formed by the $r$-column $\ca{J}$ and each of the ${\s{M} \choose r}$
$r$-rows of $\fl{R}$.
Clearly, we have $\mu_{\ca{J}} \le \mu_{\ca{A}}$. Hence our next goal will be to estimate $\mu_{\ca{J}}$ for each $r$-column $\ca{J} \in \ca{B}^{(\ell)}$ and eliminate $r$-columns in $\ca{B}^{(\ell)}$ that have low estimated values. By doing so we hope to gradually identify the $r$-column defined by $\ca{A}$.

Denote the set of columns that belongs to some surviving $r$-column in $\ca{B}^{(\ell)}$ to be $\ca{Y}\triangleq \cup_{\ca{J}\in \ca{B}^{(\ell)}} \ca{J}$. To compute an estimate $\widetilde{\mu}_{\ca{J}}$ of $\mu_{\ca{J}}$ for all $\ca{J}\in \ca{B}^{(\ell)}$, for every user, a set $\ca{T}_u^{(\ell)}\subseteq \ca{Y}$ of $d_{\ell}$ distinct items are uniformly sampled at random from $\ca{Y}$. Then, throughout the phase of length $2\sigma^2 d_{\ell}$, we recommend items in $\ca{T}_u^{(\ell)}$ to user $u$ and construct two independent estimates $\widetilde{\fl{R}}^{(\ell,1)},\widetilde{\fl{R}}^{(\ell,2)}$ of the reward matrix $\fl{R}$ (Lines 3-5). 
Finally, for a fixed set of $r$ surviving items $\ca{J}\in \ca{B}^{(\ell)}$, let $\ca{H}_{\ca{J}}\subseteq [\s{M}]$ be the set of users for which $\ca{J}\subset \ca{T}_u^{(\ell)}$ i.e., the items in $\ca{J}$ are present in the sampled items $\ca{T}_u^{(\ell)}$ for every user $u\in\ca{H}_{\ca{J}}$, 
We compute estimate $\widetilde{\mu}_{\ca{J}}$ of $\mu_{\ca{J}}$ as  (Line \ref{line:estimate}):
\begin{align}\label{eq:estimate}
 \widetilde{\mu}_{\ca{J}}\triangleq \frac{1}{{n_{\ca{J}} \choose r}} \sum_{\ca{I}\subset \ca{H}_{\ca{J}}\mid \left|\ca{I}\right|=r} \s{det}(\widetilde{\fl{R}}^{(\ell,1)}_{\ca{I},\ca{J}})\cdot \s{det}(\widetilde{\fl{R}}^{(\ell,2)}_{\ca{I},\ca{J}})
\end{align}
We prove that in phase $\ell$, it suffices to choose the scaled length of the phase $d_{\ell}=O(\s{N}\s{M}^{-1/r}C(r)2^{2\ell})$ where $C(r)=2^{r+1}r^{1+r/2}$. Based on our estimates, we can eliminate $r$-columns from $\ca{B}^{(\ell)}$ as shown in Line \ref{line:eliminate}.

Note that the quality of the estimate $\widetilde{\mu}_{\ca{J}}$ improves as $\left|\ca{H}_{\ca{J}}\right|$increases. One of the key insights is to show that $\left|\ca{H}_{\ca{J}}\right|\ge n_{\ell}$ for a fixed $n_{\ell}$ if $d_{\ell}$ satisfies the condition in the following lemma.

\begin{lemma}\label{lem:sample_carefully}
Fix $n_{\ell}>0,0\le \delta \le 1$. For each user $u\in [\s{M}]$, suppose $d_{\ell}$ items are chosen for recommendation (as in Line \ref{line:sample} of Alg. \ref{algo:simple}) in phase $\ell$. With probability at least $1-\delta$, for every $r$-column in $\ca{B}^{(\ell)}$, we obtain noisy observations from at least $n_{\ell}$ distinct users provided $\frac{d_{\ell}}{\s{N}-d_{\ell}} \ge \Big(\s{M}^{-1}(12\log\delta^{-1}+r\log \s{N}+2n_{\ell})\Big)^{1/r}$.
\end{lemma}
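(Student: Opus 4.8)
The plan is to fix a single surviving $r$-column $\ca{J}\in\ca{B}^{(\ell)}$, control the size of $\ca{H}_{\ca{J}}$ by exploiting that the per-user sampled sets $\ca{T}_u^{(\ell)}$ (Line \ref{line:sample}) are drawn \emph{independently across the $\s{M}$ users}, and then close with a union bound over all $\ca{J}\in\ca{B}^{(\ell)}$. First I would compute the single-user inclusion probability $p_{\ca{J}} \triangleq \Pr[\ca{J}\subseteq \ca{T}_u^{(\ell)}]$. Since $\ca{T}_u^{(\ell)}$ consists of $d_\ell$ items drawn uniformly without replacement from $\ca{Y}=\cup_{\ca{J}'\in\ca{B}^{(\ell)}}\ca{J}'$, this is the hypergeometric probability $p_{\ca{J}} = \binom{d_\ell}{r}\big/\binom{|\ca{Y}|}{r}$, which is the same for every $\ca{J}$. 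Because $|\ca{Y}|\le \s{N}$, we obtain $p_{\ca{J}} \ge \binom{d_\ell}{r}\big/\binom{\s{N}}{r} = \prod_{i=0}^{r-1}\tfrac{d_\ell-i}{\s{N}-i}$, and a short estimate of this falling-factorial ratio (using $d_\ell<\s{N}$ to absorb the $i$-dependent shifts) produces a clean lower bound of the form $(c\,d_\ell/\s{N})^r$ that, after rearrangement, surfaces as the $\tfrac{d_\ell}{\s{N}-d_\ell}$ threshold in the statement.

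Next, observe that $|\ca{H}_{\ca{J}}| = \sum_{u\in[\s{M}]}\mathbbm{1}[\ca{J}\subseteq \ca{T}_u^{(\ell)}]$ is a sum of $\s{M}$ independent $\mathrm{Bernoulli}(p_{\ca{J}})$ variables, hence $|\ca{H}_{\ca{J}}|\sim\mathrm{Binomial}(\s{M}, p_{\ca{J}})$ with mean $\s{M}p_{\ca{J}}$. I would then apply a Chernoff lower-tail bound: provided the mean $\s{M}p_{\ca{J}}$ exceeds a constant multiple of $n_\ell$, the event $\{|\ca{H}_{\ca{J}}|<n_\ell\}$ has probability at most $\exp(-c\,\s{M}p_{\ca{J}})$ for an absolute constant $c$. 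A union bound over the $|\ca{B}^{(\ell)}|\le\binom{\s{N}}{r}\le\s{N}^r$ surviving $r$-columns contributes a term $\log|\ca{B}^{(\ell)}|\le r\log\s{N}$, so it suffices that $\s{M}p_{\ca{J}} \gtrsim n_\ell + r\log\s{N} + \log\delta^{-1}$ to force the total failure probability below $\delta$. The numbers $2$, the coefficient $1$ on $r\log\s{N}$, and $12$ in the statement are exactly the bookkeeping constants produced when the Chernoff exponent, the union-bound term $r\log\s{N}$, and the confidence term $\log\delta^{-1}$ are balanced.

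Finally, I would substitute the inclusion-probability lower bound from the first step into the concentration requirement $\s{M}p_{\ca{J}} \ge 12\log\delta^{-1}+r\log\s{N}+2n_\ell$ and solve for $d_\ell$; rearranging $\s{M}\big(\tfrac{d_\ell}{\s{N}-d_\ell}\big)^{r} \ge 12\log\delta^{-1}+r\log\s{N}+2n_\ell$ yields precisely the stated condition $\tfrac{d_\ell}{\s{N}-d_\ell}\ge\big(\s{M}^{-1}(12\log\delta^{-1}+r\log\s{N}+2n_\ell)\big)^{1/r}$. The main obstacle is this first step: pushing the raw hypergeometric ratio $\prod_{i=0}^{r-1}\tfrac{d_\ell-i}{\s{N}-i}$ into the exact closed form matching the $\s{N}-d_\ell$ denominator with the correct constant, since the $i$-dependent shifts in numerator and denominator do not cancel and must be controlled using $d_\ell<\s{N}$. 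The remaining ingredients—independence across users turning $|\ca{H}_{\ca{J}}|$ into a Binomial, the Chernoff lower tail, and the union bound over $\binom{\s{N}}{r}$ columns—are standard, and the only care required there is tracking absolute constants so that they collapse to the values written.
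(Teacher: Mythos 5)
Your plan coincides with the paper's own proof of Lemma \ref{lem:sampling}: the same hypergeometric inclusion probability $p_{\ell}=\Pr(\ca{J}\subseteq\ca{T}_u^{(\ell)})$, independence across the $\s{M}$ users making $|\ca{H}_{\ca{J}}|$ Binomial, a multiplicative Chernoff lower tail at deviation $\s{M}p_{\ell}/2$ yielding the constant $12$, a union bound over at most $\s{N}^r$ surviving $r$-columns, and a final rearrangement into the stated threshold on $d_{\ell}/(\s{N}-d_{\ell})$. The step you single out as the main obstacle---converting the falling-factorial ratio into a \emph{lower} bound of the form $\bigl(d_{\ell}/(\s{N}-d_{\ell})\bigr)^r$---is indeed the delicate point, and is in fact where the paper is loosest (its displayed computation only records the upper bound $p_{\ell}\le\bigl(d_{\ell}/(s-d_{\ell})\bigr)^r$ before invoking the condition as if it were a lower bound), so your instinct to track that direction and its constants carefully is well placed.
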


We choose $n_{\ell}=\widetilde{O}(C(r)2^{2\ell})$ where $C(r)=2^{r+1}r^{1+r/2}$. That implies that a sufficient value of $d_{\ell}=O(\s{N}\s{M}^{-1/r}C(r)2^{2\ell})$ for which the high probability event in Lemma \ref{lem:sample_carefully} holds true.
Once, we have computed an estimate $\widetilde{\mu}_{\ca{J}}$ of $\mu_{\ca{J}}$ for all surviving sets of $r$-columns $\ca{J}\in\ca{B}^{(\ell)}$, we can eliminate $r$-columns from $\ca{B}^{(\ell)}$ to compute $\ca{B}^{(\ell+1)}$ based on a high probability confidence width $\epsilon_{\ell}$ (Line \ref{line:eliminate} in Alg. \ref{algo:simple}). More precisely, at the end of the $\ell^{\s{th}}$ phase, we have  
\begin{align}\label{eq:eliminate_new}
    \ca{B}^{(\ell+1)} = \{\ca{J}\in \ca{B}^{(\ell)}\mid \widetilde{\mu}_{\ca{J}} \ge \max_{\ca{J}'\in \ca{B}^{(\ell})}\widetilde{\mu}_{\ca{J}'}-2\epsilon_{\ell}\}
\end{align}
where $\epsilon_{\ell} = 2^{-\ell}$ with high probability, we can show that the set of hott items $\ca{A}$ always survive and furthermore, the sub-optimality gap of surviving $r$-columns decreases exponentially in every phase. Our argument can also be associated with the regret incurred by each user at every round. More precisely, we have for any user $u\in[\s{M}]$:
   \begin{align}
    &\fl{R}_{u\pi_u(1)}-\max_{j\in [r]}\fl{R}_{u\rho_u(t,j)} \\
    &\le 6r^{5/2}\Big(\frac{\s{det}^2(\fl{V}_{\ca{A}})-\s{det}^2(\fl{V}_{\ca{J}})}{\s{det}^2(\fl{V}_{\ca{A}})}\Big).
    \end{align}
Thus we can bound the regret incurred by the users at each phase as we keep eliminating sub-optimal $r$-columns. Since sub-optimal $r$-columns are eliminated in each phase, we can also show the regret to decrease exponentially with phase index. Combining all the arguments, we obtain the regret guarantee provided in Theorem \ref{thm:second}.

\section{Conclusion and Future Works}
In this work we studied online low rank matrix completion under the hott item assumption. Our framework is a significant relaxation of the settings studied in prior works that derive theoretical guarantees. We provide two novel algorithms with theoretical guarantees that exploit the hott items in distinct ways. The bounds in this paper often improve or closely compare with bounds that are derived in much restrictive setups. Though our main focus was on the statistical rate, we seek to improve the computational complexity of our methods in future (see for example Remark \ref{rmk:run}). 

\section*{Impact Statement}
The work is of theoretical nature and we do not see any negative societal consequences.

\bibliography{refs.bib}
\bibliographystyle{plain}

\section{Other Related Work}\label{app:related}

There are several other loosely related works/results that we briefly survey in this section.


The problem of multi-dimensional bandit optimization where the goal is to assume a structure among the items (such as a matrix or tensor with low rank) was introduce in  \cite{katariya2017stochastic, katariya2017bernoulli, trinh2020solving}. In the simplest version of this  problem, in the rank-$1$ setting at each round $t\in [\s{T}]$, an agent selects one row and one column and receives a reward corresponding to the entry of a rank-1 matrix. The regret in this setting is defined with respect to the best \textit{(row,column)} pair, which corresponds to the best item. Since then, this setting has been extended to include rank $r$ \cite{kveton2017stochastic, stojanovic2024spectral} which is most relevant to our setting in terms of techniques. Furthermore, this line of work also extends to rank 1 multi-dimensional tensors \cite{hao2020low}, bilinear bandits \cite{jun2019bilinear, huang2021optimal}, and generalized linear bandits \cite{lu2021low}. However, these guarantees cannot be applied to our problem directly in most cases. Note that our goal is to minimize the regret for all users (rows of $\fl{R}$) jointly. In our problem, it is crucial to identify the entries (columns) of $\fl{R}$ with high rewards for each user (row), in contrast to the multi-dimensional online learning problem, where only the entry with the highest reward needs to be inferred.


A related line of work is the theoretical model for User-based Collaborative Filtering (CF) studied in \cite{Bresler:2014,Bresler:2016,Heckel:2017,Mina2019,huleihel2021learning}. To the best of knowledge, these papers first motivated and theoretically analyzed the collaborative framework with the practically relevant restriction that the same item cannot be recommended more than once to the same user. 
Here, similar to \cite{https://doi.org/10.48550/arxiv.2301.07040}, a latent cluster structure is assumed to exist across users such that users in same cluster have identical expected rewards. 
These models are quite restrictive in a theoretical sense as they provide  guarantees only on a very relaxed notion of regret (termed \textit{pseudo-regret}). 

In the offline setting, several papers have studied the problem of low rank matrix completion with partially observed entries    \cite{negahban2012restricted,chen2019noisy,Montanari12,abbe2020entrywise,jain2013low,jain2017non} and also in the presence of side information such as social graphs or similarity graphs \cite{ahn2018binary,ahn2021fundamental,elmahdy2020matrix,jo2021discrete,zhang2022mc2g}. 
Some of these results can be converted into greedy algorithms for the online problem (as has been done in \cite{jain2022online}) - we can solve the offline problem in a particular number of rounds and subsequently commit to the computed estimate of the reward matrix.

\section{Experiments}

\begin{algorithm*}[t]
\caption{\textsc{PhasedElimSimplified} (A simplified version of Alg. \textsc{PhasedClusterElim})    }\label{algo:phased_elim_simplified}
\begin{algorithmic}[1]
\REQUIRE Number of users $\s{M}$, number of items $\s{N}$. Input parameter $\s{B}$. Number of hott item vectors $d$, Gap Parameter $\Delta$.

 \STATE For $\s{B}$ rounds, for each user $u\in [\s{M}]$, recommend a random item in $[\s{N}]$. Compute an estimate $\widetilde{\fl{P}}$ of $\fl{P}$. 
 \STATE For each user $u\in [\s{M}]$, compute $\ca{T}_u = \{j' \in [\s{N}]\mid \max_{j \in [\s{N}]}\widetilde{\fl{P}}_{uj}-\widetilde{\fl{P}}_{uj'} \le \Delta \}$.
 
 \STATE Run $k$-means algorithm on the set of users $[\s{M}]$ where each user $u$ is assigned the embedding $\widetilde{\fl{P}}[u][\cup_{u\in [\s{M}]}\ca{T}_u]$ i.e. the corresponding rewards estimate restricted to the items in $\cup_{u\in [\s{M}]}\ca{T}_u$.

 \STATE For each cluster of users $\ca{C}_i$ obtained, we compute $\ca{N}^{(i,0)}=\cap_{u\in \ca{C}_i}\ca{T}_u$ (or a robust version where an item needs to survive in $70\%$ of $\ca{T}_u$'s
 for $u\in \ca{C}_i$ to survive in $\ca{N}^{(i,0)}$).

\FOR{$\ell=1,2,\dots,$}

\STATE Set $\s{B}\leftarrow 3\s{B}$ and $\Delta=\Delta/3$
\STATE For $\s{B}$ rounds, for each $i\in[d]$ and for each user $u\in \ca{C}_i$, recommend a random item in $\ca{N}^{(i,\ell-1)}$. 

\STATE For each cluster of users $\ca{C}_i$ , use a standard matrix completion method to compute an estimate $\widetilde{\fl{P}}^{(i,\ell)}\in \bb{R}^{\s{M}\times \s{N}}$.

\STATE For each cluster $\ca{C}_i$ and for each user $u$ in $\ca{C}_i$, compute $\ca{T}_u = \{j' \in \ca{C}_i \mid \max_{j \in \ca{N}^{(i)}}\widetilde{\fl{P}}_{uj}-\widetilde{\fl{P}}_{uj'} \le \Delta \}$
  

\STATE For each $i\in [d]$, compute $\ca{N}^{(i,\ell)} = \ca\fl{P}_{u \in \ca{C}_i} \ca{T}_u$.

\ENDFOR

\end{algorithmic}
\end{algorithm*}

We conduct synthetic experiments to demonstrate the efficacy of our \textsc{PhasedClusterElim} algorithm that runs in phases. 
All experiments have been conducted on Colab Machines with a system RAM of 12GB and hard disk space of 23.7GB.

\paragraph{Simplified Algorithm:}
Having practical considerations, we propose a simplified version of our \textsc{PhasedClusterElim} algorithm (denoted as Algorithm \textsc{PhasedElimSimplified} - ALg. \ref{algo:phased_elim_simplified}). There are two main advantages of Algorithm \ref{algo:phased_elim_simplified} from implementation point of view - 1) the runtime is polynomial in the rank of the reward matrix 2) the number of hyperparameters to tune is much small.
At a high level, in Algorithm \ref{algo:phased_elim_simplified}, we proposed a simplification where the clustering step is decoupled (Step 3) from the exploration/exploitation step (Steps 6-11). In other words, the clustering step is done only once at the beginning. Note that such an algorithm (Algorithm 3) is simplified (easy to understand and implement) and hyperparameter-scarce. On the other hand, because of the decoupling step, it will not share the near optimal regret guarantees of  Algorithm \ref{alg:full}. Nevertheless, Algorithm \ref{algo:phased_elim_simplified} is expected to be significantly better than the standard greedy algorithm because of the initial clustering step. 

 Let us elaborate. In Algorithm \textsc{PhasedElimSimplified}, the overall idea is to first recommend random items to every user in the first few rounds and then compute an initial estimate of the low rank matrix. Based on this initial estimate, we run $k$-means with $k=d$ and cluster the set of users into $d$ clusters. For each cluster of users, we find a small active set of items that are good for all users in the same cluster. The main workhorse is that users in the same cluster have the same hott item as the best item.
Next, Algorithm \textsc{PhasedElimSimplified} runs in phases of exponentially increasing length where for each cluster of users, we again recommend random items (from the active set) to every user in the same cluster. Next, we run low rank matrix completion to compute an estimate of the sub-matrix restricted to each cluster of users and its corresponding good items. Based on this estimate, we prune the set of good items further. The goal is to converge to the best item for every cluster of users.

\noindent \textbf{Baselines for Comparison:} We compare with two distinct baselines 1) Alternating Minimization (AM) - This algorithm was proposed for online low rank matrix completion  in \cite{dadkhahi2018alternating} based on an alternating minimization recipe. Here, assuming the reward matrix to be of low rank i.e.  $\fl{P}=\fl{U}\fl{V}^{\s{T}}$, estimates of $\fl{U},\fl{V}$ are improved gradually.
However, it is important to note that no theoretical guarantees were provided in \cite{dadkhahi2018alternating}. 2) Explore Then Commit (ETC) Algorithm - This greedy algorithm was proposed in \cite{jain2022online}.
In this algorithm, we recommend random items to every user for an initial few rounds (say $m$ - exploration rounds) and compute an estimate of the expected reward matrix. Based on the low rank estimate, we recommend the estimated best item for every user in the remaining rounds.

We consider an instance with $\s{M}=200$ users, $\s{N}=200$, items, rank $r=4$ across $\s{T}=300$ rounds. Here, we consider the expected reward matrix $\fl{P}=\fl{U}\fl{V}^{\s{T}}$ constructed in the following way: 
1) For every $i\in [\s{M}]$, in the $i^{\s{th}}$ row of $\fl{U}$, there is a \texttt{1} in the $(i\%\s{C})^{\s{th}}$ position and \texttt{0} elsewhere in the row. 2) The first $\s{N}-r$ columns of $\fl{V}$ are obtained by generating each entry from $\ca{N}(0,1)$ independently and subsequently, the negative entries are trimmed to zero. Suppose the maximum positive entry in the first $\s{N}-r$ columns of $\fl{V}$ is $\alpha$. In that case, the final $r$ columns are an identity matrix multiplied by a factor of $2\alpha$. This ensures the hott item property - every column in $\fl{V}$ can be written as a convex combination of the final $r$ columns. Now, with this reward matrix, at each round, we recommend one item to every user such that a noisy reward is obtained (eq. \eqref{eq:obs}) with gaussian noise having variance $\sigma^2=0.25$. 

We run Algorithm \textsc{PhasedElimSimplified} (Alg. \ref{algo:phased_elim_simplified} - the simplified version of Alg. \textsc{PhasedClusterELim}) in this set-up (with an initial exploration period of $25$) along with the AM algorithm and the ETC algorithm.
The number of exploration rounds in ETC is a hyper-parameter and we run the ETC algorithm with number of exploration rounds $m=25,50,75$.
For the AM algorithm, we use the same hyper-parameters as suggested in \cite{dadkhahi2018alternating} for the setting when $\fl{U},\fl{V}$ have entries sampled independently from $\ca{N}(0,1)$. 

\textbf{Results and Insights:} All algorithms have been run $5$ times and the average regret has been reported across the $5$ runs in Figures \ref{fig:average_regret_gap1} and \ref{fig:time_comparison}. Note that from Figure \ref{fig:average_regret_gap1} our algorithm incurs a much lower cumulative regret compared to the Alternating Minimization algorithm and the ETC (greedy) algorithm with different exploration periods (small and large). The picture is clearer in \ref{fig:time_comparison} - both the ETC algorithms and our algorithm have an initial exploratory period but our algorithm is able to make use of the exploration period significantly better. Note that the ETC algorithm with exploration period=25 rounds has a higher regret per round than our algorithm with a similar exploration period=25 rounds. ETC algorithms with higher exploration period have smaller exploitation regret per round but their exploration cost is larger. The AM algorithm also has a similarly large regret.

\begin{figure*}[!t]
  \begin{subfigure}[t]{0.47\textwidth}
    \centering 
    \includegraphics[scale = 0.34]{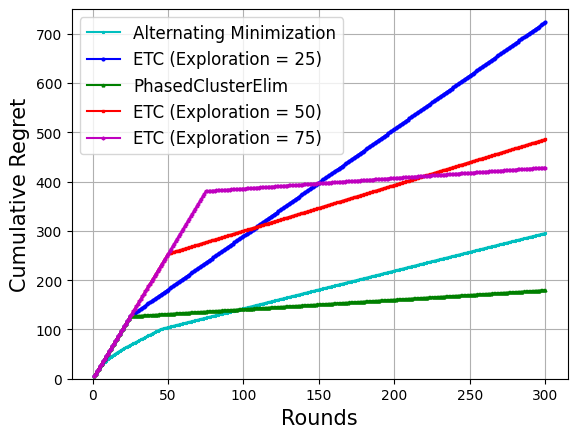}\vspace*{-5pt}
    \caption{Regret accrued by Alg. \ref{algo:phased_elim_simplified}, Explore-Then-Commit algorithm \cite{jain2022online} for exploration periods ($25,50,75$), Alternating Minimization (AM) algorithm \cite{dadkhahi2018alternating} when $\s{T}=300$. The cumulative regret is plotted with number of rounds.}
 ~\label{fig:average_regret_gap1}
  \end{subfigure}
  \hfill
\begin{subfigure}[t]{0.47 \textwidth}
\centering
   \includegraphics[scale = 0.34]{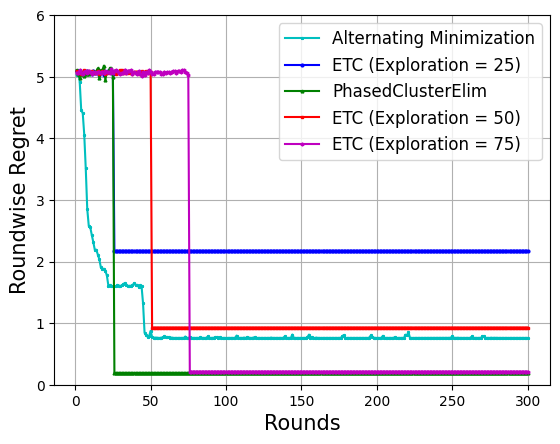}\vspace*{-5pt}
   \caption{\small Comparison of the regret incurred in each round ($\s{M}^{-1}\sum_{u \in [\s{M}]}\fl{P}_{u\pi_u(1)}- \frac{1}{\s{M}}\sum_{u\in[\s{M}]}\fl{P}_{u\rho_u(t)}^{(t)}$) by Algorithm \ref{algo:phased_elim_simplified} and compared with Explore-Then-Commit algorithm \cite{jain2022online} for exploration periods ($25,50,75$), Alternating Minimization (AM) algorithm \cite{dadkhahi2018alternating}.}
       ~\label{fig:time_comparison}
 \end{subfigure}%
 \caption{Comparison of regret incurred by Algorithm~\ref{algo:phased_elim_simplified} (a simplified version of our proposed phased elimination algorithm namely Algorithm \textsc{PhasedClusterElim}) with baselines such as Explore-Then-Commit (Greedy) algorithm and Alternating Minimization (AM) algorithm. Clearly, Algorithm \ref{algo:phased_elim_simplified} outperforms the other baselines significantly.}

\end{figure*}


\begin{algorithm}[!t]
\caption{\texttt{CheckIndependentSet}: Algorithm to check existence of an independent set of size $d$}
\label{alg:is}
\textbf{Input}: A graph G
\begin{algorithmic}[1] 
\FOR{any combination of $d$ vertices $(u_1,\ldots,u_d)$ from the graph}
    \IF{there is no edge between any pair $u_i$ and $u_j$}
        \STATE \textbf{return} (True , $(u_1,\ldots,u_d)$)
    \ENDIF
\ENDFOR
\STATE  \textbf{return} False

\end{algorithmic}
\end{algorithm}

\begin{algorithm}[!t]
\caption{\texttt{ExpanditemSets}: Algorithm to update the items sets of users that are connected to multiple clusters}
\label{alg:clique}
\textbf{Input}: A graph $G(V,E)$; Candidate item sets $\cT^\ell_u$ for every user; Seed users $u_1,\ldots,u_d$, suboptimality level $c_{\ell-1}$, estimated reward matrix $\tilde R^{\ell}$
\begin{algorithmic}[1] 
\FOR{each user $u$ that is connected to more than one user in $u_{1:d}$}
    \STATE Let $\ca{C} \leftarrow \{ u_i : \cT^\ell_{u_i} \cap \cT_u^\ell \neq \Phi \}$

    \textcolor{blue}{\texttt{//Expanding the item set for users in $D$ slightly}}
    
    \STATE Set $\cT_u^\ell \leftarrow \{x : \tilde R_{u,\tilde \pi_u(1)}^\ell - \tilde R_{u,x}^\ell \le 12k_{\ell-1} \text{ and } x \in \cT^\ell_{C_i} \text{ for some } i \in \ca{C} \}$
\ENDFOR

\STATE \textbf{return} {the updated item sets}

\end{algorithmic}
\end{algorithm}

\begin{algorithm}[!t]
\caption{\textsc{Estimate} from \cite{jain2022online}   \label{algo:estimate}}
\begin{algorithmic}[1]
\REQUIRE Set of users $\ca{U}\subseteq[\s{M}]$, set of items $\ca{V}\subseteq[\s{N}]$, sub-optimality level $\eta$, sampling probability $p$,  regularization parameter $\lambda$, rounds control parameter $s$, candidate item sets of users $\cT_{u}$ for all $u \in [M]$. Index of round $t$ is relative to the first round when the algorithm is invoked; hence $t=1,2,\dots,m$.

\STATE For each tuple of indices $(i,j)\in \ca{U}\times \ca{V}$, independently set $\delta_{ij}=1$ with probability $p$ and $\delta_{ij}=0$ with probability $1-p$.

\STATE  Denote $\Omega=\{(i,j)\in \ca{U}\times \ca{V} \mid \delta_{ij}=1\}$ and
 $b=\max_{i \in [ \ca{U}]}\mid |j \in [ \ca{V}]\mid (i,j) \in \Omega|$ to be the maximum number of index tuples in a particular row.
 \STATE Set $b = |\ca{V}|$ and $m = bs$.
\FOR{$\ell=1,2,\dots,m/b$} 
\STATE For all $(i,j)\in \Omega$, set $\s{Mask}_{ij}=0$.
\FOR{$\ell'=1,2,\dots,b$}
\FOR{each user $u\in \ca{U}$ in round $t=(\ell-1)b+\ell'$}
\STATE Recommend an item $\rho_u(t)$ in $\{j \in \ca{V}\mid (u,j)\in \Omega, \s{Mask}_{uj}=0\}$ and set $\s{Mask}_{u\rho_u(t)}=1$. 
If not possible then recommend any item $\rho_u(t)$ in $\ca{V}$ s.t. $(u,\rho_u(t))\not\in\Omega$.
Observe $\fl{R}^{(t)}_{u\rho_u(t)}$.
\ENDFOR
\FOR{each user $u\notin \ca{U}$ in round $t=(\ell-1)b+\ell'$}
\STATE Recommend an item randomly from its candidate item set $\cT_u$ and observe its noisy reward.
\ENDFOR
\ENDFOR
\ENDFOR

\STATE For each $(u,j)\in \Omega$, compute $\fl{Z}_{uj}$ to be average of $\lfloor m/b \rfloor$ observations corresponding to user $u$ being recommended item $j$ i.e. $\fl{Z}_{uj} = \text{avg}\{\fl{R}^{(t)}_{u\rho_u(t)} \text{ for }t\in [m]\mid \rho_u(t)=j\}$. 
Discard all other observations corresponding to indices not in $\Omega$.

\STATE Without loss of generality, assume $|\ca{U}| \le |\ca{V}|$. For each $i\in \ca{V}$, independently set $\delta_i$ to be a value in the set $[\lceil|\ca{V}|/|\ca{U}|\rceil]$ uniformly at random. Partition indices in $\ca{V}$ into $\ca{V}^{(1)},\ca{V}^{(2)},\dots,\ca{V}^{(k)}$ where $k=\lceil|\ca{V}|/|\ca{U}|\rceil$ and $\ca{V}^{(q)}=\{i\in \ca{V}\mid \delta_i =q\}$ for each $q\in [k]$. Set $\Omega^{(q)}\leftarrow \Omega \cap (\ca{U}\times \ca{V}^{(q)})$ for all $q\in [k]$. \#\textit{If $|\ca{U}| \ge |\ca{V}|$, we partition the indices in $\ca{U}$}. 

\FOR{$q\in [k]$}
\STATE Solve convex program 
\vspace*{-15pt}
\begin{align}\label{eq:convex}
    \min_{\fl{Q}^{(q)}\in \bb{R}^{|\ca{U}|\times|\ca{V}^{(q)}|}} \frac{1}{2}\sum_{(i,j)\in \Omega^{(q)}}\Big(\fl{Q}^{(q)}_{i\pi(j)}-\fl{Z}_{ij}\Big)^2+\lambda\|\fl{Q}^{(q)}\|_{\star},
\vspace*{-15pt}    
\end{align}
where $\|\fl{Q}^{(q)}\|_{\star}$ denotes nuclear norm of matrix $\fl{Q}^{(q)}$ and $\pi(j)$ is index of $j$ in set $\ca{V}^{(q)}$. 
\ENDFOR
\STATE \textbf{return}  $\widetilde{\fl{Q}}\in \bb{R}^{|\ca{U}|\times |\ca{V}|}$ s.t. $\widetilde{\fl{Q}}_{\ca{U},\ca{V}^{(q)}}=\fl{Q}^{(q)}$ for all $q\in [k]$ and for every  $(i,j)\not \in \ca{U}\times \ca{V}$, $\widetilde{\fl{Q}}_{ij}=0$.
\end{algorithmic}
\end{algorithm}

\section{Missing Algorithmic Modules and Proofs} \label{app:proofs}
The missing algorithmic modules from Section \ref{sec:difficult} are presented in Algorithms \ref{alg:is}, \ref{alg:clique} and \ref{algo:estimate}.

\subsection{Offline Low Rank Matrix Completion Oracle (MCO)}
We provide an MCO in Algorithm \ref{algo:estimate} which is essentially the same as in \cite{jain2022online}. We have the following matrix completion guarantee for Algorithm \ref{algo:estimate} which is a direct consequence of Remark 6 from \cite{jain2022online}. 

\begin{prop}[Matrix completion guarantee] \label{prop:mat1}
Let rank $r=O(1)$ matrix $\fl{R}\in \mathbb{R}^{\s{M}\times \s{N}}$ with SVD $\fl{R}=\bar{\fl{U}}\f{\Sigma}\bar{\fl{V}}^{\s{T}}$ satisfy $\left|\left|\bar{\fl{U}}\right|\right|_{2,\infty}\le \sqrt{\mu r /\ s{M}}$, $\left|\left|\bar{\fl{V}}\right|\right|_{2,\infty}\le \sqrt{\mu r /\ s{N}}$ and condition number $\kappa=O(1)$.
Let $d_1 = \max(\s{M},\s{N})$ and $d_2 =\min(\s{M},\s{N})$.  Set $p=C\mu^2 d_2^{-1}\log^3 d_2$, $s=\Big\lceil \Big(\frac{c\sigma r \sqrt{\mu}}{k_\ell \log d_2}\Big)^2 \Big\rceil$ and $\lambda = C_{\lambda} \sigma \sqrt{d_2p}$ for a suitable constants $c,C,C_\lambda > 0$. Let $\hat {\fl{R}} = \texttt{ESTIMATE}(\s{M},\s{N},k_\ell,p,\lambda,s)$. Then with probability at-least $1-\delta$, we have that
\begin{align}
    \| \hat {\fl{R}} - {\fl{R}} \|_\infty \le k_\ell
\end{align}

Further the total number of rounds Algorithm \ref{algo:estimate} runs is bounded by $\tilde O \left( \max(1,\s{N}/\s{M})/ k^2_{\ell} \right)$, where $\tilde O$ hides multiplicative factors of $\log(1/\delta)$. 
\end{prop}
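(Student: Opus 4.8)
The plan is to reduce the claim to the standard entry-wise guarantee for nuclear-norm-regularized matrix completion applied blockwise, exactly as packaged in Remark 6 of \cite{jain2022online}. Algorithm \ref{algo:estimate} does three things that must be tracked. First, it retains a Bernoulli$(p)$ subsample $\Omega$ of the entries. Second, for each retained entry it collects and averages $s=\lfloor m/b\rfloor$ i.i.d.\ noisy observations, which replaces the per-entry noise by a zero-mean sub-Gaussian variable of variance proxy at most $\sigma^2/s$. Third, it partitions the larger dimension into $k=\lceil d_1/d_2\rceil$ near-square blocks $\ca{V}^{(1)},\dots,\ca{V}^{(k)}$ (each of size $\approx d_2=\min(\s{M},\s{N})$) and solves \eqref{eq:convex} separately on each. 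Since $\widehat{\fl{R}}$ is the concatenation of the block solutions, $\|\widehat{\fl{R}}-\fl{R}\|_\infty=\max_q\|\widehat{\fl{Q}}^{(q)}-\fl{R}_{\ca{U},\ca{V}^{(q)}}\|_\infty$, so it suffices to control each block and take a union bound.

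First I would verify that every block inherits the structural hypotheses needed to invoke the off-the-shelf bound. Because the rows of each block are indexed by the full set $\ca{U}=[\s{M}]$, the left-incoherence bound $\|\bar{\fl{U}}\|_{2,\infty}\le\sqrt{\mu r/\s{M}}$ is preserved verbatim. For the columns, $\ca{V}^{(q)}$ is a uniformly random subset of size $\approx d_2\gg r$, so a matrix Chernoff/Bernstein argument on the Gram matrix $\bar{\fl{V}}_{\ca{V}^{(q)}}^{\s{T}}\bar{\fl{V}}_{\ca{V}^{(q)}}$ shows that, with high probability, $\fl{R}_{\ca{U},\ca{V}^{(q)}}$ remains rank $r$, $O(\mu)$-incoherent, and has condition number $O(\kappa)=O(1)$. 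This is the one step that genuinely consumes the incoherence and conditioning assumptions of the statement, and it is where I expect the bulk of the work to sit: the concentration must hold simultaneously well enough that the subsampled singular values do not collapse and the restricted incoherence degrades only by a constant factor.

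Next I would invoke the square (up to constants) entry-wise completion guarantee on each block: for a $d_2\times d_2$, rank-$r$, $O(\mu)$-incoherent, $O(1)$-conditioned matrix observed at rate $p=C\mu^2\log^3 d_2/d_2$ with effective noise variance $\sigma^2/s$, the estimator \eqref{eq:convex} with $\lambda=C_\lambda\sigma\sqrt{d_2 p}$ satisfies
\begin{align}
\|\widehat{\fl{Q}}^{(q)}-\fl{R}_{\ca{U},\ca{V}^{(q)}}\|_\infty \;\lesssim\; \frac{\sigma}{\sqrt{s}}\,\sqrt{\frac{\log d_2}{p\,d_2}}\cdot g(\mu,r,\kappa),
\end{align}
for a polynomial factor $g$ in the incoherence parameter, rank, and condition number. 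Substituting $p\,d_2=C\mu^2\log^3 d_2$ collapses the square-root to $\widetilde{O}(1/\log d_2)$, and the prescribed choice $s=\lceil(c\sigma r\sqrt{\mu}/(k_\ell\log d_2))^2\rceil$ is precisely calibrated (its $\sqrt{\mu}$ dependence being what cancels $g$) so that the right-hand side is at most $k_\ell$ for a suitable constant $c$. This yields the per-block entry-wise bound.

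Finally I would assemble the pieces and count rounds. Running the per-block guarantee at failure probability $\delta/k$ and union bounding over the $k=\widetilde{O}(d_1/d_2)$ blocks gives $\|\widehat{\fl{R}}-\fl{R}\|_\infty\le k_\ell$ with probability at least $1-\delta$, the $\log k$ overhead being absorbed into $c$ and into the $\widetilde{O}(\cdot)$. For the round count, each user in $\ca{U}$ must accumulate $\widetilde{O}(p\,\s{N})$ distinct observed entries, each sampled $s$ times, and since each round delivers exactly one recommendation per user, the total number of rounds is $m=b\,s=\widetilde{O}(p\,\s{N}\,s)$. Plugging in $p=\widetilde{O}(1/d_2)$ and $s=\widetilde{O}(\sigma^2 r^2\mu/k_\ell^2)$ with $d_2=\min(\s{M},\s{N})$ gives $m=\widetilde{O}(\s{N}/(d_2 k_\ell^2))=\widetilde{O}(\max(1,\s{N}/\s{M})/k_\ell^2)$, matching the claim. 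Modulo the incoherence-preservation step flagged above, the remainder is a direct substitution into the cited entry-wise bound of \cite{jain2022online}.
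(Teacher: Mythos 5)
Your reconstruction is correct and follows essentially the same route as the paper: the paper offers no proof of Proposition \ref{prop:mat1} beyond declaring it a direct consequence of Remark 6 of \cite{jain2022online}, and the steps you spell out (Bernoulli-$p$ subsampling, averaging $s$ repeats to shrink the effective noise to $\sigma^2/s$, blockwise nuclear-norm completion on near-square $d_2\times d_2$ blocks, a union bound over the $\lceil d_1/d_2\rceil$ blocks, and the round count $m=bs=\widetilde{O}(p\s{N}s)=\widetilde{O}(\max(1,\s{N}/\s{M})/k_\ell^2)$) are precisely what that citation packages. The one step you rightly flag as carrying the real work --- that the randomly chosen column blocks $\ca{V}^{(q)}$ inherit rank, incoherence, and $O(1)$ conditioning --- is likewise not proved at this point in the paper; it is deferred to the subset strong smoothness conditions of Assumption \ref{assum:matrix3} and the discussion in Appendix \ref{app:assum}, so your matrix-Chernoff sketch for that step is a reasonable stand-in for what the paper assumes rather than derives.
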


Note that in each round in Algorithm \ref{alg:estimate}, one noisy observation is obtained from each row of the low rank matrix $\bar{\fl{R}}$.

\begin{rmk}
    Note that the matrix completion guarantee in Proposition \ref{prop:mat1} implies that a reasonable estimate of the reward matrix $\fl{R}$ can be obtained if we have $\widetilde{O}(\s{NM}sp)$ randomly sampled observations. Intuitively, every entry of $\fl{R}$ is used with probability $p$ - if used, then $s$ noisy observations are made corresponding to that entry and the average of those observations is taken to reduce the variance. The bottom-line is that we do not need to see all the entries of the reward matrix - usually $p$ is set to be of the order of $1/\min(\s{M},\s{N})$ and therefore only a small number of entries are actually sufficient for us to compute an estimate of the reward matrix.
\end{rmk}

In our algorithm, it will be often necessary to estimate sub-matrices of the reward matrix. To do so, the sub-matrix like the entire reward matrix $\fl{R}$ needs to satisfy certain incoherence and condition number guarantees - they need to be small. In this regard, we need to ensure that conditions in Proposition \ref{prop:mat1} are satisfied for the different sub-matrices of the low rank reward matrix that we have at hand. Assumption \ref{assum:matrix2} (described below) ensures that the incoherence and condition number of all sub-matrices of a reasonable size are bounded by at most a constant multiplicative factor of the incoherence and condition number of the original reward matrix $\fl{R}$ (see \cite{https://doi.org/10.48550/arxiv.2301.07040} for example).  

\begin{assumption}[\textbf{A3} Assumptions on reward matrix $\fl{R}$]\label{assum:matrix3}
We assume that $\fl{R}$ with SVD decomposition $\fl{R}=\widetilde{\fl{U}}\f{\Sigma}\widetilde{\fl{V}}^{\s{T}}$ satisfies the following  properties 1) (Condition Number) $\fl{R}$ has rank $r$ and has non zero singular values $\lambda_1>\lambda_2 > \dots > \lambda_{r}$ with $\lambda_1/\lambda_{r}=O(1)$ 2) ($\mu$-incoherence)  $\lr{\widetilde{\fl{U}}}_{2,\infty}\le \sqrt{\mu r/\s{N}}$ and $\lr{\widetilde{\fl{V}}}_{2,\infty}\le \sqrt{\mu r/\s{M}}$ for some $\mu=O(1)$. 3) (Subset Strong Smoothness (a)) For some constant $\beta>0$ and for any subset of indices $\ca{S}\subseteq [\s{N}], \ca{S}= \ca{T}^{(j)}$ (corresponding to some cluster of users), we must have $\fl{x}^{\s{T}}\widetilde{\fl{U}}_{\ca{S}}^{\s{T}}\widetilde{\fl{U}}_{\ca{S}}\fl{x} \ge \beta $ for all unit norm vectors $\fl{x}\in \bb{R}^{r}$.
4) (Subset Strong Smoothness (b)) For some $\alpha$ satisfying $\alpha \log \s{M}=\Omega(1)$, $\gamma =\widetilde{O}(1)$, for any subset of indices $\ca{S}\subseteq [\s{M}], |\ca{S}| \ge \gamma r$, we must have $\fl{x}^{\s{T}}\widetilde{\fl{V}}_{ \ca{S}}^{\s{T}}\widetilde{\fl{V}}_{ \ca{S}}\fl{x} \ge \alpha \left|\ca{S}\right|/\s{M}$ for all $\fl{x}\in \bb{R}^{r},\lr{\fl{x}}_2=1$. 
\end{assumption}

The subset strong smoothness assumptions on $\fl{U},\fl{V}$ suffice to prove that any sub-matrix of $\fl{R}$ of a reasonable size also satisfies the low condition number and incoherence properties \cite{https://doi.org/10.48550/arxiv.2301.07040}. Furthermore, similar smoothness assumptions (Statistical RIP property) were made in \cite{sen2017contextual} where a greedy algorithm was provided for optimizing the regret $\s{Reg}$ based on non-negative matrix factorization. Such an implication ensures that we can invoke Proposition \ref{prop:mat1} for the sub-matrices of the low rank reward matrix $\fl{R}$ with the same incoherence parameter $\mu$ and condition number $\kappa=O(1)$.
Thus, we can assume that the reward matrix satisfies Assumptions \ref{assum:hott2}, \ref{assum:simplicity} and \ref{assum:matrix3}. A detailed discussion on the feasibility of such assumptions jointly is deferred to Section \ref{app:assum}.


\subsection{Proofs for generalized regret setting (Setting 1 from Section \ref{sec:problem})}

Throughout the proofs, we suppose that the reward matrix satisfies the assumptions detailed in Section \ref{sec:problem} for Setting 1. However, we generalise Assumption \ref{assum:hott} as below so that we allow $d$ number of hott-topics where $d \ge r$. All the results of Setting 1 can be recovered by setting $d=r$, where recall that $r$ is the rank of the reward matrix.

\begin{assumption}[\textbf{B1} Hott items and vectors]\label{assum:hott2}

We assume that there exists a set of $d$ unknown distinct ordered indices $\ca{A}\subseteq [\s{\s{N}}], \left|\ca{A}\right|=d$ such that all the item embedding vectors $\{\fl{V}_i\}_{i\in [\s{\s{N}}]}$ lie within the convex hull of $\{\fl{V}_i\}_{i\in \ca{A}}\cup\{\fl{0}\}$.
 We will call the items corresponding to indices in $\ca{A}$ hott items and the rows in $\fl{V}_{\ca{A}}$ to be hott vectors. Further, we suppose that $r = O(1)$.
\end{assumption}

First we proceed to bound the regret of stage 1 in Algorithm \ref{alg:full}

\begin{lemma} \label{lem:stage1-best-item}
Let the phase $\ell$ be such that the Algorithm \ref{alg:full} is operating in Stage 1. Then with probability at-least $1-\delta$, we have that $\pi_u(1) \in \cT_u^{\ell+1}$ for all users $u$.
\end{lemma}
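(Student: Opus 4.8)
The plan is to argue by induction on the Stage 1 phase index $\ell$, using the offline matrix completion guarantee (Proposition \ref{prop:mat1}) as the main workhorse. The base case is immediate: at initialization $\cT^0_u = [\s{N}]$ for every user, so $\pi_u(1) \in \cT^0_u$ trivially. For the inductive step, I would condition on the good event that the estimate $\tilde{\fl{R}}^{\ell+1}$ returned by the call $\mathcal{O}([\s{M}],[\s{N}],k_\ell)$ in Line 3 satisfies $\|\tilde{\fl{R}}^{\ell+1}-\fl{R}\|_\infty \le k_\ell$; Proposition \ref{prop:mat1} guarantees this with probability at least $1-\delta'$ for a per-call failure probability $\delta'$ to be fixed below, and crucially controls \emph{all} entries uniformly, so every user is handled at once.

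The heart of the argument is a short two-sided estimate. Assume the inductive hypothesis $\pi_u(1)\in\cT^\ell_u$. Since $\tilde\pi_u(1)$ is the maximizer of the estimated reward $\tilde{\fl{R}}^{\ell+1}_{u,\cdot}$ while $\pi_u(1)$ maximizes the true reward $\fl{R}_{u,\cdot}$, the entry-wise bound gives
\begin{align*}
\tilde{\fl{R}}^{\ell+1}_{u,\tilde\pi_u(1)} &\le \fl{R}_{u,\tilde\pi_u(1)} + k_\ell \le \fl{R}_{u,\pi_u(1)} + k_\ell, \\
\tilde{\fl{R}}^{\ell+1}_{u,\pi_u(1)} &\ge \fl{R}_{u,\pi_u(1)} - k_\ell.
\end{align*}
Subtracting the two lines yields $\tilde{\fl{R}}^{\ell+1}_{u,\tilde\pi_u(1)} - \tilde{\fl{R}}^{\ell+1}_{u,\pi_u(1)} \le 2k_\ell \le 3k_\ell = c_\ell$. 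Hence $\pi_u(1)$ passes the threshold test in Line 4, and combined with $\pi_u(1)\in\cT^\ell_u$ this gives $\pi_u(1)\in\cT^{\ell+1}_u$. This step is robust to whether $\tilde\pi_u(1)$ is the argmax over all of $[\s{N}]$ or only over $\cT^\ell_u$, since in either case $\tilde{\fl{R}}^{\ell+1}_{u,\tilde\pi_u(1)} \le \max_x \fl{R}_{u,x}+k_\ell = \fl{R}_{u,\pi_u(1)}+k_\ell$.

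Finally, I would collect the per-phase good events by a union bound. Since Stage 1 runs for at most $O(\log(1/\Delta_{\s{hott}}))$ phases (as established in the overview of Section \ref{sec:difficult}), setting the per-call failure probability to $\delta' = \delta / O(\log(1/\Delta_{\s{hott}}))$ ensures that all matrix completion estimates up through phase $\ell$ simultaneously meet their entry-wise tolerances $k_\ell$ with probability at least $1-\delta$; on this event the induction goes through for every user at once. The only real bookkeeping subtlety --- and the closest thing to an obstacle --- is that the candidate-set update intersects with $\cT^\ell_u$, so the inductive hypothesis $\pi_u(1)\in\cT^\ell_u$ must be maintained across \emph{all} preceding Stage 1 phases rather than verified in isolation; this is exactly what forces the union bound over phases. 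The core survival estimate itself is an immediate consequence of the $\ell_\infty$ matrix completion bound.
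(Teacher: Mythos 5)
Your proof is correct and follows essentially the same route as the paper: the central step is the identical three-term decomposition of $\tilde{\fl{R}}^{\ell+1}_{u,\tilde\pi_u(1)} - \tilde{\fl{R}}^{\ell+1}_{u,\pi_u(1)}$, bounding two terms by $k_\ell$ via Proposition \ref{prop:mat1} and the third by zero since $\pi_u(1)$ is the true maximizer, yielding $2k_\ell \le c_\ell$. You are in fact slightly more careful than the paper, which leaves implicit both the induction needed because the update intersects with $\cT^\ell_u$ and the union bound over Stage 1 phases.
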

\begin{proof}
Consider a user $u$. We have
\begin{align}
    \tilde {\fl{R}}^{\ell+1}_{u,\tilde \pi_u(1)} - \tilde {\fl{R}}^{\ell+1}_{u,\pi_u(1)}
    &= \tilde {\fl{R}}^{\ell+1}_{u,\tilde \pi_u(1)} - {\fl{R}}_{u,\tilde \pi_u(1)} + {\fl{R}}_{u,\pi_u(1)} - \tilde {\fl{R}}^{\ell+1}_{u,\pi_u(1)} + {\fl{R}}_{u,\tilde \pi_u(1)} - {\fl{R}}_{u,\pi_u(1)}\\
    &\le 2k_\ell,
\end{align}
with probability at-least $1-\delta$ where in the last line we used Proposition \ref{prop:mat1} and the fact that $ {\fl{R}}_{u,\tilde \pi_u(1)} - {\fl{R}}_{u,\pi_u(1)} \le 0$.

\end{proof}
Lemma \ref{lem:stage1-best-item} implies that with high probability, in any epoch that belongs to Stage 1, the best item for each user survives in its candidate set.

The following terminology will be central to the arguments in the proofs.

\textbf{Terminology:} We use the terms phase and epoch interchangeably. For the rest of the proof, we fix the quantities. Let $\ell_0$ be the value of epoch $\ell$ in Algorithm \ref{alg:full} when the control reaches line 7. For epochs $\ell < \ell_0$, we fix $k_\ell: = 2^{-\ell}$ for $\ell \le \ell_0$; $k_{\ell_0} =  k_{\ell_0-1}/10$; $k_\ell = k_{\ell-1}/2$ for $\ell \ge \ell_0+1$. Let $c_\ell = 3 k_\ell$ used in Algorithm \ref{alg:full}. Stage 1 runs from epochs $0$ to $\ell_0$; All epochs greater than or equal to $\ell_0+1$ belongs to Stage 2. For notational simplicity we take the hott-topic set $\ca{A} = [d]$.

The following Proposition is a direction consequence of Definition \ref{defn:weird_gap}.

\begin{prop}\label{ass:opinion}
There exists $d$ opinionated users  $v_{1:d}$ such that for any $v_i$, any hott-topic $x \neq \pi_{v_i}(1)$ is well separated:

\begin{align}
    {\fl{R}}_{v_i,\pi_{v_i}(1) } - {\fl{R}}_{v_i,x} \ge \Delta_{\text{hott}}.
\end{align}

We call the users $v_{1:d}$ as opinionated users. We remark that both $v_{1:d}$ and $\Delta$ need not be known ahead of time.
\end{prop}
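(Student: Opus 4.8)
The plan is to read off the claim almost directly from Definition \ref{defn:weird_gap}, since the statement is essentially a repackaging of the hott item gap. First I would fix the partition of users into clusters. By Lemma \ref{lem:prelim_1}, every user's best item $\pi_u(1)$ lies in the hott set $\ca{A}$, so the users split into clusters $\ca{T}^{(i)} \triangleq \{u \in [\s{M}] : \pi_u(1) = \ca{A}_i\}$ for $i \in [d]$ (in the generalized setting of Assumption \ref{assum:hott2} there are $d$ hott items, so $i$ ranges over $[d]$ rather than $[r]$). The minimum cluster size bound $\min_i |\ca{T}^{(i)}| \ge \kappa \s{M} d^{-1}$ with $\kappa > 0$ guarantees every cluster is non-empty, so the most opinionated user $v_i \triangleq u^\star(i) = \s{argmax}_{u \in \ca{T}^{(i)}} \Delta_u$ from Definition \ref{defn:weird_gap} is well-defined for each $i$.

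Next I would set the opinionated users to be $v_1, \ldots, v_d$ and verify they are distinct. Since $v_i \in \ca{T}^{(i)}$, by construction $\pi_{v_i}(1) = \ca{A}_i$, and because the hott indices $\ca{A}_1, \ldots, \ca{A}_d$ are distinct, the best items $\pi_{v_i}(1)$ are pairwise distinct; hence the users $v_{1:d}$ are pairwise distinct, giving the required collection of $d$ opinionated users.

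Finally I would establish the separation inequality. By the definition of the hott item gap,
\begin{align}
\Delta_{v_i} = \fl{R}_{v_i \pi_{v_i}(1)} - \max_{x \in \ca{A}\setminus \pi_{v_i}(1)} \fl{R}_{v_i x},
\end{align}
and since $v_i = u^\star(i)$ achieves the largest gap in its cluster, we have $\Delta_{v_i} = \Delta_{u^\star(i)} \ge \min_{j} \Delta_{u^\star(j)} = \Delta_{\s{hott}}$. For any hott-topic $x \ne \pi_{v_i}(1)$, the bound $\fl{R}_{v_i x} \le \max_{x' \in \ca{A}\setminus \pi_{v_i}(1)} \fl{R}_{v_i x'}$ then yields
\begin{align}
\fl{R}_{v_i \pi_{v_i}(1)} - \fl{R}_{v_i x} \ge \Delta_{v_i} \ge \Delta_{\s{hott}},
\end{align}
which is exactly the claimed inequality.

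Since every step is a direct consequence of the definitions, there is no substantive analytic obstacle; the only point requiring care is the well-definedness of the construction, namely that all $d$ clusters are non-empty (guaranteed by $\kappa > 0$) so that each $\s{argmax}$ exists, and that the resulting users are genuinely distinct (guaranteed by distinctness of the hott indices). I would state these two checks explicitly so that the existence of the $d$ opinionated users is fully justified before invoking the gap inequality.
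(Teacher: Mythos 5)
Your proposal is correct and matches the paper, which offers no separate argument and simply declares the proposition a direct consequence of Definition \ref{defn:weird_gap}; your choice $v_i = u^\star(i)$ together with the chain $\fl{R}_{v_i\pi_{v_i}(1)} - \fl{R}_{v_i x} \ge \Delta_{v_i} \ge \min_j \Delta_{u^\star(j)} = \Delta_{\s{hott}}$ is exactly the intended reasoning. Your explicit checks of cluster non-emptiness (via $\kappa>0$, cf.\ Proposition \ref{ass:users-lb}) and distinctness of the $v_i$ are details the paper leaves implicit but are correctly handled.
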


\begin{lemma} \label{lem:seperation}
Consider an item $x \in [\s{M}]$, a user $u$ and epoch $\ell \le \ell_0$. Suppose $x \notin \cT_u^{\ell+1}$. Then with probability at-least $1-\delta$, we have that
\begin{align}
{\fl{R}}_{u,\pi_u(1)} - {\fl{R}}_{u,x}
&>  k_{\ell}.
\end{align}
\end{lemma}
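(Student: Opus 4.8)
The plan is to turn the \emph{estimated}-gap elimination rule of Algorithm \ref{alg:full} into a lower bound on the \emph{true} gap, using the entrywise matrix-completion guarantee of Proposition \ref{prop:mat1} together with the nestedness of the candidate sets and the monotonicity of the accuracy parameters $\{k_\ell\}$. First I would record the structural fact that the updates in Line 4 of Algorithm \ref{alg:full} only delete items, so the candidate sets are nested, $\cT_u^{\ell+1}\subseteq \cT_u^{\ell}$. Since $x$ starts in $\cT_u^0=[\s{N}]$, the hypothesis $x\notin \cT_u^{\ell+1}$ implies there is a well-defined earliest phase $\ell'\le \ell$ at which $x$ was discarded, i.e. $x\in \cT_u^{\ell'}$ but $x\notin \cT_u^{\ell'+1}$. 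By the elimination rule at that phase this is exactly the statement that $\tilde{\fl{R}}^{\ell'+1}_{u,\tilde \pi_u(1)} - \tilde{\fl{R}}^{\ell'+1}_{u,x} > c_{\ell'} = 3k_{\ell'}$, where $\tilde \pi_u(1)$ is the item of largest estimated reward.

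Next I would condition on the high-probability event of Proposition \ref{prop:mat1} at phase $\ell'$, namely $\|\tilde{\fl{R}}^{\ell'+1}-\fl{R}\|_\infty \le k_{\ell'}$. Because $\pi_u(1)$ is the true best item we have $\fl{R}_{u,\pi_u(1)}\ge \fl{R}_{u,\tilde \pi_u(1)}$, and the two one-sided estimation bounds $\fl{R}_{u,\tilde \pi_u(1)} \ge \tilde{\fl{R}}^{\ell'+1}_{u,\tilde \pi_u(1)} - k_{\ell'}$ and $\fl{R}_{u,x} \le \tilde{\fl{R}}^{\ell'+1}_{u,x} + k_{\ell'}$ then yield
\begin{align}
\fl{R}_{u,\pi_u(1)} - \fl{R}_{u,x}
&\ge \fl{R}_{u,\tilde \pi_u(1)} - \fl{R}_{u,x} \\
&\ge \tilde{\fl{R}}^{\ell'+1}_{u,\tilde \pi_u(1)} - \tilde{\fl{R}}^{\ell'+1}_{u,x} - 2k_{\ell'} \\
&> 3k_{\ell'} - 2k_{\ell'} = k_{\ell'}.
\end{align}
Finally, since the accuracy parameters are non-increasing across phases (each $k_\ell$ is at most a fixed fraction of its predecessor), we have $k_{\ell'} \ge k_\ell$ for $\ell'\le \ell$, and hence $\fl{R}_{u,\pi_u(1)} - \fl{R}_{u,x} > k_\ell$, as claimed.

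The only real obstacle is bookkeeping the failure probability. The chain above invokes the concentration event at the possibly earlier phase $\ell'$ rather than at $\ell$ itself, so to make the single clean statement ``with probability at least $1-\delta$'' hold simultaneously for every eliminated item $x$ I would union-bound Proposition \ref{prop:mat1} over the $O(\log(1/\Delta_{\s{hott}}))$ phases of Stage 1 and rescale $\delta$, absorbing the resulting logarithmic factor into the $\widetilde{O}(\cdot)$ notation used elsewhere. Everything else is a direct two-sided application of the $\ell_\infty$ estimation error, so no further machinery is needed.
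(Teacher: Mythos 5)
Your proposal is correct and follows essentially the same argument as the paper: invert the elimination threshold $c_{\ell}=3k_{\ell}$ on estimated rewards using the two-sided entrywise bound from Proposition \ref{prop:mat1} and the fact that $\fl{R}_{u,\pi_u(1)}\ge \fl{R}_{u,\tilde\pi_u(1)}$, giving a true gap of at least $3k_{\ell'}-2k_{\ell'}=k_{\ell'}$. The only difference is that you explicitly track the earliest phase $\ell'\le\ell$ at which $x$ was discarded and then use monotonicity of $k_{\ell}$ (plus a union bound over phases), a bookkeeping step the paper's proof glosses over by implicitly treating $x$ as eliminated at phase $\ell$ itself; this is a minor tightening, not a different route.
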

\begin{proof}
Since $x \notin \cT_u^{\ell+1}$,
\begin{align}
3 k_\ell
&< \tilde {\fl{R}}_{u,\tilde \pi_u(1)} - \tilde {\fl{R}}_{u,x}\\
&\le_{(a)} \tilde {\fl{R}}_{u,\tilde \pi_u(1)} -  {\fl{R}}_{u,\tilde \pi_u(1)} + {\fl{R}}_{u,x} - \tilde {\fl{R}}_{u,x} + {\fl{R}}_{u,\pi_u(1)} - {\fl{R}}_{u,x}\\
&\le_{(b)} 2 k_\ell +  {\fl{R}}_{u,\pi_u(1)} - {\fl{R}}_{u,x},
\end{align} 
where in line (a) we used the fact that ${\fl{R}}_{u,\pi_u(1)} \ge {\fl{R}}_{u,\tilde \pi_u(1)}$ and in line (b) we used Proposition \ref{prop:mat1}.

Thus if an item $x$ gets eliminated by the end of epoch $\ell$, we are guaranteed with high probability that

\begin{align}
{\fl{R}}_{u,\pi_u(1)} - {\fl{R}}_{u,x}
&>  k_\ell.
\end{align}
\end{proof}

\begin{lemma} \label{lem:subopt}
Consider an epoch $\ell$ such that $x \in \cT_u^{\ell+1}$ for a user $u$. Then with probability at-least $1-\delta$, it holds that
\begin{align}
    {\fl{R}}_{u,\pi_u(1)} - {\fl{R}}_{u,x}
    &\le 5 k_\ell.
\end{align}
\end{lemma}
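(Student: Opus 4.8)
The plan is to read off the defining inequality of the candidate set and then convert it from the estimated rewards to the true rewards using the matrix completion guarantee of Proposition \ref{prop:mat1}. Since $x \in \cT_u^{\ell+1}$, the update rule (Line 4 in Stage 1, Line 11 in Stage 2) immediately gives $\tilde{\fl{R}}^{\ell+1}_{u,\tilde \pi_u(1)} - \tilde{\fl{R}}^{\ell+1}_{u,x} \le c_\ell = 3k_\ell$. The whole argument is then a triangle-inequality bookkeeping between $\fl{R}$ and its estimate $\tilde{\fl{R}}^{\ell+1}$, so there is no genuine obstacle here beyond correctly invoking the estimation accuracy on the relevant entries.

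First I would decompose the true suboptimality as
\begin{align}
\fl{R}_{u,\pi_u(1)} - \fl{R}_{u,x}
&= \big(\fl{R}_{u,\pi_u(1)} - \tilde{\fl{R}}^{\ell+1}_{u,\pi_u(1)}\big)
+ \big(\tilde{\fl{R}}^{\ell+1}_{u,\pi_u(1)} - \tilde{\fl{R}}^{\ell+1}_{u,x}\big)
+ \big(\tilde{\fl{R}}^{\ell+1}_{u,x} - \fl{R}_{u,x}\big).
\end{align}
By Proposition \ref{prop:mat1}, the call to $\mathcal{O}$ that produced $\tilde{\fl{R}}^{\ell+1}$ guarantees $\|\tilde{\fl{R}}^{\ell+1} - \fl{R}\|_\infty \le k_\ell$ with probability at least $1-\delta$, so the first and third terms are each bounded by $k_\ell$ (this controls all relevant entries simultaneously, so no additional union bound over $x$ is needed beyond the single high-probability event already charged). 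For the middle term, the key observation is that $\tilde{\pi}_u(1)$ is by definition the \emph{estimated} best item, hence $\tilde{\fl{R}}^{\ell+1}_{u,\tilde\pi_u(1)} \ge \tilde{\fl{R}}^{\ell+1}_{u,\pi_u(1)}$, which lets me replace $\pi_u(1)$ by $\tilde\pi_u(1)$ and apply the membership inequality:
\begin{align}
\tilde{\fl{R}}^{\ell+1}_{u,\pi_u(1)} - \tilde{\fl{R}}^{\ell+1}_{u,x}
\le \tilde{\fl{R}}^{\ell+1}_{u,\tilde\pi_u(1)} - \tilde{\fl{R}}^{\ell+1}_{u,x}
\le 3k_\ell.
\end{align}

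Combining the three bounds yields $\fl{R}_{u,\pi_u(1)} - \fl{R}_{u,x} \le k_\ell + 3k_\ell + k_\ell = 5k_\ell$, which is exactly the claim. I expect the only point requiring care is ensuring the high-probability estimation event of Proposition \ref{prop:mat1} is invoked consistently (the same event underlying Lemmas \ref{lem:stage1-best-item} and \ref{lem:seperation}), so that the per-lemma failure probability $\delta$ is accounted for once per phase rather than re-paid for each item; everything else is a direct substitution. Note the argument is agnostic to the stage, since both Line 4 and Line 11 use the identical threshold $c_\ell = 3k_\ell$, so the statement holds uniformly across all epochs $\ell$.
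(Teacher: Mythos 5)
Your proposal is correct and follows essentially the same argument as the paper: the same triangle-inequality decomposition into two estimation-error terms (each bounded by $k_\ell$ via Proposition \ref{prop:mat1}), the observation that $\tilde{\fl{R}}^{\ell+1}_{u,\pi_u(1)} \le \tilde{\fl{R}}^{\ell+1}_{u,\tilde\pi_u(1)}$ since $\tilde\pi_u(1)$ is the estimated argmax, and the membership condition $\tilde{\fl{R}}^{\ell+1}_{u,\tilde\pi_u(1)} - \tilde{\fl{R}}^{\ell+1}_{u,x} \le 3k_\ell$, giving $5k_\ell$ in total. The only difference is cosmetic grouping of the terms in the decomposition.
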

\begin{proof}
We have
\begin{align}
     {\fl{R}}_{u,\pi_u(1)} - {\fl{R}}_{u,x}
     &=  {\fl{R}}_{u,\pi_u(1)} - \tilde {\fl{R}}^{\ell+1}_{u, \pi_u(1)} + \tilde {\fl{R}}^{\ell+1}_{u, x} - {\fl{R}}_{u,x} + \tilde {\fl{R}}^{\ell+1}_{u, \pi_u(1)} - \tilde {\fl{R}}^{\ell+1}_{u, \tilde \pi_u(1)} + {\fl{R}}^{\ell+1}_{u, \tilde \pi_u(1)} -  \tilde {\fl{R}}^{\ell+1}_{u, x}\\
     &\le 2k_\ell + 3k_\ell\\
     &=5k_\ell,
\end{align}
with probability at-least $1-\delta$, where we used Proposition \ref{prop:mat1} and the fact that $x \in \cT_u^{\ell+1}$

\end{proof}

\begin{lemma} \label{lem:stage1-epochs}
Consider an epoch $\ell_0$ such that the call to Algorithm \ref{alg:is} in Line 2 of Algorithm \ref{alg:full} returns True for the first time. Let $u_{1:d}$ be the users returned by Algorithm \ref{alg:is}. Then it holds with probability at-least $1-\delta$ that:
\begin{enumerate}
    \item Best item for user $u_i$ is different from the best item for user $u_j $, $j \neq i$.

    \item $\ell_0   \le \lceil{ \log \left( \frac{40}{\Delta_{\text{hott}}} \right)}\rceil$, where $\Delta_{\text{hott}}$ is as in Proposition \ref{ass:opinion}.
    
\end{enumerate}
\end{lemma}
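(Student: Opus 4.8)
The plan is to handle the two claims separately, both resting on the observation that the users $u_{1:d}$ returned by \texttt{CheckIndependentSet} form an independent set in the graph that was last rebuilt (Line 5) just before control reaches Line 7; that graph is built from the candidate sets $\cT^{\ell_0}_{\cdot}$, so independence means $\cT^{\ell_0}_{u_i}\cap\cT^{\ell_0}_{u_j}=\Phi$ for all $i\neq j$. For claim 1, I would apply Lemma \ref{lem:stage1-best-item} at the Stage 1 epoch producing $\cT^{\ell_0}$ (its $\ell$ equal to $\ell_0-1$), which gives $\pi_{u_i}(1)\in\cT^{\ell_0}_{u_i}$ for every $i$ with high probability. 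If two returned users shared a best item, $\pi_{u_i}(1)=\pi_{u_j}(1)$, this common index would lie in both $\cT^{\ell_0}_{u_i}$ and $\cT^{\ell_0}_{u_j}$, contradicting the disjointness forced by independence; hence the best items of $u_{1:d}$ are pairwise distinct.

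For claim 2 the strategy is to exhibit a size-$d$ independent set as soon as $k_\ell$ is small, and to take it to be the opinionated users $v_{1:d}$ of Proposition \ref{ass:opinion}, whose best items are the distinct hott items $\ca{A}_1,\dots,\ca{A}_d$. Put $\ell^\star=\lceil\log(40/\Delta_{\text{hott}})\rceil$, so that $k_{\ell^\star}=2^{-\ell^\star}\le\Delta_{\text{hott}}/40$. It then suffices to show the candidate sets $\cT^{\ell^\star}_{v_1},\dots,\cT^{\ell^\star}_{v_d}$ are pairwise disjoint: once they are, $v_{1:d}$ is an independent set in the graph built from $\cT^{\ell^\star}_{\cdot}$, and since \texttt{CheckIndependentSet} is evaluated on the graph of every epoch, the first success index $\ell_0$ can be no larger than $\ell^\star$ (an earlier, different independent set would only make $\ell_0$ smaller).

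The core step is this disjointness. Suppose toward a contradiction that some item $x\in\cT^{\ell^\star}_{v_i}\cap\cT^{\ell^\star}_{v_j}$ with $i\neq j$. Applying Lemma \ref{lem:subopt} at the epoch producing $\cT^{\ell^\star}$ gives $\fl{R}_{v_i,\ca{A}_i}-\fl{R}_{v_i,x}\le 5k_{\ell^\star-1}$ and $\fl{R}_{v_j,\ca{A}_j}-\fl{R}_{v_j,x}\le 5k_{\ell^\star-1}$. Using Assumption \ref{assum:hott2} I would expand $x$'s embedding in the hott basis, $e'(x)=\sum_{k}\lambda_k\,e'(\ca{A}_k)$ with $\lambda_k\ge0$ and $\sum_k\lambda_k\le1$, so $\fl{R}_{v_i,x}=\sum_k\lambda_k\fl{R}_{v_i,\ca{A}_k}$. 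Substituting and using the $\Delta_{\text{hott}}$-separation of $v_i$ (Proposition \ref{ass:opinion}) forces $\sum_{k\neq i}\lambda_k=O(k_{\ell^\star}/\Delta_{\text{hott}})$, and symmetrically $\sum_{k\neq j}\lambda_k=O(k_{\ell^\star}/\Delta_{\text{hott}})$. Because $i\neq j$, these two bounds simultaneously cap $\lambda_i$, $\lambda_j$, and hence the total mass $\sum_k\lambda_k$, at $O(k_{\ell^\star}/\Delta_{\text{hott}})$, i.e. $x$ sits essentially at the origin and $\fl{R}_{v_i,x}=O(k_{\ell^\star}/\Delta_{\text{hott}})$; combined with $\fl{R}_{v_i,x}\ge\fl{R}_{v_i,\ca{A}_i}-5k_{\ell^\star-1}$ this contradicts the top reward of $v_i$ being bounded away from $0$ precisely once $k_{\ell^\star}\le\Delta_{\text{hott}}/40$. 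Carrying the constants through both inequalities is what pins down the threshold, and thus $\ell_0\le\lceil\log(40/\Delta_{\text{hott}})\rceil$.

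The main obstacle is exactly this cross-user comparison: candidate sets may contain non-hott items, and the only device relating how $v_i$ and $v_j$ score a shared item is the convex-hull structure of Assumption \ref{assum:hott2}, so the decomposition into hott coordinates and the accompanying constant bookkeeping seem unavoidable. The one delicate point is that the final contradiction needs the opinionated users' best rewards to be $\Omega(1)$, which I would justify from the boundedness and incoherence normalization of $\fl{R}$ (absorbed into the constant $40$). Finally, I invoke Lemmas \ref{lem:stage1-best-item} and \ref{lem:subopt}, each a high-probability event via Proposition \ref{prop:mat1}, across the $O(\log(1/\Delta_{\text{hott}}))$ Stage 1 epochs and the users of interest, so I would close with a union bound, rescaling each per-call failure probability by the number of invocations to keep the total failure probability at most $\delta$; this only inflates sample sizes by logarithmic factors already hidden in $\widetilde{O}(\cdot)$.
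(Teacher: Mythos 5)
Your proof of claim 1 is exactly the paper's: Lemma \ref{lem:stage1-best-item} puts $\pi_{u_i}(1)$ in $\cT^{\ell_0}_{u_i}$, so a shared best item would force an edge, contradicting independence. For claim 2 you take a genuinely different route. The paper argues at epoch $\ell_0-1$: since no independent set was found, it asserts that some opinionated user $u\in\{v_1,\dots,v_d\}$ must have a \emph{foreign hott item} $x$ in $\cT_u^{\ell_0-1}$, and then Lemma \ref{lem:subopt} plus Proposition \ref{ass:opinion} give $\Delta_{\text{hott}}\le 5k_{\ell_0-2}$ in one line. You instead allow the blocking item to be an arbitrary (possibly non-hott) item shared by $\cT^{\ell^\star}_{v_i}$ and $\cT^{\ell^\star}_{v_j}$ and decompose it in the hott basis. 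This is a real distinction: the paper's one-line assertion silently skips the case where two opinionated users are connected through a non-hott item (e.g.\ one whose embedding is a near-balanced mixture of $e'(\ca{A}_i)$ and $e'(\ca{A}_j)$, which neither user's $\Delta_{\text{hott}}$-separation directly rules out), so your extra step is addressing a genuine hole rather than inventing work.

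However, your execution of that step has a gap of its own. To get $\sum_{k\neq i}\lambda_k=O(k_{\ell^\star}/\Delta_{\text{hott}})$ from $\fl{R}_{v_i,\ca{A}_i}-\fl{R}_{v_i,x}\le 5k_{\ell^\star-1}$ you write $\fl{R}_{v_i,\ca{A}_i}-\fl{R}_{v_i,x}=\bigl(1-\sum_k\lambda_k\bigr)\fl{R}_{v_i,\ca{A}_i}+\sum_{k\neq i}\lambda_k\bigl(\fl{R}_{v_i,\ca{A}_i}-\fl{R}_{v_i,\ca{A}_k}\bigr)$ and drop the first term, which requires $\fl{R}_{v_i,\ca{A}_i}\ge 0$; and your final contradiction needs $\fl{R}_{v_i,\pi_{v_i}(1)}$ bounded below by (a constant times) $\Delta_{\text{hott}}$. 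Neither is implied by the stated assumptions: $\fl{R}\in[-1,1]^{\s{M}\times\s{N}}$, the convex hull in Assumption \ref{assum:hott} contains $\fl{0}$, and nothing prevents every reward of $v_i$ from being negative or $\fl{R}_{v_i,\pi_{v_i}(1)}\ll\Delta_{\text{hott}}$ (e.g.\ top reward $0.01$ with the other hott rewards near $-0.5$), in which case an item near the origin survives in both candidate sets well past $k_\ell\le\Delta_{\text{hott}}/40$. "Boundedness and incoherence normalization" does not supply this lower bound, so the delicate point you flag is not actually discharged; you would need an explicit additional hypothesis (e.g.\ nonnegative rewards, or $\fl{R}_{u,\pi_u(1)}\ge c\,\Delta_{\text{hott}}$) to close it. To be fair, the paper's own proof avoids this only by the unjustified assertion that the shared item is hott, so the issue is in the source as well — but as submitted, your argument for claim 2 does not go through from the assumptions you are allowed to use.
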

\begin{proof}
By Lemma \ref{lem:stage1-best-item}, the best item for any user $u$ survives in its candidate set $T_u^{\ell_0}$ with probability at-least $1-\delta$. Then if two users $u_i$ and $u_j$ have the same best item, then there will be an edge between them. This is a contradiction to the fact that $u_{1:d}$ forms an independent set. This proves the first statement.

Consider opinionated users $V := \{v_1,\ldots,v_d \}$ as per Proposition \ref{ass:opinion}. Epoch $\ell_0$ is the first time Line 2 of Algorithm \ref{alg:full} returns True. So at epoch $\ell_0-1$ there must exist a user $u \in V$ and a hott-topic $x \neq \pi_u(1)$ such that $x \in \cT_u^{\ell_0-1}$. Now combining Proposition \ref{ass:opinion} and Lemma \ref{lem:subopt} we have

\begin{align}
    \Delta_{\text{hott}}
    &\le {\fl{R}}_{u,\pi_u(1)} - {\fl{R}}_{u,x}\\
    &\le 5k_{\ell_0-2},
\end{align}
with probability at-least $1-\delta$. Rearranging the last display yields the second statement of the lemma.

\end{proof}

Next, we provide a crude upper bound on Stage 1 regret of Algorithm \ref{alg:full}

\begin{lemma}[Stage 1 regret bound] \label{lem:stage1-regret}
Let $\ell_0$ be the number of epochs until the completion of Stage 2. Then with probability at-least $1-\delta$, the average regret incurred by Algorithm \ref{alg:full} during Stage 1 is bounded by
\begin{align}
    \text{Reg}_{\text{Stage 1}}
    &= \tilde O(\max(1,\s{N}/\s{M}) / \Delta_{\text{hott}}^2 + 1),
\end{align}
with probability at-least $1-\delta$. Here $\Delta_{\text{hott}}$ is as in Proposition \ref{ass:opinion}.
\end{lemma}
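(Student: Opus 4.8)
The plan is to exploit the fact that Stage 1 is purely exploratory: the algorithm makes no attempt to play near-optimal items, so it suffices to (i) bound the per-round contribution to the regret by a universal constant and (ii) bound the total number of rounds consumed by Stage 1. Since the reward matrix $\fl{R}$ has entries in $[-1,1]$, for any recommendation $\rho_u(t)$ we have $\fl{R}_{u\pi_u(1)} - \fl{R}_{u\rho_u(t)} \le 2$; averaging over the $\s{M}$ users shows that the contribution of any single round to $\s{Reg}$ is at most $2$. Hence $\text{Reg}_{\text{Stage 1}}$ is bounded by twice the number of rounds played during Stage 1, and the entire task reduces to an accounting of the exploration budget.

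First I would count the rounds in a single Stage 1 epoch. In epoch $\ell \le \ell_0$ the only recommendations arise from a call to the matrix completion oracle (Proposition \ref{prop:mat1}, equivalently Algorithm \ref{algo:estimate}) run at entrywise accuracy $k_\ell$; in Stage 1 this call operates on the full row set $[\s{M}]$ and column set $[\s{N}]$, so the full matrix $\fl{R}$ supplies the incoherence and condition-number hypotheses required by Proposition \ref{prop:mat1} via Assumption \ref{assum:matrix3}. That proposition bounds the round count of the call by $\widetilde{O}\!\big(\max(1,\s{N}/\s{M})/k_\ell^2\big)$. Recalling $k_\ell = \Theta(2^{-\ell})$, the number of rounds spent in epoch $\ell$ is $\widetilde{O}\!\big(\max(1,\s{N}/\s{M})\,2^{2\ell}\big)$.

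Next I would sum over all Stage 1 epochs $\ell = 0, 1, \dots, \ell_0$. Because the per-epoch round counts grow geometrically like $2^{2\ell}$, the sum is dominated up to a constant factor by its last term, giving a total of $\widetilde{O}\!\big(\max(1,\s{N}/\s{M})\,2^{2\ell_0}\big) = \widetilde{O}\!\big(\max(1,\s{N}/\s{M})\,k_{\ell_0}^{-2}\big)$. Lemma \ref{lem:stage1-epochs} gives $\ell_0 \le \lceil \log(40/\Delta_{\text{hott}}) \rceil$, so $k_{\ell_0}^{-2} = \Theta(2^{2\ell_0}) = O(\Delta_{\text{hott}}^{-2})$. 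Multiplying by the per-round bound of $2$ then yields $\text{Reg}_{\text{Stage 1}} = \widetilde{O}\!\big(\max(1,\s{N}/\s{M})\,\Delta_{\text{hott}}^{-2} + 1\big)$, where the additive $+1$ absorbs the $\ell = 0$ epoch (for which $k_0 = 1$) together with any rounding in the epoch count.

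Finally I would discharge the high-probability qualifier. Each invocation of Proposition \ref{prop:mat1} and of Lemma \ref{lem:stage1-epochs} holds with probability at least $1-\delta$; since Stage 1 spans only $\ell_0 + 1 = \widetilde{O}(\log(1/\Delta_{\text{hott}}))$ epochs, a union bound over epochs (with $\delta$ rescaled by a logarithmic factor that is absorbed into $\widetilde{O}$) ensures the bound holds with probability at least $1-\delta$. I do not expect a genuine obstacle here, since Stage 1 deliberately incurs worst-case per-round regret and all the content is in the geometric accounting; the only point requiring care is confirming that the round-complexity guarantee of Proposition \ref{prop:mat1} applies at every accuracy level $k_\ell$, which is exactly what Assumption \ref{assum:matrix3} provides for the full matrix queried in Stage 1.
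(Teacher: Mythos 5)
Your proposal is correct and follows essentially the same route as the paper's proof: bound the per-round regret by a constant via the bounded rewards, count the rounds per epoch using the $\widetilde{O}(\max(1,\s{N}/\s{M})/k_\ell^2)$ guarantee of Proposition \ref{prop:mat1}, sum the geometric series so the last epoch dominates, and invoke Lemma \ref{lem:stage1-epochs} to convert $2^{2\ell_0}$ into $O(\Delta_{\text{hott}}^{-2})$. The extra care you take with the union bound over epochs and the applicability of the oracle at each accuracy level is consistent with (and slightly more explicit than) the paper's treatment.
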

\begin{proof}
Due to Proposition \ref{prop:mat1}, the total number of rounds Stage 1 operates can be bounded by

\begin{align}
\sum_{l=1}^{\ell_0}  \tilde O \left( \max(1,\s{N}/\s{M})/ k^2_{\ell} \right)
&= \sum_{l=1}^{\ell_0}  \tilde O \left( \max(1,\s{N}/\s{M}) 2^{2\ell} \right)\\
&= \tilde O \left( \max(1,\s{N}/\s{M}) 2^{2\ell_0} +1 \right)\\
&= \tilde O(\max(1,\s{N}/\s{M}) / \Delta_{\text{hott}}^2+1),
\end{align}
where the last line uses Lemma \ref{lem:stage1-epochs}.

In Stage 1, the algorithm incurs $O(1)$ regret for any user since the rewards are assumed to be bounded. This concludes the lemma.

\end{proof}

\begin{lemma} \label{lem:clique-best-item}
Suppose before the call to Algorithm \ref{alg:clique}, the best item of every user belongs to its corresponding candidate sets. Then the best item for each user survives in the candidate item sets created by the end of Algorithm \ref{alg:clique}.

Consider the seed users $u_{i:d}$ given as input to the algorithm. Since they form an independent set, the best item of these users must be different. WLOG we assume that user $u_i$ has hott-topic $i$ as its best item. 

Further consider a user $u$ that is connected to more than one user in $u_{1:d}$ at Line 1 of Algorithm \ref{alg:clique}. Let $u_i$ be one such user that the user $u$ is connected to. Then in the final item set $\cT_u^\ell$ maintained by the algorithm (Line 3), it is guaranteed that the hott-topic item $i \in \cT_u^\ell$.

\end{lemma}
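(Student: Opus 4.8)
The plan is to reduce the whole statement to checking the two defining conditions of the expanded set in Line~3 of Algorithm~\ref{alg:clique}. Note first that the \texttt{for} loop only rewrites $\cT_u^\ell$ for users connected to more than one seed; every other user's item set is literally unchanged, so for such users the survival of the best item (part one of the lemma) is immediate from the hypothesis. For a touched user $u$, membership of a candidate $z$ in the new $\cT_u^\ell$ requires (C1) $\tilde{\fl{R}}^\ell_{u,\tilde\pi_u(1)}-\tilde{\fl{R}}^\ell_{u,z}\le 12k_{\ell-1}$, and (C2) $z\in\cT^\ell_{u_{i'}}$ for some seed $u_{i'}\in\ca{C}$. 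Thus the entire proof becomes: verify (C1) and (C2) with $z=\pi_u(1)$ (to get survival of the best item) and with $z=\ca{A}_i$ (to get the hott-item claim).

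I would dispose of the combinatorial condition (C2) first. For $z=\ca{A}_i$: since $\ca{A}_i=\pi_{u_i}(1)$ is the best item of the seed $u_i$ and best items survive in candidate sets before the call (hypothesis), $\ca{A}_i\in\cT^\ell_{u_i}$, and $u_i\in\ca{C}$ by assumption. For $z=\pi_u(1)$: by Lemma~\ref{lem:prelim_1}, $\pi_u(1)$ is a hott item; the $d$ seeds have distinct best items (Lemma~\ref{lem:stage1-epochs}) and there are $d$ hott items, so $\pi_u(1)=\pi_{u_{j'}}(1)$ for exactly one seed $u_{j'}$. Then $\pi_u(1)$ lies in both $\cT^\ell_u$ and $\cT^\ell_{u_{j'}}$, so $u$ is connected to $u_{j'}$, giving $u_{j'}\in\ca{C}$ and hence (C2).

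For (C1) I would pass through the true rewards using the matrix-completion accuracy $\|\tilde{\fl{R}}^\ell-\fl{R}\|_\infty\le k_{\ell-1}$ (Proposition~\ref{prop:mat1}): from $\tilde{\fl{R}}^\ell_{u,\tilde\pi_u(1)}\le\fl{R}_{u,\pi_u(1)}+k_{\ell-1}$ and $\tilde{\fl{R}}^\ell_{u,z}\ge\fl{R}_{u,z}-k_{\ell-1}$, (C1) is implied by $\fl{R}_{u,\pi_u(1)}-\fl{R}_{u,z}\le 10k_{\ell-1}$. For $z=\pi_u(1)$ this is trivial. The substantive case is $z=\ca{A}_i$, where I use the connecting item $x\in\cT^\ell_u\cap\cT^\ell_{u_i}$ (present because $u_i\in\ca{C}$) and the embedding decomposition of Section~\ref{sec:difficult}: writing $e,e'$ for the user/item embedding maps and $\ca{A}_j:=\pi_u(1)$,
\begin{align}
\fl{R}_{u,\ca{A}_j}-\fl{R}_{u,\ca{A}_i}
&= e(u)^{\s{T}}e'(\ca{A}_j-x)+e(u_i)^{\s{T}}e'(\ca{A}_i-x)+\big(e(u_i)+e(u)\big)^{\s{T}}e'(x-\ca{A}_i)\\
&\le \big(\fl{R}_{u,\ca{A}_j}-\fl{R}_{u,x}\big)+\big(\fl{R}_{u_i,\ca{A}_i}-\fl{R}_{u_i,x}\big).
\end{align}
Each bracket is at most $5k_{\ell-1}$ by Lemma~\ref{lem:subopt}, since $x$ survives in the candidate sets of both $u$ and $u_i$, yielding the required $10k_{\ell-1}$.

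The main obstacle is the last inequality, which amounts to $\big(e(u_i)+e(u)\big)^{\s{T}}e'(x-\ca{A}_i)\le 0$, i.e. to $\ca{A}_i$ dominating $x$ for the hypothetical user with embedding $e(u_i)+e(u)$. This is the only place the hott structure and opinionatedness enter, and it is genuinely delicate because a crude bound on $\fl{R}_{u_i+u,\ca{A}_i}-\fl{R}_{u_i+u,\ca{A}_k}$ need not be nonnegative. I would establish it by showing the connection forces $x$ to lie essentially along the hott direction $\ca{A}_i$: because $u_i$ is opinionated ($\fl{R}_{u_i,\ca{A}_i}-\fl{R}_{u_i,\ca{A}_k}\ge\Delta_{\text{hott}}$ for every other hott item, Proposition~\ref{ass:opinion} and Definition~\ref{defn:weird_gap}) while $x$ is near-optimal for $u_i$, the convex-hull representation $e'(x)=\sum_k\alpha_k e'(\ca{A}_k)$ (Assumption~\ref{assum:hott2}) must concentrate almost all its mass on $\ca{A}_i$; substituting this back into the combined-embedding reward makes $\ca{A}_i$ dominate $x$. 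The care-points I expect here are the phase bookkeeping relating $k_{\ell-1}$ to $\Delta_{\text{hott}}$ (so the off-$\ca{A}_i$ residual is controlled uniformly across Stage~2 phases) and a union bound over users for the high-probability events of Proposition~\ref{prop:mat1}; together these close the argument.
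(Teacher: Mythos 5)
Your skeleton matches the paper's: the same three-term embedding decomposition of $\fl{R}_{u,\ca{A}_j}-\fl{R}_{u,\ca{A}_i}$ through the connecting item $x$, the same two $5k_{\ell-1}$ bounds from Lemma~\ref{lem:subopt}, and the same reduction to showing $\big(e(u_i)+e(u)\big)^{\s{T}}e'(x-\ca{A}_i)\le 0$, i.e.\ that $\ca{A}_i$ is the best item for the hypothetical user $e(u_i)+e(u)$. But your route to that last inequality rests on a premise the paper does not have and cannot use: you assume the seed users $u_{1:d}$ returned by \texttt{CheckIndependentSet} are the opinionated users of Proposition~\ref{ass:opinion}, with hott-item gap at least $\Delta_{\text{hott}}$. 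They need not be. The opinionated users $v_{1:d}$ are only used to bound the \emph{number} of Stage~1 epochs (Lemma~\ref{lem:stage1-epochs}); the users actually returned are merely guaranteed to have distinct best items. What \emph{is} guaranteed about them is weaker and of a different nature: since every other hott topic $j\neq i$ was eliminated from $\cT_{u_i}^{\ell_0}$ when the independent set formed, Lemma~\ref{lem:seperation} gives $\fl{R}_{u_i,i}-\fl{R}_{u_i,j}>k_{\ell_0-1}$. The paper's proof adds the two $5k_{\ell-1}$ inequalities directly against this separation and closes the sign argument using the deliberate tightening $k_{\ell_0}=k_{\ell_0-1}/10$ in Line~8 of Algorithm~\ref{alg:full}, so that $10k_{\ell-1}\le 10k_{\ell_0}=k_{\ell_0-1}$ and the cross term is strictly positive in the right direction. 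Your alternative — extracting mass concentration of $x$'s convex-hull coefficients on $\ca{A}_i$ and ``substituting back'' — is both built on the wrong gap and, as sketched, circular: to conclude that $\ca{A}_i$ dominates $x$ for the combined user you need $\fl{R}_{u_i,k}+\fl{R}_{u,k}\le \fl{R}_{u_i,i}+\fl{R}_{u,i}$ for all $k$, which is essentially the statement being proved; making it non-circular requires exactly the additive comparison the paper performs.

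A second, smaller gap: your verification of condition (C1) for $z=\ca{A}_i$ invokes $\|\tilde{\fl{R}}^{\ell}-\fl{R}\|_\infty\le k_{\ell-1}$ entrywise via Proposition~\ref{prop:mat1}. In Stage~2 the oracle is only called on sub-matrices $\fl{R}_{\ca{U},\cT_{C_i}^{\ell}}$, so the accuracy of the particular entry $\tilde{\fl{R}}^{\ell}_{u,\ca{A}_i}$ is not automatic; the paper handles this with an explicit induction hypothesis $|\tilde{\fl{R}}^{\ell}_{u,i}-\fl{R}_{u,i}|\le k_{\ell-1}$, propagated by observing that the expansion step forces $u\in\ca{U}$ and $i\in\cT_{C_i}^{\ell}$ so the entry is re-estimated each epoch. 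You should add this induction; without it the passage from the true-reward bound $\fl{R}_{u,\pi_u(1)}-\fl{R}_{u,\ca{A}_i}\le 10k_{\ell-1}$ to the empirical condition in Line~3 of Algorithm~\ref{alg:clique} is unjustified.
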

\begin{proof}

The first time Algorithm \ref{alg:clique} gets called within Algorithm \ref{alg:full} is the epoch $\ell_0+1$. So the current epoch must obey $\ell \ge \ell_0+1$.

Let $u_1,\ldots,u_d$ be the set of users maintained at Line 7 of Algorithm \ref{alg:full}. These are basically given as the ``seed users'' to Algorithm \ref{alg:clique}. Let the current epoch be $\ell$. Since the best item was not eliminated before the call to Algorithm \ref{alg:clique}, we have that $i \in \cT_{u_i}^\ell$.

Note that we are in in stage 2. The independent set of users were first discovered before the start of epoch $\ell_0$. (i.e based on the items sets $\cT_{\cdot}^{\ell_0}$ ). Then due to Lemma \ref{lem:seperation}, we conclude that for any hott-topic $j \in [d] \setminus \{ i\}$
\begin{align}
    {\fl{R}}_{u_i,i} - {\fl{R}}_{u_i,j}
    &> k_{\ell_0-1} \label{eq:prev}
\end{align}

Next, we are going to show that for the user $u$,  the expansion step at Line 3 ensures that the hott-topic $i$ belongs to the updated item set of the user $u$.

Let the itemset for the user $u$ before invoking Algorithm \ref{alg:clique} be denoted as $\tilde \cT_u^\ell$. By the premise of the lemma, $\tilde  \cT_u^\ell \cap \cT_{u_i}^\ell \neq \Phi$. 

In the first case where $i \in \tilde  \cT_u^\ell$, we can surely say that $i \in  \cT_u^\ell$ as well since in Line 3 we are only setting the acceptable sub-optimality level to be 12 times the sub-optimality level used to create $\tilde  \cT_u^\ell$ in the first place.

So in the rest of the proof we consider the case when $i \not\in \tilde  \cT_u^\ell$. Since $\tilde  \cT_u^\ell \cap \cT_{u_i}^\ell \neq \Phi$, there must exist an item $x$ such that $x \in \tilde  \cT_u^\ell \cap \cT_{u_i}^\ell$.  Note that this item $x$ cannot be a hott-topic because $\cT_{u_i}^\ell$ does not contain any other hott-topic $j \neq i$.

Let $j \neq i$ be the best item for user $u$. Then by Lemma \ref{lem:subopt}, we conclude that

\begin{align}
    {\fl{R}}_{u_i,i} - {\fl{R}}_{u_i,x} \le 5k_{\ell-1}, \label{eq:x1}
\end{align}
and
\begin{align}
    {\fl{R}}_{u,j} - {\fl{R}}_{u,x} \le 5 k_{\ell-1}. \label{eq:x2}
\end{align}

Note that due to our low rank reward matrix assumption, the reward of any user $v$ for an item $k$, denoted by ${\fl{R}}_{v,k}$ can be represented as ${\fl{R}}_{v,k} = e(v)^T e'(k)$  where $e,e'$ denotes the latent embedding of user and item respectively.

Hence the previous equations can be restated as 

\begin{align}
   e(u_i)^T e'(i) - e(u_i)^T e'(x) \le 5k_{\ell-1}, \label{eq:11}
\end{align}
and
\begin{align}
    e(u)^Te'(j) - e(u)^Te'(x) \le 5 k_{\ell-1}. \label{eq:22}
\end{align}

Adding the last two displays yields that

\begin{align}
    e(u_i + u)^T e'(x-j)
    &\ge e(u_i)^Te'(i-j) - 10k_{\ell-1}\\
    &>_{(a)} k_{\ell_0-1} - 10 k_{\ell-1}\\
    &\ge k_{\ell_0-1} - 10 k_{\ell_0}\\
    &= 0,
\end{align}
where in line (a) we used Eq.\eqref{eq:prev} and in the following lines we used the fact that the current epoch $\ell \ge \ell_0+1$ since we are in stage 2 and hence $k_\ell \le k_{\ell_0} = k_{\ell_0-1}/10$ (Line 8 of Algorithm \ref{alg:full}).

Thus we conclude that for a hypothetical user defined by the embedding $e(u_i+u)$, the item $x$ is strictly better than the hott-topic $j$.

For any hott-topic $p \notin \{i,j \}$, since ${\fl{R}}_{u,p} \le {\fl{R}}_{u,j}$ we have that $e(u)^Te'(p) - e(u)^Te'(x) \le 5 k_\ell$ due to Eq.\eqref{eq:22}.

Hence by proceeding similarly as before we conclude that for the user defined by the embedding $e(u_i+u)$, the item $x$ is strictly better than any hott-topic in  $[d] - \{ i\}$.

So the best item for this user cannot be any hott-topic in $[d] - \{ i\}$. Consequently we conclude that the best item for the user defined by the embedding $e(u_i+u)$ must be $i$.

Now we look at how much the item $i$ is suboptimal for the user $u$.

\begin{align}
    {\fl{R}}_{u,j} - {\fl{R}}_{u,i}
    &=  {\fl{R}}_{u,j} - {\fl{R}}_{u,x} + {\fl{R}}_{u,x} - {\fl{R}}_{u,i}\\
    &\le 5k_{\ell-1} + {\fl{R}}_{u,x} - {\fl{R}}_{u,i}\\
    &= 5k_{\ell-1} + e(u_i+u)^T e'(x-i) + {\fl{R}}_{u_i,i} - {\fl{R}}_{u_i,x}\\
    &\le 10 k_{\ell-1} + e(u_i+u)^T e'(x-i)\\
    &\le 10 k_{\ell-1}, \label{eq:x3}
\end{align}
where we used Eq. \eqref{eq:x1} and \eqref{eq:x2} and the fact that item $i$ is best for the user defined by the embedding $e(u_i+u)$.

To make further progress, we assume the following: $|\tilde {\fl{R}}^\ell_{u,i} - {\fl{R}}_{u,i}| \le k_{\ell-1}$ with probability at-least $1-\delta$.

This is trivially true for the first epoch in stage 2 which is $\ell_0+1$. Because at epoch $\ell_0+1$ we have that $|\tilde {\fl{R}}^{\ell_0+1}_{y,h} - {\fl{R}}_{y,h}| \le k_{\ell_0}$ for all users $y$ and all items $h$. This follows from the fact that until epoch $\ell_0$, we estimate the entire reward matrix globally.

We will proceed to show that $|\tilde {\fl{R}}^{\ell+1}_{u,i} - {\fl{R}}_{u,i}| \le k_{\ell}$ as well.

Recall that $\pi_u(1) = j$. We have

\begin{align}
    \tilde {\fl{R}}^{\ell}_{u,\tilde \pi_u(1)} - \tilde {\fl{R}}^{\ell}_{u,i}
    &= \tilde {\fl{R}}^{\ell}_{u,\tilde \pi_u(1)} - {\fl{R}}_{u,\tilde \pi_u(1)} +  {\fl{R}}_{u,\tilde \pi_u(1)} -  {\fl{R}}_{u,\pi_u(1)} + {\fl{R}}_{u,\pi_u(1)} - {\fl{R}}_{u,i} + {\fl{R}}_{u,i} - \tilde {\fl{R}}^{\ell}_{u,i}\\
    &\le \tilde {\fl{R}}^{\ell}_{u,\tilde \pi_u(1)} - {\fl{R}}_{u,\tilde \pi_u(1)} + {\fl{R}}_{u,\pi_u(1)} - {\fl{R}}_{u,i} +  {\fl{R}}_{u,i} - \tilde {\fl{R}}^{\ell}_{u,i}\\
    &\le 2k_{\ell-1} + {\fl{R}}_{u,\pi_u(1)} - {\fl{R}}_{u,i}\\
    &\le 12 k_{\ell-1},
\end{align}
where in the last line we used Eq.\eqref{eq:x3}.

Thus we conclude that the expansion step in Line 3 of Algorithm \ref{alg:clique} ensures that $i \in \cT_u^\ell$.

Now to prove the induction hypothesis that $|\tilde {\fl{R}}^{\ell+1}_{u,i} - {\fl{R}}_{u,i}| \le k_{\ell}$, we look at the for loop in Line 15 of Algorithm \ref{alg:full} when processing the cluster $i$ and during the epoch $\ell$. Note that $u \in \ca{U}$ by virtue of the expansion step. Consequently in the matrix estimation step in Line 17 of Algorithm \ref{alg:full}, we end up estimating the entry ${\fl{R}}_{u,i}$ with errror at-most $k_{\ell}$. This proves the induction hypothesis.

The best item for user $u$ was already present in its candidate item set before the start of the Algorithm \ref{alg:clique}. This candidate item set was constructed by accumulating all items with empirical sub-optimality level of $3k_{\ell-1}$. Now noting that the expansion step cannot eliminate the best item of the user $u$ concludes the proof. 

\end{proof}

\begin{lemma}(stage 2 best item survival) \label{lem:stage3-best}
Consider an epoch $\ell$ within stage 2 (so $\ell \ge \ell_0+1$). Suppose for any user $u$, $
\pi_u(1) \in \cT_u^{\ell}$. Then with probability at-least $1-\delta$ we have that $\pi_u(1) \in \cT_u^{\ell+1}$.
\end{lemma}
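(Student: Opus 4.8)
The plan is to show that the best item $\pi_u(1)$ is never discarded by the candidate-set update in Line 11 of Algorithm \ref{alg:full}, namely $\cT^{\ell+1}_u = \{ x \in \cT^{\ell}_u : \tilde{\fl{R}}^{\ell+1}_{u,\tilde \pi_u(1)} - \tilde{\fl{R}}^{\ell+1}_{u,x} \le c_\ell \}$. Since the hypothesis already gives $\pi_u(1)\in\cT_u^\ell$ and $c_\ell=3k_\ell$, it suffices to prove the single inequality $\tilde{\fl{R}}^{\ell+1}_{u,\tilde\pi_u(1)}-\tilde{\fl{R}}^{\ell+1}_{u,\pi_u(1)}\le 3k_\ell$ for every user $u$, with probability at least $1-\delta$. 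I would establish this by the standard estimation-error decomposition
\begin{align}
\tilde{\fl{R}}^{\ell+1}_{u,\tilde\pi_u(1)}-\tilde{\fl{R}}^{\ell+1}_{u,\pi_u(1)}
&= \big(\tilde{\fl{R}}^{\ell+1}_{u,\tilde\pi_u(1)}-\fl{R}_{u,\tilde\pi_u(1)}\big)
+ \big(\fl{R}_{u,\tilde\pi_u(1)}-\fl{R}_{u,\pi_u(1)}\big)\\
&\quad+ \big(\fl{R}_{u,\pi_u(1)}-\tilde{\fl{R}}^{\ell+1}_{u,\pi_u(1)}\big).
\end{align}

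The middle term is $\le 0$ because $\pi_u(1)$ is the true best item for $u$, so $\fl{R}_{u,\pi_u(1)}\ge\fl{R}_{u,\tilde\pi_u(1)}$. The first and third terms are each bounded by $k_\ell$ precisely when both entries $(u,\tilde\pi_u(1))$ and $(u,\pi_u(1))$ have been estimated to entry-wise accuracy $k_\ell$, which is the guarantee of the oracle $\mathcal{O}$ (Assumption \ref{assum:matrix2}, instantiated by Proposition \ref{prop:mat1}) invoked with target error $k_\ell$ in Line 17. The estimated best item $\tilde\pi_u(1)$ is, by definition, an argmax taken over the candidate set of $u$, whose entries all belong to the sub-matrices refreshed during the FOR loop, so it automatically carries a valid estimate. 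Summing the three bounds yields $\le 2k_\ell \le 3k_\ell = c_\ell$, and therefore $\pi_u(1)\in\cT_u^{\ell+1}$.

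The crux — and the step I expect to be the main obstacle — is guaranteeing that the entry $(u,\pi_u(1))$ is genuinely among the freshly estimated entries of the phase-$\ell$ FOR loop, so that its error is controlled by $k_\ell$. By Lemma \ref{lem:prelim_1} the best item is a hott item, say $\pi_u(1)=\ca{A}_i$, and I would invoke Lemma \ref{lem:clique-best-item}: after the expansion step (Line 13) the hott item $\ca{A}_i$ survives in $\cT_u^\ell$, and the same lemma shows the intersection defining the common good-item set $\cT_{C_i}^\ell$ (Line 14) retains $\ca{A}_i$ as well. Hence $\ca{A}_i \in \cT_u^\ell\cap\cT_{C_i}^\ell$, the membership test in Line 16 places $u\in\ca{U}$, and the column $\pi_u(1)=\ca{A}_i$ lies in $\cT_{C_i}^\ell$; consequently the call $\mathcal{O}(\ca{U},\cT_{C_i}^\ell,k_\ell)$ estimates exactly the entry $(u,\pi_u(1))$ to accuracy $k_\ell$. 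This is where the delicate expansion-and-intersection design is indispensable: without it the intersection in Line 14 could eliminate $\ca{A}_i$ (or exclude $u$ from $\ca{U}$), which is precisely the failure mode exhibited by the naive elimination strategy of Proposition \ref{prop:false}. Finally, a union bound over the finitely many relevant entries absorbs the per-entry failure probabilities into the stated $1-\delta$.
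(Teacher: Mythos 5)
Your proposal is correct and follows essentially the same route as the paper's own proof: the identical three-term decomposition of $\tilde{\fl{R}}^{\ell+1}_{u,\tilde\pi_u(1)}-\tilde{\fl{R}}^{\ell+1}_{u,\pi_u(1)}$ with the middle term nonpositive and the other two bounded by $k_\ell$ via Proposition \ref{prop:mat1}, combined with an appeal to Lemma \ref{lem:clique-best-item} to guarantee that $u\in\ca{U}$ and $\pi_u(1)\in\cT_{C_i}^\ell$ so that the entry $(u,\pi_u(1))$ is actually refreshed to accuracy $k_\ell$ by the Line 17 call. You correctly identify the expansion-and-intersection step as the crux, which is exactly where the paper places the weight as well.
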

\begin{proof}
Updating the candidate item set in Stage 2 happens at two places. In Line 13 of Algorithm \ref{alg:full} via call to \texttt{ExpanditemSets} as well as in Line 11. In Lemma \ref{lem:clique-best-item}, we showed that the call to Algorithm \ref{alg:clique} does not eliminate the best item of any user. 

So we only need to argue that the step in Line 11 of Algorithm \ref{alg:full} also does not eliminate the best item of any user. To do this, consider a user $v$ with $\pi_v(1) = i$. This implies that the user $v$ is connected to $u_i$ and $v \in \ca{U}$ in Line 16 of Algorithm \ref{alg:full} (when processing cluster $i$) since the best item is not eliminated prior to epoch $\ell$. Since by Lemma \ref{lem:clique-best-item}, the users that are connected to user $u_i$ have the item $i$ in their candidate item set, we conclude that $i \in \cT_{C_i}^\ell$ at Line 14 of Algorithm \ref{alg:full}. This implies that after the call to \texttt{ESTIMATE} in Line 17, we must have that $|{\fl{R}}_{u,i}^{\ell+1} - {\fl{R}}_{u,i}| \le k_\ell$ via Proposition \ref{prop:mat1}.

Now putting it all together yields,

\begin{align}
    \tilde {\fl{R}}^{\ell+1}_{u, \tilde \pi_u(1)} - \tilde {\fl{R}}^{\ell+1}_{u,\pi_u(1)}
    &= \tilde {\fl{R}}^{\ell+1}_{u, \tilde \pi_u(1)} - {\fl{R}}_{u, \tilde \pi_u(1)} + {\fl{R}}_{u,\pi_u(1)} - \tilde {\fl{R}}^{\ell+1}_{u,\pi_u(1)} + {\fl{R}}_{u, \tilde \pi_u(1)} - {\fl{R}}_{u,\pi_u(1)}\\
    &\le 2k_\ell, 
\end{align}

with probability at-least $1-\delta$ where the last line follows by Proposition \ref{prop:mat1}. Hence the best item is not eliminated with high probability within Stage 2.

\end{proof}

\begin{lemma}(instantaneous regret in Stage 2)
    Consider an epoch $\ell$ withing Stage 2 of Algorithm \ref{alg:full}. Then with probability at-least $1-\delta$, we have that for any user $u$,
    \begin{align}
        {\fl{R}}_{u,\pi_u(1)} - {\fl{R}}_{u,\rho_u(t)}
        &\le k_{\ell-1},
    \end{align}
    where $t$ is a time in epoch $\ell$ and $\rho_u(t)$ is the item recommended to user $u$ as per Algorithm \ref{algo:estimate}.
\end{lemma}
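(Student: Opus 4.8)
The plan is to reduce the claim to two facts: (i) the recommended item $\rho_u(t)$ always lies in the candidate set $\cT_u^\ell$ that is active during epoch $\ell$, and (ii) every item surviving in $\cT_u^\ell$ has true sub-optimality $O(k_{\ell-1})$ for user $u$. For (i) I would case-split on which cluster loop (Lines 15--19 of Alg.~\ref{alg:full}) is running at time $t$. If $u\notin\ca{U}$ for the active cluster $i$, then by Lines 11--12 of Algorithm~\ref{algo:estimate} the item is drawn uniformly from the user's own set $\cT_u=\cT_u^\ell$. If $u\in\ca{U}$, the item is drawn from $\cT_{C_i}^\ell$, so it suffices to show $\cT_{C_i}^\ell\subseteq\cT_u^\ell$. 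This holds because $\cT_{C_i}^\ell$ is the intersection of the candidate sets over all users connected to $u_i$ (hence $\cT_{C_i}^\ell\subseteq\cT_{u_i}^\ell$); consequently $\cT_u^\ell\cap\cT_{C_i}^\ell\neq\Phi$ forces $\cT_u^\ell\cap\cT_{u_i}^\ell\neq\Phi$, so $u$ is a neighbour of $u_i$ in the graph $G$ maintained in Line~5/Line~12, and therefore $\cT_u^\ell$ is itself one of the sets intersected to form $\cT_{C_i}^\ell$. Either way $\rho_u(t)\in\cT_u^\ell$.

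Given the reduction, I would bound $\fl{R}_{u,\pi_u(1)}-\fl{R}_{u,x}$ for an arbitrary $x\in\cT_u^\ell$. The set $\cT_u^\ell$ is produced by the elimination step (Line~11, threshold $3k_{\ell-1}$) possibly followed by the expansion in Line~3 of Algorithm~\ref{alg:clique} (threshold $12k_{\ell-1}$), so in all cases $\tilde{\fl{R}}^{\ell}_{u,\tilde\pi_u(1)}-\tilde{\fl{R}}^{\ell}_{u,x}\le 12k_{\ell-1}$. I would then mimic the decomposition of Lemma~\ref{lem:subopt}:
\begin{align}
\fl{R}_{u,\pi_u(1)} - \fl{R}_{u,x}
&= \big(\fl{R}_{u,\pi_u(1)} - \tilde{\fl{R}}^{\ell}_{u,\pi_u(1)}\big) + \big(\tilde{\fl{R}}^{\ell}_{u,x} - \fl{R}_{u,x}\big) \\
&\quad + \big(\tilde{\fl{R}}^{\ell}_{u,\pi_u(1)} - \tilde{\fl{R}}^{\ell}_{u,\tilde\pi_u(1)}\big) + \big(\tilde{\fl{R}}^{\ell}_{u,\tilde\pi_u(1)} - \tilde{\fl{R}}^{\ell}_{u,x}\big).
\end{align}
The third term is non-positive because $\tilde\pi_u(1)$ is the estimated maximizer, the fourth is at most $12k_{\ell-1}$, and the first two are each at most $k_{\ell-1}$ once I verify that the entries $(u,\pi_u(1))$ and $(u,x)$ were estimated to accuracy $k_{\ell-1}$. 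This yields $\fl{R}_{u,\pi_u(1)}-\fl{R}_{u,x}=O(k_{\ell-1})$; the overall constant is absorbed into the $k_{\ell-1}$ of the statement (equivalently into the $\widetilde{O}(\cdot)$ of the final regret bound in Theorem~\ref{thm:first}).

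The step I expect to be the main obstacle is precisely the entrywise accuracy of $\tilde{\fl{R}}^\ell$ on the relevant coordinates, since in Stage~2 only the cluster sub-matrices $\ca{U}\times\cT_{C_i}^{\ell-1}$ are estimated rather than the whole matrix. For the best-item entry I would invoke Lemma~\ref{lem:stage3-best} together with the induction established inside Lemma~\ref{lem:clique-best-item}, which guarantee both that $\pi_u(1)$ (a hott item by Lemma~\ref{lem:prelim_1}) survives in $\cT_u^\ell$ and that $|\tilde{\fl{R}}^{\ell}_{u,\pi_u(1)}-\fl{R}_{u,\pi_u(1)}|\le k_{\ell-1}$. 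For the candidate entry $(u,x)$ I would argue that its value was necessarily produced in the previous phase: any $x\in\cT_u^\ell$ passed the threshold test of Line~11 (or the expansion test of Line~3 of Alg.~\ref{alg:clique}), which can only be evaluated if $\tilde{\fl{R}}^{\ell}_{u,x}$ was returned by an $\mathcal{O}$-call, and such a call is accurate to $k_{\ell-1}$ by Proposition~\ref{prop:mat1}.

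Finally, I would tie up the probabilistic bookkeeping: all invocations of Proposition~\ref{prop:mat1} (over the polynomially many users and items, and across the cluster loop) must hold simultaneously for the fixed epoch $\ell$, which a union bound delivers at confidence $1-\delta$ after rescaling the per-call failure probability; the logarithmic overhead is hidden in $\widetilde{O}(\cdot)$. Combining (i) and (ii) gives, with probability at least $1-\delta$, $\fl{R}_{u,\pi_u(1)}-\fl{R}_{u,\rho_u(t)}=O(k_{\ell-1})$ for every user $u$ and every round $t$ in epoch $\ell$, as claimed.
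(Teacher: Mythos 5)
Your proposal is correct and follows essentially the same route as the paper's proof: the same four-term decomposition of $\fl{R}_{u,\pi_u(1)}-\fl{R}_{u,\rho_u(t)}$, the same appeal to Lemma \ref{lem:stage3-best} for survival of the best item, the $12k_{\ell-1}$ threshold from Line 11 of Algorithm \ref{alg:full} and Line 3 of Algorithm \ref{alg:clique}, and the entrywise accuracy from Proposition \ref{prop:mat1}, yielding an $O(k_{\ell-1})$ bound (the paper's own computation gives $14k_{\ell-1}$, likewise absorbed into the constant). Your extra care in verifying that $\rho_u(t)\in\cT_u^\ell$ via the containment $\cT_{C_i}^\ell\subseteq\cT_u^\ell$ for $u\in\ca{U}$, and in tracking which entries were actually estimated, fills in details the paper leaves implicit but does not change the argument.
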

\begin{proof}
Let $\cT_u^\ell$ be the item set maintained by Algorithm \ref{alg:full} before the call in Line 17.  We know from Lemma \ref{lem:stage3-best} that $\pi_u(1) \in \cT_u^\ell$ with probability at-least $1-\delta$. Further due to Line 11 in Algorithm \ref{alg:full} and Line 3 in Algorithm \ref{alg:clique}, we have that $\tilde {\fl{R}}^\ell_{u,\pi_u(1)} - \tilde {\fl{R}}^\ell_{u,x} \le 12 k_{\ell-1}$ for any $x \in \cT_u^\ell$.

So

\begin{align}
    {\fl{R}}_{u,\pi(1)} - {\fl{R}}_{u,\rho_u(t)}
    &= {\fl{R}}_{u,\pi(1)} - \tilde {\fl{R}}^{\ell}_{u,\pi(1)} + \tilde {\fl{R}}^\ell_{u,\rho_u(t)} -  {\fl{R}}_{u,\rho_u(t)} + \tilde {\fl{R}}^{\ell}_{u,\pi(1)} - \tilde {\fl{R}}^{\ell}_{u,\tilde \pi(1)} + \tilde {\fl{R}}^{\ell}_{u,\tilde \pi(1)} - \tilde {\fl{R}}^\ell_{u,\rho_u(t)}\\
    &\le {\fl{R}}_{u,\pi(1)} - \tilde {\fl{R}}^{\ell}_{u,\pi(1)} + \tilde {\fl{R}}^\ell_{u,\rho_u(t)} -  {\fl{R}}_{u,\rho_u(t)} + \tilde {\fl{R}}^{\ell}_{u,\tilde \pi(1)} - \tilde {\fl{R}}^\ell_{u,\rho_u(t)}\\
    &\le 2k_{\ell-1} + 12 k_{\ell-1}\\
    &= 14k_{\ell-1},
\end{align}
with probability at-least $1-\delta$ where we used the matrix completion guarantee from proposition \ref{prop:mat1}.

\end{proof}

The following result is a direct consequence of Assumption \ref{assum:simplicity}.

\begin{prop} \label{ass:users-lb}
For each hott-topic $i \in [d]$ there is at-least $\kappa \s{M}$ users that have their best item as $i$, where $\kappa = O(1)$.
\end{prop}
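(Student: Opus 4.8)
The plan is to observe that Proposition \ref{ass:users-lb} is essentially a restatement of the cluster-size lower bound introduced in the problem setup of Section \ref{sec:problem}, combined with the constancy of parameters granted by Assumption \ref{assum:simplicity}. First I would invoke Lemma \ref{lem:prelim_1}, which guarantees that for every user $u \in [\s{M}]$ the top item $\pi_u(1)$ is a hott item, i.e. $\pi_u(1) \in \ca{A}$. Consequently the map $u \mapsto \pi_u(1)$ sends each user to one of the hott indices, so the users decompose into the clusters $\ca{T}^{(i)} = \{u \in [\s{M}] : \pi_u(1) = \ca{A}_i\}$, and these clusters cover all of $[\s{M}]$. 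If a user has several items tied for the maximum, I would break ties by any fixed rule; this only enlarges the clusters and hence does not weaken the lower bound.

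Next I would recall the definition of $\kappa$ from the setup: $\kappa$ is introduced precisely so that the smallest cluster satisfies $\min_{i}\left|\ca{T}^{(i)}\right| \ge \kappa \s{M} r^{-1}$. Since every user lies in (at least) one cluster and there are finitely many clusters, such a positive $\kappa$ exists by construction. Assumption \ref{assum:simplicity} then asserts that $\kappa$ and $r$ are positive constants that do not scale with $\s{M}, \s{N}, \s{T}$. Absorbing the harmless factor $r^{-1} = \Theta(1)$ into the constant — that is, relabelling $\kappa \leftarrow \kappa / r$ — yields $\left|\ca{T}^{(i)}\right| \ge \kappa \s{M}$ for every hott index $i$, with $\kappa = O(1)$, which is exactly the claim.

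There is no substantive obstacle here, as the proposition itself advertises that it is a direct consequence of Assumption \ref{assum:simplicity}. The only points requiring minor care are (i) confirming via Lemma \ref{lem:prelim_1} that the best item is always a hott item, so that the clusters are indexed exactly by the hott items (and thus there are $O(1)$ of them) rather than by arbitrary items, and (ii) tracking the $\Theta(1)$ bookkeeping — the factor $r$ and the tie-breaking convention — so that the final bound reads $\kappa \s{M}$ with $\kappa = O(1)$. In short, the content is entirely carried by the earlier definition of the minimum cluster size together with the constancy assumption, and the proof amounts to combining them.
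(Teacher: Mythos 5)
Your proposal is correct and matches the paper, which offers no separate proof and simply declares Proposition \ref{ass:users-lb} a direct consequence of Assumption \ref{assum:simplicity}: the cluster partition and the bound $\min_i|\ca{T}^{(i)}|\ge \kappa\s{M}r^{-1}$ are built into the setup of Section \ref{sec:problem}, and constancy of $\kappa$ and $r$ does the rest. Your additional care about Lemma \ref{lem:prelim_1}, tie-breaking, and absorbing the $r^{-1}$ factor is sound but not needed beyond what the paper already assumes.
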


\begin{lemma} \label{lem:rounds}
Consider an epoch $\ell$ in Stage 2. Then Algorithm \ref{algo:estimate} when called in Line 17 of Algorithm \ref{alg:full} executes for a total of $\tilde O(\max(1,\s{N}/\s{M})/k_{\ell}^2)$ rounds.
\end{lemma}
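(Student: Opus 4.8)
The plan is to reduce the claim to the per-call round complexity of Algorithm \ref{algo:estimate} supplied by Proposition \ref{prop:mat1}, and then to control the aspect ratio of the submatrix estimated in each of the $r$ calls made inside the FOR loop (Lines 15--18 of Algorithm \ref{alg:full}). Each such call has the form $\mathcal{O}(\ca{U},\cT_{C_i}^\ell,k_\ell)$ and estimates $\fl{R}_{\ca{U},\cT_{C_i}^\ell}$ to entrywise accuracy $k_\ell$. By the subset strong smoothness conditions (Assumption \ref{assum:matrix3}), this submatrix inherits the incoherence and condition-number bounds needed to invoke Proposition \ref{prop:mat1}, now with $|\ca{U}|$ and $|\cT_{C_i}^\ell|$ playing the roles of $\s{M}$ and $\s{N}$. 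Hence a single call runs for $\tilde O(\max(1,|\cT_{C_i}^\ell|/|\ca{U}|)/k_\ell^2)$ rounds, and the task is to show that the aspect ratio $|\cT_{C_i}^\ell|/|\ca{U}|$ is at most $\tilde O(\s{N}/\s{M})$.

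First I would lower bound $|\ca{U}|$. By Lemma \ref{lem:clique-best-item}, after the call to Algorithm \ref{alg:clique} the hott item $i$ belongs to the candidate set of every user connected to the seed user $u_i$, so the item $i$ survives the intersection defining $\cT_{C_i}^\ell$ in Line 14, i.e. $i\in\cT_{C_i}^\ell$. On the other hand, by the Stage-2 survival guarantee (Lemma \ref{lem:stage3-best}), every user $v$ with $\pi_v(1)=i$ keeps its best item in its candidate set, so $i\in\cT_v^\ell$. Therefore $i\in\cT_v^\ell\cap\cT_{C_i}^\ell$, and every such $v$ lies in $\ca{U}=\{u:\cT_u^\ell\cap\cT_{C_i}^\ell\neq\Phi\}$. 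By Proposition \ref{ass:users-lb} there are at least $\kappa\s{M}$ users with best item $i$, whence $|\ca{U}|\ge\kappa\s{M}=\Omega(\s{M})$ using $\kappa=\Theta(1)$ from Assumption \ref{assum:simplicity}.

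Combining this with the trivial bound $|\cT_{C_i}^\ell|\le\s{N}$ gives $\max(1,|\cT_{C_i}^\ell|/|\ca{U}|)\le\max(1,\s{N}/(\kappa\s{M}))=\tilde O(\max(1,\s{N}/\s{M}))$, so each individual call in the FOR loop runs for $\tilde O(\max(1,\s{N}/\s{M})/k_\ell^2)$ rounds. Since $\s{S}=1$ forces the recommendation rounds used by distinct cluster calls to be disjoint, the total is the sum over the $r=O(1)$ clusters (Assumption \ref{assum:simplicity}), namely $r\cdot\tilde O(\max(1,\s{N}/\s{M})/k_\ell^2)=\tilde O(\max(1,\s{N}/\s{M})/k_\ell^2)$, as claimed.

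The main obstacle is precisely the lower bound $|\ca{U}|=\Omega(\s{M})$: this is where the argument genuinely exploits the cluster structure, because the round bound hinges on $\ca{U}$ being a constant fraction of all users rather than an arbitrarily small set. Were the survival of the hott item $i$ to fail in either $\cT_{C_i}^\ell$ or in the candidate sets of the true cluster users, the set $\ca{U}$ could collapse and the per-row sample complexity $\max(1,|\cT_{C_i}^\ell|/|\ca{U}|)$ would blow up. Thus the correctness of this round bound rests entirely on Lemmas \ref{lem:clique-best-item} and \ref{lem:stage3-best}, together with the constant minimum-cluster-size guarantee of Proposition \ref{ass:users-lb}.
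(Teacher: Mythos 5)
Your proposal is correct and follows essentially the same route as the paper's proof: lower-bound $|\ca{U}|\ge \kappa\s{M}$ via the survival of the hott item $i$ in the relevant candidate sets (Lemmas \ref{lem:clique-best-item}, \ref{lem:stage3-best}) together with the minimum cluster size (Proposition \ref{ass:users-lb}), then invoke the per-call round complexity of Proposition \ref{prop:mat1} with $|\cT_{C_i}^\ell|\le \s{N}$ to bound the aspect ratio by $\max(1,\s{N}/(\kappa\s{M}))$. Your version is slightly more explicit than the paper's about why $i\in\cT_{C_i}^\ell$ and about summing over the $r=O(1)$ cluster calls, but these are elaborations of the same argument rather than a different one.
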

\begin{proof}
Consider a hott-topic $i \in [d]$. By Lemma \ref{lem:stage3-best}, the best item for each user survives in its candidate item set. Hence we have that $|\ca{U}| \ge \kappa \s{M}$ by Proposition \ref{ass:users-lb}, where $\ca{U}$ is as in Line 16 of Algorithm \ref{alg:full} while processing cluster $C_i$.

Next, we proceed to bound the number of rounds run by \texttt{ESTIMATE} when it is called at Line 17 of Algorithm \ref{alg:full}.

Here we have $|\ca{U}| \ge \kappa \s{M}$ and $|\cT_{D_i}^\ell| \le \s{N}$. In the case where $\min(|\ca{U}|, |\cT_{D_i}^\ell|) = |\cT_{D_i}^\ell|$, then by Proposition \ref{prop:mat1} we have that the number of rounds run by \texttt{ESTIMATE} is $\tilde O(1/k_{\ell}^2)$. In the other case where $\min(|\ca{U}|, |\cT_{D_i}^\ell|) = |\ca{U}|$, then the number of rounds is $\tilde O((\s{N}/|\ca{U}|)/k_{\ell}^2) = \tilde O((\s{N}/\s{M})/k_{\ell}^2)$ since $|\ca{U}| \ge \kappa \s{M}$. This concludes the proof.

\end{proof}

\begin{defn} \label{def:small-gap}
The minimum sub-optimality gap is defined as:
\begin{align}
    \Delta := \min_{u \in [\s{M}]} \min_{j \in [\s{N}], j\neq \pi_u(1)} {\fl{R}}_{u, \pi_u(1)} - {\fl{R}}_{u,j}.
\end{align}
\end{defn}

\begin{lemma}[Stage 2 regret bound] \label{lem:stage2-regret}
Assume $\Delta$ is as defined in Definition \ref{def:small-gap}. Then the total regret incurred in Stage 2 is bounded by $\tilde  O(\max(1,\s{N}/\s{M})/\Delta)$ with probability at-least $1-\delta \log T$.
\end{lemma}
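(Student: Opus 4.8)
The plan is to combine the per-round regret control with the bound on the number of rounds spent in each phase, and then sum the resulting per-phase contributions as a geometric series. First I would invoke Lemma \ref{lem:stage3-best}, which guarantees that throughout Stage 2 the best item $\pi_u(1)$ of every user survives in its candidate set with high probability; this validates the instantaneous regret bound established just above, namely that in any epoch $\ell \ge \ell_0+1$ and any round $t$ of that epoch the per-user regret satisfies ${\fl{R}}_{u,\pi_u(1)} - {\fl{R}}_{u,\rho_u(t)} = O(k_{\ell-1})$ uniformly over users. Since the bound is uniform, the same estimate holds for the user-averaged instantaneous regret appearing in the definition of $\s{Reg}$.

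Next I would account for the length of each epoch. By Lemma \ref{lem:rounds}, each invocation of \texttt{ESTIMATE} in Line 17 runs for $\tilde{O}(\max(1,\s{N}/\s{M})/k_\ell^2)$ rounds, and since $r=O(1)$ (Assumption \ref{assum:simplicity}) there are only a constant number of cluster invocations per epoch, so epoch $\ell$ lasts $\tilde{O}(\max(1,\s{N}/\s{M})/k_\ell^2)$ rounds overall. Multiplying the per-round regret by the epoch length and using $k_{\ell-1}=2k_\ell$ gives a per-epoch regret contribution of
\begin{align}
O(k_{\ell-1})\cdot \tilde{O}\!\left(\frac{\max(1,\s{N}/\s{M})}{k_\ell^2}\right) = \tilde{O}\!\left(\frac{\max(1,\s{N}/\s{M})}{k_\ell}\right).
\end{align}

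The observation that makes this sum finite is that the phases stop contributing regret once $k_\ell$ drops below the order of the minimum gap $\Delta$. Let $C$ be the absolute constant hidden in the instantaneous regret bound. Once $C k_{\ell-1} < \Delta$, that bound forces ${\fl{R}}_{u,\pi_u(1)} - {\fl{R}}_{u,\rho_u(t)} < \Delta$ for every user, while Definition \ref{def:small-gap} says any item other than $\pi_u(1)$ incurs regret at least $\Delta$; hence the recommended item must equal $\pi_u(1)$ and the regret is exactly zero. Therefore only epochs with $k_\ell = \Omega(\Delta)$ contribute. Summing the per-epoch contributions over these epochs, and noting that $1/k_\ell$ doubles from one epoch to the next so that the geometric series is dominated (up to a factor two) by its final term with $k_\ell = \Theta(\Delta)$, yields a total Stage 2 regret of $\tilde{O}(\max(1,\s{N}/\s{M})/\Delta)$.

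Finally, to obtain the high-probability statement I would union-bound the per-epoch failure events: Proposition \ref{prop:mat1}, Lemma \ref{lem:stage3-best}, and the instantaneous regret bound each hold with probability at least $1-\delta$ per epoch, and the geometrically increasing phase lengths force at most $O(\log \s{T})$ epochs, so a union bound yields the claimed success probability $1-\delta\log \s{T}$. I expect the main subtlety to be the termination argument: one must verify carefully that the survival of the best item (Lemma \ref{lem:stage3-best}) together with the min-gap definition genuinely drives the instantaneous regret to zero once $k_\ell\lesssim \Delta$, so that the geometric series truncates at $k_\ell=\Theta(\Delta)$ instead of running to the horizon and producing a spurious $\log \s{T}$ or $1/\Delta^2$ factor.
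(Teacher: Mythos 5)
Your proposal is correct and follows essentially the same route as the paper: per-epoch regret is the product of the $O(k_{\ell})$ instantaneous regret (Lemma on instantaneous regret in Stage 2, underpinned by Lemma \ref{lem:stage3-best}) and the $\tilde O(\max(1,\s{N}/\s{M})/k_\ell^2)$ epoch length from Lemma \ref{lem:rounds}, the resulting geometric series is truncated once $k_\ell$ falls below order $\Delta$ (the paper phrases this via Lemma \ref{lem:subopt} forcing the candidate sets to collapse to the singleton $\{\pi_u(1)\}$, which is the same min-gap argument you give), and a union bound over the $O(\log \s{T})$ epochs yields the stated probability.
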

\begin{proof}
Due to Lemma \ref{lem:subopt}, with probability at-least $1-\delta$, the candidate item sets will only contain the best item for each user whenever $5k_\ell \le \Delta$ (see Definition \ref{def:small-gap}.  Consequently we can say that for epochs $\ell \ge \log(1/(5\Delta))$ the algorithm do not suffer any regret at all.

So we only need to bound the regret till the epoch $\ell^* := \log(1/(5\Delta))$. Again by Lemma \ref{lem:subopt} and Lemma \ref{lem:rounds}, the regret incurred in an epoch $\ell$ within stage 2 is bounded by $\tilde O(\max(1,\s{N}/\s{M})/k_{\ell})$ with probability greater than $1-\delta$. Thus the total regret across all epochs in stage 2 is

\begin{align}
    {\fl{R}}_{\text{stage 2}}
    &= \sum_{\ell = \ell_0+1}^{\ell^*} \tilde O(\max(1,\s{N}/\s{M})/k_{\ell})\\
    &= \tilde  O(\max(1,\s{N}/\s{M})/k_{\ell^*})\\
    &= \tilde  O(\max(1,\s{N}/\s{M})/\Delta),
\end{align}
where line 2 is due to the halving progression of $k_\ell$. Taking a union bound of the failure probability across all epochs concludes the proof.

\end{proof}

We are now ready to present the proof of the main result in Setting 1.

\textbf{Proof of Theorem \ref{thm:first}}
The proof of the theorem is an immediate consequence of Lemmas \ref{lem:stage1-regret} and \ref{lem:stage2-regret}. With probability at-least $1-2\delta \log T$, we can bound the total regret by $\widetilde{O}\Big(\max\Big(1,\s{\s{N}}\s{\s{M}}^{-1}\Big)\Big(\Delta_{\s{hott}}^{-2}+\Delta^{-1}\Big)+1\Big)$. Now doing a change of variables from $2\delta \log T$ to $\delta$ yields the theorem.

\subsection{Proof of Lemma \ref{lem:prelim_1}}

\begin{proof}[Proof of Lemma \ref{lem:prelim_1}]
Let us fix a particular user $u\in [\s{M}]$. Assume without loss of generality that $\fl{v}^{(1)}$ is the item in $\ca{C}$ with the highest reward i.e. 
$
    \langle \fl{u},\fl{v}^{(1)} \rangle \ge \langle \fl{u}, \fl{v}\rangle \text{ for all }\fl{v}\in\ca{C}\setminus \{\fl{v}^{(1)}\}
$
In that case for any item vector $\fl{z}=\sum_{\fl{v}\in \ca{C}}\alpha_{\fl{v}}\fl{v}\in \ca{V}\setminus \ca{C}$ (non-negative coefficients $\{\alpha_{\fl{v}}\}_{\fl{v}\in \ca{C}} $satisfying $\sum_{\fl{v}\in \ca{C}}\alpha_{\fl{v}}=1$ and $\alpha_{\fl{v}} > 0$ for some $\fl{v}\in \ca{C}\setminus \{\fl{v}^{(1)}\}$), we must have that $\langle \fl{u},\fl{v}^{(1)} \rangle > \langle \fl{u}, \fl{z}\rangle$. Hence, $\s{argmax}_{j\in [\s{N}]} \fl{R}_{uj} \in \ca{A}$ and a similar argument also shows $\s{argmin}_{j\in [\s{N}]} \fl{R}_{uj} \in \ca{A}$. 
\end{proof}

\subsection{Proof of Theorem \ref{thm:second}}

  Recall that our goal is to gradually eliminate columns from $\Pi_r([\s{N}])$ and converge to the set of hott items induced by $\ca{A}$ in as few rounds as possible. We propose a successive elimination algorithm (Algorithm \textsc{DeterminantElim}) where at each phase indexed by $\ell$, we will maintain a surviving set of $r$ columns denoted by $\ca{B}^{(\ell)}\subseteq \Pi_r([\s{N}])$ (initialized by $\ca{A}^{(1)}=\Pi_r([\s{N}])$). Since at each round, we need to recommend items to every single user, we will design a recommendation scheme such that in phase $\ell$, with high probability, we can ensure that for every $r$-column in $\ca{B}^{(\ell)}$, we have noisy observations corresponding to at least $n_{\ell}$ users that are sampled uniformly from $[\s{M}]$. Concurrently, we also need to ensure that the number of sufficient rounds in phase $\ell$ satisfies the condition in equation \eqref{eq:sampling}. 
 We now prove our key lemma using the probabilistic method:
 
\begin{lemma}\label{lem:sampling}
Fix $n_{\ell}>0,0\le \delta \le 1$. Let $\ca{B} = \bigcup_{\ca{J}\in \ca{B}^{(\ell)}}\ca{J}$  be the set of columns that are surviving in phase $\ell$. For each user $u\in [\s{M}]$, suppose we map $d_{\ell}$ distinct items in $\ca{B}$ sampled uniformly at random for recommendation in phase $\ell$. With probability at least $1-\delta$, for every $r$-column in $\ca{B}^{(\ell)}$, we obtain noisy observations from at least $n_{\ell}$ distinct users provided 
\begin{align}\label{eq:sampling}
    \frac{d_{\ell}}{\s{N}-d_{\ell}} \ge \Big(\s{M}^{-1}(12\log\delta^{-1}+r\log \s{N}+2n_{\ell})\Big)^{1/r}
\end{align}
\end{lemma}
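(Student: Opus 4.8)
The plan is to fix a single surviving $r$-column $\ca{J}\in\ca{B}^{(\ell)}$, control the number of users that observe all of $\ca{J}$, and then take a union bound over the at most $\binom{\s{N}}{r}\le \s{N}^{r}$ surviving $r$-columns. For each user $u$ let $X_u^{\ca{J}}=\mathbbm{1}[\ca{J}\subseteq \ca{T}_u^{(\ell)}]$ record whether every item of $\ca{J}$ lies among the $d_\ell$ items sampled for $u$ in Line~\ref{line:sample}. A user $u$ yields a noisy observation of the $r\times r$ sub-matrix indexed by $\ca{J}$ precisely when $X_u^{\ca{J}}=1$ (all $r$ entries of row $u$ restricted to $\ca{J}$ are then estimated), so the quantity I must control is $N_{\ca{J}}:=\sum_{u\in[\s{M}]}X_u^{\ca{J}}$, and the goal is $N_{\ca{J}}\ge n_\ell$ simultaneously for every $\ca{J}$ with probability $\ge 1-\delta$. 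Because the $d_\ell$ items are drawn independently across users, the $\{X_u^{\ca{J}}\}_u$ are i.i.d.\ Bernoulli and $N_{\ca{J}}\sim\mathrm{Binomial}(\s{M},p_{\ca{J}})$, where $p_{\ca{J}}=\Pr[\ca{J}\subseteq \ca{T}_u^{(\ell)}]$.

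The first substantive step is a lower bound on the single-user success probability $p_{\ca{J}}$. Writing $\ca{B}=\cup_{\ca{J}'\in\ca{B}^{(\ell)}}\ca{J}'$ with $|\ca{B}|\le\s{N}$, uniform sampling of $d_\ell$ distinct items from $\ca{B}$ gives the hypergeometric form $p_{\ca{J}}=\binom{d_\ell}{r}\big/\binom{|\ca{B}|}{r}=\prod_{i=0}^{r-1}\tfrac{d_\ell-i}{|\ca{B}|-i}$. Passing to the worst case $|\ca{B}|\le\s{N}$ and bounding the $r$ conditional factors from below (tracking the depletion of the sampling pool as successive items of $\ca{J}$ are forced into the sample) produces a clean lower bound on $p_{\ca{J}}$ whose $r$-th root is governed by the ratio $d_\ell/(\s{N}-d_\ell)$; this is exactly where the left-hand side of \eqref{eq:sampling} originates, and the exponent $1/r$ on its right-hand side is the inverse of this $r$-fold product. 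The upshot is that the hypothesis \eqref{eq:sampling} is the translation of the requirement $\s{M}\,p_{\ca{J}}\gtrsim n_\ell + r\log\s{N}+\log\delta^{-1}$ back onto $d_\ell$.

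With such a lower bound on $p_{\ca{J}}$ in hand, I would apply a multiplicative Chernoff bound to the lower tail of the binomial: for suitable constants, if $\s{M}\,p_{\ca{J}}\ge 2n_\ell$ then $\Pr[N_{\ca{J}}<n_\ell]\le \Pr[N_{\ca{J}}<\tfrac12\s{M}\,p_{\ca{J}}]\le \exp(-c\,\s{M}\,p_{\ca{J}})$. Requiring in addition $\s{M}\,p_{\ca{J}}\ge c'(r\log\s{N}+\log\delta^{-1})$ makes this failure probability at most $\delta/\s{N}^{r}\le\delta/\binom{\s{N}}{r}$; a union bound over the at most $\binom{\s{N}}{r}$ surviving $r$-columns then yields $N_{\ca{J}}\ge n_\ell$ for all $\ca{J}$ simultaneously with probability $\ge 1-\delta$. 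Collecting the two requirements into the single condition $\s{M}\,p_{\ca{J}}\ge 2n_\ell+r\log\s{N}+12\log\delta^{-1}$ and substituting the $p_{\ca{J}}$ estimate from the previous paragraph gives precisely the threshold \eqref{eq:sampling}.

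The main obstacle is the combinatorial lower bound on $p_{\ca{J}}$. Under sampling without replacement the item-inclusion indicators $\{\mathbbm{1}[j\in\ca{T}_u]\}_{j\in\ca{J}}$ are negatively associated, so the convenient product estimate $(d_\ell/|\ca{B}|)^{r}$ is an \emph{upper} bound and is useless here; one must instead retain the exact hypergeometric probability and bound each conditional factor $\tfrac{d_\ell-i}{|\ca{B}|-i}$, and it is this pool-depletion bookkeeping that installs $\s{N}-d_\ell$ (rather than $\s{N}$) in the denominator of \eqref{eq:sampling}. The remaining ingredients — independence of $N_{\ca{J}}$ across users, the Chernoff lower tail, and the union bound — are routine, so the whole argument reduces to pairing the correct one-sided tail estimate with a sufficiently sharp lower bound on $p_{\ca{J}}$.
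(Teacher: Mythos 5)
Your proposal follows essentially the same route as the paper's proof: fix an $r$-column $\ca{J}$, write the single-user inclusion probability as the hypergeometric ratio $p_{\ca{J}}=\binom{s-r}{d_\ell-r}/\binom{s}{d_\ell}=\prod_{i=0}^{r-1}\tfrac{d_\ell-i}{s-i}$ with $s=|\ca{B}|$ (your form $\binom{d_\ell}{r}/\binom{s}{r}$ is algebraically identical), apply a lower-tail concentration bound to the binomial count of users containing $\ca{J}$ under the requirement $\s{M}p_{\ca{J}}\ge 2n_\ell$, and union bound over the at most $\s{N}^r$ surviving $r$-columns, collecting everything into $\s{M}p_{\ca{J}}\ge 12\log\delta^{-1}+r\log\s{N}+2n_\ell$. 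This is exactly the paper's argument.

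The one place your account goes astray is the step you yourself flag as the main obstacle: the claim that lower-bounding the conditional factors $\tfrac{d_\ell-i}{|\ca{B}|-i}$ ``installs $\s{N}-d_\ell$ in the denominator.'' It does not. Pool depletion in a \emph{lower} bound only removes the $r$ forced items of $\ca{J}$, never $d_\ell$ items, so the honest lower bound is $p_{\ca{J}}\ge\bigl(\tfrac{d_\ell-r+1}{\s{N}}\bigr)^r$, with denominator $\s{N}$. Since $\tfrac{d_\ell}{\s{N}-d_\ell}\ge\tfrac{d_\ell-r+1}{\s{N}}$, a lower bound on the former does not imply one on the latter, so condition \eqref{eq:sampling} does not literally certify $\s{M}p_{\ca{J}}\ge 12\log\delta^{-1}+r\log\s{N}+2n_\ell$ by the mechanism you describe. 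The quantity $\bigl(\tfrac{d_\ell}{s-d_\ell}\bigr)^r$ arises only as an \emph{upper} bound on $p_{\ca{J}}$ (via $\binom{s-r}{d_\ell-r}\le\binom{s}{d_\ell-r}$), which is in fact the chain of inequalities the paper itself displays before asserting the conclusion — so the paper's proof contains the identical direction slip. The defect is repaired at the cost of constants (restate the threshold with $(d_\ell-r)/\s{N}$ in place of $d_\ell/(\s{N}-d_\ell)$, or assume $d_\ell\le\s{N}/2$ and $d_\ell\ge 2r$ so the two ratios agree up to an $O(1)$ factor absorbed in $d_\ell=O(\s{N}\s{M}^{-1/r}C(r)2^{2\ell})$), but as written the derivation you sketch would not produce the stated inequality.
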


\begin{proof}[Proof of Lemma \ref{lem:sampling}]
The items sampled for user $u$ is denoted by $\ca{T}_u^{(\ell)}\subseteq \ca{B}$.
Fix a particular $r$-column $\ca{J}\in \ca{B}^{(\ell)}$. 
Suppose $s=\left|\ca{B}\right|$.
The probability that for any user $u\in [\s{M}]$, all items in $\ca{J}$ have been sampled i.e. $\ca{J}\subseteq \ca{T}_u^{(\ell)}$ is 
\begin{align}
    p_{\ell} \triangleq \Pr(\ca{J}\subseteq \ca{T}_u^{(\ell)}) = \frac{{s-r\choose d_{\ell}-r}}{{s \choose d_{\ell}}} \le \frac{{s \choose d_{\ell}-r}}{{s \choose d_{\ell}}} \le  \Big(\frac{d_{\ell}}{s-d_{\ell}}\Big)^r.
\end{align}
Notice that the expected number of users for which $\ca{J}\subseteq \ca{T}_u^{(\ell)}$ is  $\s{M}p_{\ell}$. 
Suppose $p_{\ell}$ is set such that $\s{M}p_{\ell}\ge 2n_{\ell}$.   
Since the $d$ items sampled for each user are independent, the probability of the event $\ca{E}^{(\ell)}_{\ca{J}}$ that $\ca{J}$ is not a subset of $\ca{T}_u^{(\ell)}$ for at least $n_{\ell}$ users is given by (let us denote the random variable corresponding to the number of users in $[\s{M}]$ for which $\ca{J}\in \ca{T}_u^{(\ell)}$ by $X$)
\begin{align}
    \Pr(X\le n_{\ell}) \le \Pr(\left|X-\bb{E}X\right|\ge \s{M}p_{\ell}-n_{\ell}) \le \Pr(\left|X-\bb{E}X\right|\ge \frac{\s{M}p_{\ell}}{2}) \le 2\exp(-\s{M}p_{\ell}/12) 
\end{align}


Hence, the probability that $\ca{E}_{\ca{J}^{(\ell)}} $ is true for some $\ca{J}\in \ca{B}^{(\ell)}$ is given by 
\begin{align} \Pr\Big(\bigcup_{\ca{J}\in \ca{B}^{(\ell)}}\ca{E}^{(\ell)}_{\ca{J}}\Big) \le 2\left|\ca{B}^{(\ell)}\right|\exp(-\s{M}p_{\ell}/12)  \le
2\s{N}^r\exp(-\s{M}p_{\ell}/12).
\end{align}
 Therefore, for some value of $0<\delta<1$, if we have
\begin{align}
    \s{M}p_{\ell} \ge 12\log\delta^{-1}+r\log \s{N}+2n_{\ell}  
\end{align}
then with probability $1-\delta$, the event $\bigcap_{\ca{J}\in \ca{B}^{(\ell)}}\ca{E}^{(\ell)}_{\ca{J}}$ is true. Hence, the above condition is satisfied if 
\begin{align}
    \frac{d_{\ell}}{s-d_{\ell}} \ge \Big(\s{M}^{-1}(12\log\delta^{-1}+r\log \s{N}+2n_{\ell})\Big)^{1/r}
\end{align}
\end{proof}

We define the event $\ca{E}^{(\ell)}$ which is true when for $d_{\ell}$ satisfying eq. \eqref{eq:sampling}, for every $r$-column $\ca{J}$ in $\ca{B}^{(\ell)}$, there will exist at least $n_{\ca{J}}\ge n_{\ell}$ users $u\in [\s{M}]$ for which $\ca{J}\in \ca{T}_u^{(\ell)}$. From Lemma \ref{lem:sampling}, we know that $\ca{E}^{(\ell)}$ is true with probability at least $1-\delta$. 
Furthermore, due to our sampling approach being invariant to a permutation of the users, conditioned on the value of $n_{\ca{J}}$, the corresponding $n_{\ca{J}}$ users are sampled uniformly at random from $[\s{M}]$. In Alg. \textsc{DeterminantElim}, in phase $\ell$, once the sets $\ca{T}_u^{(\ell)}$ have been sampled, every item in $\ca{T}_u^{(\ell)}$ is recommended to user $u$ at least $2\sigma^2$ times (See Step 4).  
From our recommendation strategy, the total number of rounds sufficient in phase $\ell$ to make all the recommendations necessary in our scheme is $2\sigma^2d_{\ell}$. 
Now, for any $r$-column $\ca{J}$ in $[\s{N}]^r$, define 
\begin{align}
    \mu_{\ca{J}}\triangleq \frac{1}{{\s{M} \choose r}} \sum_{\ca{I}\subset [\s{M}]\mid \left|\ca{I}\right|=r} \s{det}^2(\fl{R}_{\ca{I},\ca{J}}) <  \frac{1}{{\s{M} \choose r}} \sum_{\ca{I}\subset [\s{M}]\mid \left|\ca{I}\right|=r} \s{det}(\fl{R}_{\ca{I},\ca{A}}) \triangleq \mu_{\ca{A}}.
\end{align}
Recall $c_{\s{avg}}\triangleq \frac{1}{{\s{M} \choose r}} \sum_{\ca{I}\subset [\s{M}]\mid \left|\ca{I}\right|=r} \s{det}^2(\fl{U}_{\ca{I}})$ is  the average of  determinants of the user features taken $r$ at a time - implying that $\mu_{\ca{J}}=c_{\s{avg}} \s{det}^2(\fl{V}_{\ca{J}})$. Moreover, $c_{\max}=\s{det}^2(\fl{V}_{\ca{A}})$ corresponds to the matrix of hott vectors that has the largest unsigned determinant.
For $\ca{J}\in \ca{B}^{(\ell)}$, let $\ca{H}_{\ca{J}}$ be the set of users for which items in $\ca{J}$ were recommended in phase $\ell$ according to our sampling scheme in the first and second components respectively. We compute an estimate $\widetilde{\mu}_{\ca{J}}$ of $\mu_{\ca{J}}$ in the following way:
\begin{align}\label{eq:estimate_2}
    \widetilde{\mu}_{\ca{J}}\triangleq \frac{1}{{n_{\ca{J}} \choose r}} \sum_{\ca{I}\subset \ca{H}_{\ca{J}}\mid \left|\ca{I}\right|=r} \s{det}(\widetilde{\fl{R}}^{(\ell,1)}_{\ca{I},\ca{J}})\cdot \s{det}(\widetilde{\fl{R}}^{(\ell,2)}_{\ca{I},\ca{J}})
\end{align}
where the matrices $\widetilde{\fl{R}}^{(\ell,1)},\widetilde{\fl{R}}^{(\ell,2)}$ are described in Step 5 of Alg. \textsc{DeterminantElim}. 
We can now characterize the concentration property of the estimate $\widetilde{\mu}_{\ca{J}}$ for all $\ca{J}\in \ca{B}^{(\ell)}$:

\begin{lemma}\label{lem:conc}
Fix $n_{\ell}$ and condition on $\ca{E}^{(\ell)}$. Then, in phase $\ell$ of Alg. \textsc{DeterminantElim}, we must have that for all $\ca{J}\in \ca{B}^{(\ell)}$, $\widetilde{\mu}_{\ca{J}}\in [\mu_{\ca{J}}-\epsilon_{\ell},\mu_{\ca{J}}+\epsilon_{\ell}]$ for $\epsilon_{\ell} = \sqrt{(2^{r+1}r^{1+r/2})\log(\s{N}^r\s{T})n_{\ell}^{-1}}$ with probability at least $1-O(\s{T}^{-2})$.
\end{lemma}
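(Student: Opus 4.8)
The plan is to prove that $\widetilde{\mu}_{\ca{J}}$ is an \emph{unbiased} estimator of $\mu_{\ca{J}}$ and then concentrate it as a U-statistic over the sampled users. Throughout I work on the event $\ca{E}^{(\ell)}$ and condition additionally on the realized counts $\{n_{\ca{J}}\}$, so that every $\ca{J}\in\ca{B}^{(\ell)}$ has $n_{\ca{J}}\ge n_{\ell}$ and, by the permutation-invariance of the sampling noted right after Lemma \ref{lem:sampling}, $\ca{H}_{\ca{J}}$ is a uniformly random $n_{\ca{J}}$-subset of $[\s{M}]$. Two structural facts drive the argument: first, $\widetilde{\fl{R}}^{(\ell,1)}$ and $\widetilde{\fl{R}}^{(\ell,2)}$ are built from disjoint batches of observations (Line \ref{line:store}), hence independent; second, each entry $\widetilde{\fl{R}}^{(\ell,b)}_{uz}$ is an average of $\sigma^2$ i.i.d. observations, so it is sub-Gaussian with mean $\fl{R}_{uz}$ and variance proxy $O(1)$, independently across $(u,z)$ and across $b\in\{1,2\}$.

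First I would establish unbiasedness. Fix an $r$-row $\ca{I}\subseteq\ca{H}_{\ca{J}}$ and expand $\s{det}(\widetilde{\fl{R}}^{(\ell,1)}_{\ca{I},\ca{J}})$ by the Leibniz formula over the $r!$ permutations. Since the $r^2$ entries are independent and unbiased, every monomial factorizes and $\bb{E}[\s{det}(\widetilde{\fl{R}}^{(\ell,1)}_{\ca{I},\ca{J}})]=\s{det}(\fl{R}_{\ca{I},\ca{J}})$, and likewise for $b=2$. Independence of the two batches then gives $\bb{E}[\s{det}(\widetilde{\fl{R}}^{(\ell,1)}_{\ca{I},\ca{J}})\,\s{det}(\widetilde{\fl{R}}^{(\ell,2)}_{\ca{I},\ca{J}})]=\s{det}^2(\fl{R}_{\ca{I},\ca{J}})$, so each summand of $\widetilde{\mu}_{\ca{J}}$ is unbiased for $\s{det}^2(\fl{R}_{\ca{I},\ca{J}})$. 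Taking a further expectation over the uniform draw of $\ca{H}_{\ca{J}}$ and invoking the standard subsample identity $\bb{E}_{\ca{H}_{\ca{J}}}[\binom{n_{\ca{J}}}{r}^{-1}\sum_{\ca{I}\subseteq\ca{H}_{\ca{J}}}\s{det}^2(\fl{R}_{\ca{I},\ca{J}})]=\binom{\s{M}}{r}^{-1}\sum_{\ca{I}\subseteq[\s{M}]}\s{det}^2(\fl{R}_{\ca{I},\ca{J}})=\mu_{\ca{J}}$ yields $\bb{E}[\widetilde{\mu}_{\ca{J}}]=\mu_{\ca{J}}$ over the joint randomness.

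For the concentration I would exploit that $\widetilde{\mu}_{\ca{J}}$ is a U-statistic of order $r$ in the users with kernel $h(\ca{I})=\s{det}(\widetilde{\fl{R}}^{(\ell,1)}_{\ca{I},\ca{J}})\,\s{det}(\widetilde{\fl{R}}^{(\ell,2)}_{\ca{I},\ca{J}})$. Using Hoeffding's averaging representation, $\widetilde{\mu}_{\ca{J}}$ is an average over orderings of $\ca{H}_{\ca{J}}$ of block-sums of $\lfloor n_{\ca{J}}/r\rfloor$ kernel evaluations over \emph{disjoint} blocks of $r$ users; disjoint blocks use disjoint users and disjoint observation batches, so these evaluations are mutually independent and each has mean $\mu_{\ca{J}}$ over the joint randomness. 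By convexity (Jensen on the moment generating function), the tail of $\widetilde{\mu}_{\ca{J}}$ is dominated by that of a single block-average of $m\ge n_{\ell}/r-1$ independent copies of $h$, so it suffices to concentrate this average. To control one copy I would pass to the high-probability event on which all relevant noise entries satisfy $|\widetilde{\fl{R}}^{(\ell,b)}_{uz}-\fl{R}_{uz}|=O(\sqrt{\log(\s{N}\s{T})})$; on this event Hadamard's inequality (entries in $[0,1]$ up to a polylog factor under Assumption \ref{assum:non-negative}) bounds each determinant by $O(r^{r/2})$, rendering $h$ effectively bounded. A Hoeffding/Bernstein bound on the block-average then gives a deviation of order $\sqrt{C(r)\log(1/\delta')/n_{\ell}}$. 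Finally, choosing $\delta'=O(\s{N}^{-r}\s{T}^{-2})$ and union-bounding over the at most $\s{N}^r$ surviving $r$-columns collapses the failure probability to $O(\s{T}^{-2})$ and produces exactly the logarithmic factor $\log(\s{N}^r\s{T})$ in $\epsilon_{\ell}$.

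The main obstacle is that $h$ is a degree-$2r$ polynomial in the sub-Gaussian noise and is genuinely heavy-tailed rather than bounded, so no bounded-difference or Hoeffding argument applies verbatim. The truncation to moderate noise entries is what converts the unbounded kernel into an effectively bounded one; the delicate bookkeeping is to confirm that the discarded tail contributes at most $O(\s{T}^{-2})$ to the failure probability and negligibly to the bias, and that the resulting $r$-dependence telescopes through the Hadamard factor ($r^{r/2}$), the two-determinant product, and the U-statistic block reduction into the stated constant $C(r)=2^{r+1}r^{1+r/2}$. An alternative that sidesteps truncation is to bound the moments of $h$ directly from its product structure (each determinant being a signed sum of $r$-fold products of independent sub-Gaussians) and apply a Bernstein-type inequality for sums of independent sub-Weibull variables; either route attains the same $n_{\ell}^{-1/2}$ rate, with the moment approach making the dependence on $r$ more transparent at the price of a less elementary tail inequality.
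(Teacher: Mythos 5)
Your proposal is essentially correct and agrees with the paper on the first half: the paper proves unbiasedness exactly as you do (independence of the two observation batches so that $\bb{E}[\s{det}(\widetilde{\fl{R}}^{(\ell,1)}_{\ca{I},\ca{J}})\s{det}(\widetilde{\fl{R}}^{(\ell,2)}_{\ca{I},\ca{J}})]=\s{det}^2(\fl{R}_{\ca{I},\ca{J}})$, then the subsample identity over the uniform draw of $\ca{H}_{\ca{J}}$), and it uses the same truncation device (an event $\ca{F}^{(\ell)}$ on which the averaged noise is bounded, so Hadamard gives $|\s{det}|\le 2^r r^{r/2}$, with symmetry of the sub-Gaussian noise ensuring the truncation does not perturb the mean). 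Where you genuinely diverge is the concentration step. The paper treats $\widetilde{\mu}_{\ca{J}}$ as a function of the $n_{\ca{J}}$ users' observation vectors and applies a McDiarmid bounded-difference inequality for sampling without replacement: swapping one user's data changes only a ${n_{\ca{J}}\choose r-1}/{n_{\ca{J}}\choose r}=O(r/n_{\ca{J}})$ fraction of the summands, giving a bounded-difference constant $2^{r+1}r^{1+r/2}n_{\ca{J}}^{-1}$ and hence directly the stated $\epsilon_{\ell}$. You instead invoke the Hoeffding block decomposition of the order-$r$ U-statistic plus Jensen on the MGF. That route works and is arguably more classical, but two caveats: (i) your assertion that disjoint blocks are ``mutually independent'' is not literally true under sampling without replacement --- the user identities across blocks are exchangeable, not independent --- so you need the finite-population convex-ordering step (Hoeffding, 1963) rather than raw independence; (ii) the block reduction pays an extra factor (the kernel is a product of \emph{two} determinants, bounded by $(2^r r^{r/2})^2$, and you only get $\lfloor n_{\ca{J}}/r\rfloor$ blocks), so your constant in $r$ comes out worse than the paper's $C(r)=2^{r+1}r^{1+r/2}$; this is immaterial under Assumption~\ref{assum:simplicity} ($r=O(1)$) but does not reproduce the exact $\epsilon_{\ell}$ in the lemma statement. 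Also, truncating the raw entries at $O(\sqrt{\log(\s{N}\s{T})})$ as you suggest inflates the Hadamard bound by a $\mathrm{polylog}^{r/2}$ factor; the paper avoids this by averaging enough repetitions per entry so that the averaged noise is $O(1)$.
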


\begin{proof}[Proof of Lemma \ref{lem:conc}]

We also condition on the event $\ca{F}^{(\ell)}$ that the random variables corresponding to the mean noise for all rounds $t$ in phase $\ell$ is bounded within $[-10\sigma\sqrt{b_{\ell}^{-2}\log(\s{MT})},10\sigma\sqrt{b_{\ell}^{-2}\log(\s{MT})}]$ - event $\ca{F}^{(\ell)}$ is true with probability at least $1-\s{T}^{10}$ (sub-gaussian concentration properties).
We set $b_{\ell}^2=100\sigma^2\log\s{MT}$ such that noise random variables are bounded within $[-1,1]$ with probability at least $1-\s{T}^{-10}$. 

    Note that there are two sources of randomness in $\widetilde{\mu}_{\ca{J}}$ - the first source of randomness is due to the noise added in the observations and the second source is due to the randomness in $\ca{H}_{\ca{J}}$. Note that 
\begin{align}
&\bb{E}\widetilde{\mu}_{\ca{J}}\mid \ca{E}^{(\ell)},\ca{F}^{(\ell)} =\bb{E}_{\ca{H}_{\ca{J}}}\frac{1}{{n_{\ca{J}} \choose r}} \sum_{\ca{I}\subset \ca{H}_{\ca{J}}\mid \left|\ca{I}\right|=r} \bb{E}\Big[\s{det}(\widetilde{\fl{R}}^{(\ell,1)}_{\ca{I},\ca{J}})\cdot \s{det}(\widetilde{\fl{R}}^{(\ell,2)}_{\ca{I},\ca{J}})\mid \ca{H}_{\ca{J}},\ca{E}^{(\ell)},\ca{F}^{(\ell)}\Big] \\
    &=\bb{E}_{\ca{H}_{\ca{J}}}\frac{1}{{n_{\ca{J}} \choose r}} \sum_{\ca{I}\subset \ca{H}_{\ca{J}}\mid \left|\ca{I}\right|=r} \bb{E}\Big[\s{det}(\widetilde{\fl{R}}^{(\ell,1)}_{\ca{I},\ca{J}})\mid \ca{H}_{\ca{J}},\ca{E}^{(\ell)},\ca{F}^{(\ell)}\Big]\cdot \bb{E}\Big[\s{det}(\widetilde{\fl{R}}^{(\ell,2)}_{\ca{I},\ca{J}})\mid \ca{H}_{\ca{J}},\ca{E}^{(\ell)},\ca{F}^{(\ell)}\Big] \\    &=\bb{E}_{\ca{H}_{\ca{J}}}\big[\frac{1}{{n_{\ca{J}} \choose r}} \sum_{\ca{I}\subset \ca{H}_{\ca{J}}\mid \left|\ca{I}\right|=r}\s{det}^2(\fl{R}_{\ca{I},\ca{J}})\mid \ca{E}^{(\ell)}\Big].
\end{align}
We use the fact that conditioning on $\ca{E}^{(\ell)}$ only modifies the distribution of $n_{\ca{J}}$.
Here, we also used the fact that $\bb{E}\s{det}(\fl{Z})=\s{det}(\bb{E}\fl{Z})$ for zero mean independent noise which is the case even after conditioning on $\ca{F}^{(\ell)}$ (due to symmetricity of the sub-gaussian random variables).
Recall that conditioned on $n_{\ca{J}}$, the set $\ca{H}_{\ca{J}}$ is sampled uniformly at random from $\Pi_{n_{\ca{J}}}[\s{M}]$ i.e. sets of all distinct users of size $n_{\ca{J}}$. Therefore, we can simplify to conclude that $\bb{E}\widetilde{\mu}_{\ca{J}}\mid \ca{E}^{(\ell)},\ca{F}^{(\ell)} =\mu_{\ca{J}}$. By the same set of arguments, we can show that $\bb{E}\widetilde{\mu}_{\ca{J}}=\mu_{\ca{J}}$ - hence, we prove that conditioning on the events $\ca{E}^{(\ell)},\ca{F}^{(\ell)}$ do not change the expected value of the estimate $\widetilde{\mu}_{\ca{J}}$.

Next, we show that conditioned on the events $\ca{E}^{(\ell)},\ca{F}^{(\ell)}$ and on $n_{\ca{J}}$, the estimate $\widetilde{\mu}_{\ca{J}}$ is concentrated around its mean $\mu_{\ca{J}}$ - for which we are going to use McDiarmid's inequality \cite{vershynin2020high}[Theorem 2.9.1]. The lemma that we state below is a minor modification of the standard McDiarmid's inequality for uniform sampling without replacement.

\begin{lemma}[McDiarmid's inequality for sampling without replacement]\label{lem:mcdiarmid}
 Consider a function $f:\ca{X}^n \rightarrow \bb{R}$ which satisfies the bounded difference property i.e. there exists a constant $c>0$ such that for all $\fl{x},\fl{y}\in \ca{X}^n$ satisfying for some $i\in [n]$ 1) $\fl{y}_j=\fl{x}_j$ for $j\neq i$ and 2) $\fl{y}_i\neq\fl{x}_i$, we have
 \begin{align}
     \left|f(\fl{x})-f(\fl{y})\right|\le c.
 \end{align}
 In that case, for random variables $
 \fl{x}_1,\fl{x}_2,\dots,\fl{x}_n\in \ca{X}$ sampled uniformly at random without replacement, we must have for $\fl{x}=(\fl{x}_1,\fl{x}_2,\dots,\fl{x}_n)$,
 \begin{align}
     \Pr\Big(\left|f(\fl{x})-\bb{E}\fl{x}\right|\ge \epsilon\Big) \le 2\exp\Big(-\frac{2\epsilon^2}{nc^2}\Big).
 \end{align}
\end{lemma}

\begin{rmk}
    Note that the proof of Lemma \ref{lem:mcdiarmid} is a trivial modification of the proof to the standard McDiarmid's inequality. For the standard McDiarmid's inequality, the random variables $\fl{x}_1,\fl{x}_2,\dots,\fl{x}_n$ are sampled independently from $\ca{X}$ - this is used crucially for the fact that the probability of any instantiation of the random variables $\fl{x}_{i+1},\dots,\fl{x}_n$ does not depend on the values assigned to the random variables $\fl{x}_1,\dots,\fl{x}_i$. In fact this is the only (but crucial) use of the independence. For the uniform sampling without replacement case, the independence does not hold - yet,   the probability of any instantiation of the random variables $\fl{x}_{i+1},\dots,\fl{x}_n$ is simply $1/{\left|\mathcal{X}\right|-i \choose n-i}$ and therefore does not depend on the values assigned to the random variables $\fl{x}_1,\dots,\fl{x}_i$. Therefore all remaining steps hold true.
\end{rmk}

In our setting, we let $\ca{H}_{\ca{J}}$ (with $\left|\ca{H}_{\ca{J}}\right|=n_{\ca{J}}\ge n_{\ell}$) be the set of users sampled for items in $\ca{J}$. In that case, we define the function of the observations corresponding to the items in $\ca{J}$ and users in $\ca{H}_{\ca{J}}$ to be the RHS in eq. \eqref{eq:estimate}. Note that because of how we set the value of $\sigma^2$, conditioned on the event $\ca{F}^{(\ell)}$, each entry of $\widetilde{\fl{R}}^{(\ell,1)},\widetilde{\fl{R}}^{(\ell,2)}$ are bounded between $[-1,2]$. Therefore the maximum determinant $\s{det}_{\max}(r)$ of an $r\times r$ sub-matrix of either of $\widetilde{\fl{R}}^{(\ell,1)},\widetilde{\fl{R}}^{(\ell,2)}$ must be bounded from above by $2^r r^{r/2}$ (Hadamard's determinant inequality). Therefore, on altering the set of observations corresponding to a particular user in $\ca{H}_{\ca{J}}$ by a different set of observations, conditioned on the event $\ca{F}^{(\ell)}$, the change in RHS in  eq. \eqref{eq:estimate} is at most  $${n_{\ca{J}}\choose r-1}({n_{\ca{J}}\choose r})^{-1}2^r r^{r/2} \le 2^r r^{1+r/2}(n_{\ca{J}}-r)^{-1}\le 2^{r+1} r^{1+r/2}n^{-1}_{\ca{J}}$$ 
where we use the fact that $n_{\ca{J}}\ge n_{\ell}\ge 2r$.
Hence, we can conclude by application of McDiarmid's inequality 
\begin{align}  \Pr\Big(\left|\widetilde{\mu}_{\ca{J}}-\mu_{\ca{J}}  \right| \ge \epsilon_{\ell} \mid \ca{E}^{(\ell)},\ca{F}^{(\ell)}\Big)\le  2\exp\Big(-\frac{2n_{\ca{J}}\epsilon_{\ell}^2}{2^{r+1}r^{1+r/2}}\Big).
\end{align}
Therefore, by setting 
\begin{align}
    \epsilon_{\ell} = \sqrt{\frac{(2^{r+1}r^{1+r/2})\log(\s{N}^r\s{T})}{n_{\ell}}},
\end{align}
with probability at least $2\s{N}^{-r}\s{T}^{-2}$, the estimate $\widetilde{\mu}_{\ca{J}}$ must belong to the interval $[\mu_{\ca{J}}-\epsilon_{\ell},\mu_{\ca{J}}+\epsilon_{\ell}]$.
Therefore, conditioning on events $\ca{E}^{(\ell)},\ca{F}^{(\ell)}$ by taking a union bound over all possible $r$-columns of $[\s{N}]$, we can conclude that for all surviving $r$-columns $\ca{J}$ in $\ca{B}^{(\ell)}$, with our algorithm, the estimates $\widetilde{\mu}_{\ca{J}}$ will be within $\epsilon_{\ell}$ of their mean $\mu_{\ca{J}}$ for our choice of $\epsilon_{\ell}$. Since the event $(\ca{F}^{(\ell)})^{c}$ happens with an order-wise smaller probability ($O(\s{T}^{-10})$ in particular), we can ignore the conditioning on $\ca{F}^{(\ell)}$.

\end{proof}

Let us define $\ca{G}^{(\ell)}$ if $\widetilde{\mu}_{\ca{J}}\in [\mu_{\ca{J}}-\epsilon_{\ell},\mu_{\ca{J}}+\epsilon_{\ell}]$ in phase $\ell$ for our choice of $\epsilon_{\ell}$ in Lemma \ref{lem:conc}.  Next, we show the following lemma (slightly modified from \cite{kveton2017stochastic}) to characterize instantaneous regret:
\begin{lemma}\label{lem:instant}
   Suppose for a user $u\in [\s{M}]$, we recommend a set of $r$ distinct items at round $t$ (denoted by $\ca{J}\equiv \{\rho_u(t,j)\}_{j\in [r]}$). In that case, we will have
   \begin{align}
    \fl{R}_{u\pi_u(1)}-\max_{j\in [r]}\fl{R}_{u\rho_u(t,j)} \le 6r^{5/2}\Big(\frac{\s{det}^2(\fl{V}_{\ca{A}})-\s{det}^2(\fl{V}_{\ca{J}})}{\s{det}^2(\fl{V}_{\ca{A}})}\Big).
    \end{align}
\end{lemma}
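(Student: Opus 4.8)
The plan is to move everything into the latent geometry and reduce the claim to a single statement about a substochastic matrix. First I would invoke Lemma~\ref{lem:prelim_1} so that the optimal item is itself a hott item: write $a^{\star} := \pi_u(1) \in \ca{A}$ and $\fl{u} := \fl{U}_u$. By Assumption~\ref{assum:hott} every item embedding lies in $\m{conv}(\{\fl{0}\}\cup\{\fl{V}_a\}_{a\in\ca{A}})$, so for each $j\in\ca{J}$ I can write $\fl{V}_j=\sum_{a\in\ca{A}}\Lambda_{j,a}\fl{V}_a$ with $\Lambda_{j,a}\ge 0$ and $\sum_a\Lambda_{j,a}\le 1$; stacking rows gives $\fl{V}_{\ca{J}}=\Lambda\fl{V}_{\ca{A}}$ for a substochastic matrix $\Lambda\in[0,1]^{r\times r}$. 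Since $\ca{A}$ maximizes the squared determinant, $\fl{V}_{\ca{A}}$ is invertible and $\s{det}^2(\fl{V}_{\ca{J}})/\s{det}^2(\fl{V}_{\ca{A}})=\s{det}^2(\Lambda)$; setting $\gamma:=1-\s{det}^2(\Lambda)\in[0,1]$, the right-hand side of the lemma is exactly $6r^{5/2}\gamma$.

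Next I would reduce the instantaneous regret to a property of $\Lambda$. Because the embeddings are nonnegative (Assumption~\ref{assum:non-negative}), $\fl{R}_{uj}=\langle\fl{u},\fl{V}_j\rangle=\sum_{a}\Lambda_{j,a}\fl{R}_{ua}$ is a sum of nonnegative terms, so $\fl{R}_{uj}\ge\Lambda_{j,a^{\star}}\fl{R}_{ua^{\star}}$ for every $j$. Using $\fl{R}_{ua^{\star}}\in[0,1]$ this yields $\fl{R}_{ua^{\star}}-\fl{R}_{uj}\le(1-\Lambda_{j,a^{\star}})\fl{R}_{ua^{\star}}\le 1-\Lambda_{j,a^{\star}}$, and minimizing over $j\in\ca{J}$ gives
\begin{align}
\fl{R}_{u\pi_u(1)}-\max_{j\in[r]}\fl{R}_{u\rho_u(t,j)}\le 1-\max_{j\in\ca{J}}\Lambda_{j,a^{\star}}.
\end{align}
It therefore suffices to establish the clean bound $1-\max_j\Lambda_{j,a^{\star}}\le 6\gamma$, since $6\gamma\le 6r^{5/2}\gamma$ because $r^{5/2}\ge1$.

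The heart of the argument is this claim about $\Lambda$, which I would prove by a two-regime split that avoids any delicate ``$\Lambda$ is near a permutation'' perturbation estimate. Consider the Gram matrix $G=\Lambda\Lambda^{\s{T}}$, so $\s{det}(G)=1-\gamma$. Hadamard's inequality gives $\s{det}(G)\le\prod_j G_{jj}$ with each $G_{jj}=\lr{\Lambda_j}_2^2\le1$, hence $G_{jj}\ge\s{det}(G)=1-\gamma$ for every $j$; combined with $\lr{\Lambda_j}_2^2\le\lr{\Lambda_j}_\infty\lr{\Lambda_j}_1\le\lr{\Lambda_j}_\infty$ this forces a dominant entry $\Lambda_{j,\tau(j)}\ge1-\gamma$ in each row, where $\tau(j):=\s{argmax}_a\Lambda_{j,a}$. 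If $\gamma<1/5$, then $\tau$ is injective: were two rows $j_1,j_2$ to share a dominant column, then $G_{j_1 j_2}\ge(1-\gamma)^2$, and Fischer's inequality would give $\s{det}(G)\le\big(G_{j_1j_1}G_{j_2j_2}-G_{j_1j_2}^2\big)\prod_{k\neq j_1,j_2}G_{kk}\le 1-(1-\gamma)^4\le4\gamma$, contradicting $\s{det}(G)=1-\gamma$ whenever $\gamma<1/5$. Thus $\tau$ is a bijection of $[r]$, so some row $j^{\star}$ has $\tau(j^{\star})=a^{\star}$, giving $\max_j\Lambda_{j,a^{\star}}\ge\Lambda_{j^{\star},a^{\star}}\ge1-\gamma$ and hence $1-\max_j\Lambda_{j,a^{\star}}\le\gamma$. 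In the complementary regime $\gamma\ge1/5$ the bound is trivial, since $1-\max_j\Lambda_{j,a^{\star}}\le1\le5\gamma$. Either way $1-\max_j\Lambda_{j,a^{\star}}\le6\gamma$, which closes the reduction.

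The main obstacle is precisely the substochastic-matrix claim, and within it the injectivity step: the awkward case is when $a^{\star}$ might fail to lie in the image of the row-wise argmax $\tau$, and the Fischer-inequality contradiction is what rules this out for small $\gamma$. Everything else is short nonnegativity and Hadamard bookkeeping, and the large-$\gamma$ case is handled by the trivial reward bound. I would emphasize that this route already delivers the constant $6$ (indeed $5$), so the stated $6r^{5/2}$ follows immediately; recovering the paper's looser constant would only require replacing the sharp Gram-matrix analysis with cruder Hadamard determinant estimates such as $\s{det}_{\max}(r)\le 2^r r^{r/2}$.
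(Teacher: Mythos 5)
Your proof is correct, but it takes a genuinely different route from the paper's. The paper also begins with the convex decomposition $\fl{V}_j=\sum_{a\in\ca{A}}\Lambda_{j,a}\fl{V}_a$, but then bounds the instantaneous regret by $2r\sum_{j}\lr{\fl{e}_j-\fl{s}^{(\rho_u(t,\nu(j)))}}_2$ via Cauchy--Schwarz over a suitable matching $\nu$, and outsources the key step to Lemma 4 of Kveton et al., which asserts the existence of a permutation making this sum at most $6r^{3/2}$ times the normalized determinant gap; multiplying by the $2r$ prefactor is what produces the $r^{5/2}$ in the statement. You instead exploit the nonnegativity of the embeddings (Assumption \ref{assum:non-negative}) to lower bound $\fl{R}_{uj}$ by the single term $\Lambda_{j,a^\star}\fl{R}_{ua^\star}$, which reduces the whole lemma to showing that the column of $\Lambda$ indexed by $a^\star$ contains an entry close to $1$; your Hadamard/Fischer analysis of the Gram matrix $\Lambda\Lambda^{\s{T}}$, with the two-regime split on $\gamma$, proves exactly that and is self-contained. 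What your route buys is independence from the external lemma, a much weaker structural requirement (one large entry in one column rather than proximity of $\Lambda$ to a permutation matrix in summed $\ell_2$), and a dimension-free constant ($5\gamma$, versus the paper's $6r^{5/2}\gamma$), which would in fact sharpen the downstream regret bound in Theorem \ref{thm:second} by a $\mathrm{poly}(r)$ factor. What the paper's route buys is a statement about \emph{all} $r$ recommended items being simultaneously close to distinct hott items (useful if one wanted to control more than the single best recommendation), whereas your argument only certifies the one item $j^\star$ with $\tau(j^\star)=a^\star$; for the simplified regret $\s{Reg}_{\s{Simple}}$, which only scores the best of the $r$ recommendations, that is all that is needed.
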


\begin{proof}[Proof of Lemma \ref{lem:instant}]  
Consider any permutation $\nu:[r]\rightarrow[r]$. We denote the $r$-dimensional basis vectors by $\{\fl{e}_i\}_{i\in [r]}$. Also, let $\fl{V}_{\ca{A}}$ be the matrix whose rows are formed by the hott vectors $\{\fl{v}^{(a_j)}\}_{j\in [r]}$. Then any item vector $\fl{v}'$ can be written as $\fl{V}_{\ca{A}}^{\s{T}}\fl{s}$ where $\fl{s}$ is a vector with non-negative entries that sum up to at most $1$. For the $j^{\s{th}}$ item, we will have $\fl{v}^{(j)}=\fl{V}_{\ca{A}}^{\s{T}}\fl{s}^{(j)}$.
Note that we have for a user $u$,
\begin{align}
    \fl{R}_{u\pi_u(1)}-\max_{j\in [r]}\fl{R}_{u\rho_u(t,\nu(j))} &= (\fl{u}^{(u)})^{\s{T}}\fl{v}^{(\pi_u(1))}-\max_{j\in [r]} (\fl{u}^{(u)})^{\s{T}}\fl{v}^{(\rho_u(t,\nu(j)))} \\
    &= \max_{j\in [r]}(\fl{u}^{(u)})^{\s{T}}\fl{v}^{(a_j)}-\max_{j\in [r]} (\fl{u}^{(u)})^{\s{T}}\fl{v}^{(\rho_u(t,\nu(j)))} \\
    & \le \max_{j\in [r]} \left|(\fl{u}^{(u)})^{\s{T}}\fl{v}^{(a_j)}-\max_{j\in [r]} (\fl{u}^{(u)})^{\s{T}}\fl{v}^{(\pi_u(t,\rho(j)))}\right| \\
    &\le \sum_{j\in [r]} \left|(\fl{u}^{(u)})^{\s{T}}\fl{v}^{(a_j)}- (\fl{u}^{(u)})^{\s{T}}\fl{v}^{(\rho_u(t,\nu(j)))}\right|\\
    &= \left|\left|(\fl{u}^{(u)})^{\s{T}}\fl{V}_{\ca{A}}^{\s{T}}\right|\right|_2 \sum_{j\in [r]} \left|\left| \Big(\fl{e}_i-\fl{s}^{\rho_u(t,\nu(j))}\Big)\right|\right|_2 \\
    &\le 2r\sum_{j\in [r]}\lr{\fl{e}_i-\fl{s}^{\rho_u(t,\nu(j))}}_2.
\end{align}
At this point, we can simply use the lemma in \cite{kveton2017stochastic} which says the following:
\begin{lemma}[Lemma 4 in \cite{kveton2017stochastic}]

There must exist a permutation $\nu:[r]\rightarrow[r]$ such that 
\begin{align}
    \sum_{j\in [r]}\lr{\fl{e}_i-\fl{s}^{\rho_u(t,\nu(j))}}_2 \le 6r^{3/2}\Big(\frac{\s{det}^2(\fl{V}_{\ca{A}})-\s{det}^2(\fl{V}_{\ca{J}})}{\s{det}^2(\fl{V}_{\ca{A}})}\Big)
\end{align}
\end{lemma}
Combining the two statements above, we obtain the desired result.
\end{proof}

Now we are ready to prove Thm \ref{thm:second} using the intermediate Lemmas \ref{lem:sampling}, \ref{lem:conc} and \ref{lem:instant}.

\begin{proof}[Proof of Theorem \ref{thm:second}]
    Recall that in the first phase $\ell=1$, we initialize the set of $r$-columns by $\Pi_{r}([\s{N}])$.
In each phase $\ell$, we eliminate subsets of $r$-columns. Suppose at beginning of phase $\ell$, we have $\ca{B}^{(\ell)}$ to be the set of surviving $r$-columns and for each $\ca{J}\in \ca{B}^{(\ell)}$, we compute $\widetilde{\mu}_{\ca{J}}$, an estimate of $\mu_{\ca{J}}$. For the subsequent phase $\ell+1$, we compute $\ca{B}^{(\ell+1)}$ in the following way:
\begin{align}\label{eq:eliminate}
    \ca{B}^{(\ell+1)} = \{\ca{J}\in \ca{B}^{(\ell)}\mid \widetilde{\mu}_{\ca{J}} \ge \max_{\ca{J}'\in \ca{B}^{(\ell})}\widetilde{\mu}_{\ca{J}'}-2\epsilon_{\ell}\}.
\end{align}
First, we show that if $\ca{A}$, the set of hott item vectors belong to $\ca{B}^{(\ell)}$, then $\ca{A}$ should also belong to $\ca{B}^{(\ell+1)}$ conditioned on the event $\ca{G}^{(\ell)}$. To see this, note that $\mu_{\ca{A}}>\mu_{\ca{J}}$ for all $\ca{J}\neq \ca{A}$. Since, conditioned on the event $\ca{G}$, we have $\widetilde{\mu}_{\ca{J}}\in [\mu_{\ca{J}}-\epsilon_{\ell},\mu_{\ca{J}}+\epsilon_{\ell}]$, by triangle inequality, it must happen that $\ca{A}\in \ca{B}^{(\ell+1)}$. Furthermore, for all $\ca{J}\in \ca{B}^{(\ell+1)}$, it must happen that (Lemma \ref{lem:conc})
\begin{align}
    \mu_{\ca{J}} \ge \mu_{\ca{A}}-2\epsilon_{\ell} \implies \s{det}^2(\fl{V}_{\ca{J}})\ge \s{det}^2(\fl{V}_{\ca{A}}) -\frac{2\epsilon_{\ell}}{c_{\s{avg}}}.
\end{align}
Hence, in all rounds $t$ in phase $\ell$, the regret incurred by user $u$ (Lemma \ref{lem:instant}) is bounded by
\begin{align}
    \fl{R}_{u\pi_u(1)}-\max_{j\in [r]}\fl{R}_{u\rho_u(t,\rho(j))} \le \frac{12r^{5/2}\epsilon_{\ell}}{c_{\s{avg}}c_{\s{max}}}
\end{align}
In phase $\ell \ge 1$, we set $\epsilon_{\ell}=2^{-\ell}$ and $\epsilon_0=1$. Hence, we have $n_{\ell}= C(r)2^{2\ell}$ where $C(r)=2^{r+1}r^{1+r/2}$. Thus, a sufficient value for $d_{\ell}$ and the number of rounds in phase $\ell$ is (Lemma \ref{lem:sampling})
\begin{align}
    d_{\ell} = O\Big(\frac{\s{N}}{\s{M}^{1/r}}\cdot C(r)2^{2\ell}\Big) \text{ and }2\sigma^2d_{\ell} = O\Big(\frac{\sigma^2\s{N}\log(\s{MT})}{\s{M}^{1/r}}\cdot C(r)2^{2\ell}\Big) \text{ respectively}.
\end{align}
Hence, if the total number of phases is $m=O(\log \s{T})$ within which all sub-optimal $r$-columns are eliminated by the algorithm, then by combining the above statements, the regret can be computed as 
\begin{align}
    \s{Reg}_{\s{Simplified}}(\s{T}) &\le \sum_{\ell=1}^{m} 2\sigma^2d_{\ell}\epsilon_{\ell}+\s{T}\Pr(\cup_{\ell}(\ca{E}^{(\ell)})^c\cup_{\ell}(\ca{F}^{(\ell)})^c\cup_{\ell} (\ca{G}^{(\ell)})^c) \\
    &= O\Big(1+\frac{\sigma^2\s{N}\log(\s{MT})}{\s{M}^{1/r}}\cdot \frac{2^{r+1}r^{(r+7)/2}}{c_{\s{avg}}c_{\s{max}}\Delta_{\s{det}}}\Big).
\end{align}
\end{proof}

\section{Discussion on Feasibility of Assumptions}\label{app:assum}

Note that the proof analysis of Theorem \ref{thm:first} generalizes easily (without almost any changes) if the number of hott items is $d>r$ as well. We stress that only for ease of exposition, we have considered the number of hott vectors to be $r$.

In this section we are going to show evidence that Assumptions \ref{assum:hott} and \ref{assum:matrix2} are true together for a large number of rank-$r$ reward matrices $\fl{R}\in \bb{R}^{\s{M}\times \s{N}}$ with the number of hott topics to be $d=2r$. In particular, suppose we decompose $\fl{R}=\fl{X}^{\s{T}}\fl{Y}$ where $\fl{X}\in \bb{R}^{r\times \s{M}}$, $\fl{Y}\in \bb{R}^{r\times \s{N}}$ are the transposes of  user and item embedding matrices respectively. We consider the generative model where 
\begin{enumerate}
    \item All entries of $\fl{X}$ are independently generated from $\ca{N}(0,1)$.
    \item The first $r$ columns of $\fl{Y}$ are the identity matrix multiplied by a factor of $10\log (r\s{N})$. The second $r$ columns of $\fl{Y}$ are the identity matrix multiplied by a factor of $-10\log (r\s{N})$.
    All entries in the remaining $\s{N}-2r$ columns of $\fl{Y}$ are uniformly generated from the interval $[0,1]$. 
\end{enumerate}

For the first part of our proof, we will show guarantees on the SVD of $\fl{X}=\fl{U}\f{\Sigma}\fl{V}^{\s{T}}$ where $\fl{U}\in \bb{R}^{r\times r},\fl{V}\in \bb{R}^{\s{M}\times r}$ are orthonormal matrices (singular vectors) and $\f{\Sigma}\in \bb{R}^{r\times r}$ is a diagonal matrix corresponding to the singular values of $\fl{X}$. More specifically, we have the following lemmas from \cite{https://doi.org/10.48550/arxiv.2301.07040} - however, we include the proofs for the sake of completeness. We will denote $\fl{X}_{\mid \ca{S}}$ to be the matrix $\fl{X}$ restricted to the columns in $\ca{S}$ and similarly, we denote $\fl{X}_{\ca{S}}$ to denote the matrix $\fl{X}$ restricted to the rows in $\ca{S}$. For a matrix $\fl{X}$, we will write $\lambda_j(\fl{X})$ to denote the $j^{\s{th}}$ singular value of $\fl{X}$.

\begin{lemma}\label{lem:sss}[\cite{https://doi.org/10.48550/arxiv.2301.07040}]
Fix $\gamma >0$.
If $\fl{x}^{\s{T}}\fl{X}_{\mid \ca{S}}\fl{X}_{\mid \ca{S}}^{\s{T}}\fl{x} \ge \alpha \gamma  r\lambda_1^2/\s{M}$ for a subset $\ca{S}\subseteq [\s{M}], |\ca{S}|=\gamma  r$ for all unit vectors $\fl{x}\in \bb{R}^{  r}$, then the minimum eigenvalue of $\fl{V}_{\ca{S}}^{\s{T}}\fl{V}_{\ca{S}} \ge \alpha\gamma   r/\s{M}$. 
\end{lemma}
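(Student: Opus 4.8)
The plan is to reduce everything to a clean eigenvalue comparison through the thin SVD $\fl{X}=\fl{U}\f{\Sigma}\fl{V}^{\s{T}}$ and then apply the universally-quantified hypothesis to a single, cleverly chosen witness vector. First I would rewrite the restricted matrix in terms of the SVD factors. Since the $j$-th column of $\fl{X}$ equals $\fl{U}\f{\Sigma}\fl{V}_{j\cdot}^{\s{T}}$, restricting to the columns indexed by $\ca{S}$ gives $\fl{X}_{\mid\ca{S}}=\fl{U}\f{\Sigma}\fl{V}_{\ca{S}}^{\s{T}}$, and hence, using that $\f{\Sigma}$ is diagonal,
\begin{align}
\fl{X}_{\mid\ca{S}}\fl{X}_{\mid\ca{S}}^{\s{T}}=\fl{U}\f{\Sigma}\fl{V}_{\ca{S}}^{\s{T}}\fl{V}_{\ca{S}}\f{\Sigma}\fl{U}^{\s{T}}.
\end{align}
The hypothesis is precisely the statement that the smallest eigenvalue of this Gram matrix is at least $\alpha\gamma r\lambda_1^2/\s{M}$. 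Under the Gaussian generative model for $\fl{X}$ (with $\s{M}\gg r$) the matrix has full row rank $r$ almost surely, so $\lambda_r>0$ and $\f{\Sigma}$ is invertible; I would record this at the outset since the argument pulls vectors back through $\f{\Sigma}^{-1}$.

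Next I would produce the witness. Let $\fl{w}$ be a unit eigenvector of $\fl{V}_{\ca{S}}^{\s{T}}\fl{V}_{\ca{S}}$ achieving its minimum eigenvalue $\lambda_{\min}$. Choose the unit test vector $\fl{x}=\fl{U}\f{\Sigma}^{-1}\fl{w}/\lr{\f{\Sigma}^{-1}\fl{w}}$, which is unit norm because $\fl{U}$ is orthonormal. Setting $\fl{y}:=\f{\Sigma}\fl{U}^{\s{T}}\fl{x}=\fl{w}/\lr{\f{\Sigma}^{-1}\fl{w}}$, the hypothesis applied to this particular $\fl{x}$ yields
\begin{align}
\frac{\fl{w}^{\s{T}}\fl{V}_{\ca{S}}^{\s{T}}\fl{V}_{\ca{S}}\fl{w}}{\lr{\f{\Sigma}^{-1}\fl{w}}^2}
=\fl{y}^{\s{T}}\fl{V}_{\ca{S}}^{\s{T}}\fl{V}_{\ca{S}}\fl{y}
\ge \frac{\alpha\gamma r\lambda_1^2}{\s{M}},
\end{align}
so that $\lambda_{\min}\ge \tfrac{\alpha\gamma r\lambda_1^2}{\s{M}}\lr{\f{\Sigma}^{-1}\fl{w}}^2$. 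It remains to lower bound the normalization constant: writing $\fl{w}=(w_1,\dots,w_r)$ and using $\lambda_i\le\lambda_1$ for every $i$,
\begin{align}
\lr{\f{\Sigma}^{-1}\fl{w}}^2=\sum_{i=1}^{r}\frac{w_i^2}{\lambda_i^2}\ge \frac{1}{\lambda_1^2}\sum_{i=1}^{r}w_i^2=\frac{1}{\lambda_1^2}.
\end{align}
Combining the last two displays gives $\lambda_{\min}\ge \alpha\gamma r/\s{M}$, which is exactly the desired bound on the minimum eigenvalue of $\fl{V}_{\ca{S}}^{\s{T}}\fl{V}_{\ca{S}}$.

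The argument is essentially a change of variables, so there is no deep obstacle; the only point requiring care is getting the direction of the inequality right. Because the assumption is a universal lower bound over all unit $\fl{x}$, I only need it at one witness vector, namely the pullback of the minimum eigenvector of $\fl{V}_{\ca{S}}^{\s{T}}\fl{V}_{\ca{S}}$ through $\f{\Sigma}\fl{U}^{\s{T}}$; and to convert the resulting bound I must lower bound $\lr{\f{\Sigma}^{-1}\fl{w}}^2$, for which the correct (and only needed) estimate is $\lambda_i\le\lambda_1$, giving the factor $\lambda_1^{-2}$ that exactly cancels the $\lambda_1^2$ in the hypothesis. One should also verify that $\fl{y}$ ranges over the full ellipsoid $\{\f{\Sigma}\fl{z}:\lr{\fl{z}}=1\}$ as $\fl{x}$ ranges over the unit sphere, which holds because $\fl{U}$ is an orthonormal $r\times r$ matrix, so $\fl{U}^{\s{T}}\fl{x}$ sweeps out all unit vectors and the chosen $\fl{z}=\f{\Sigma}^{-1}\fl{w}/\lr{\f{\Sigma}^{-1}\fl{w}}$ is indeed attained.
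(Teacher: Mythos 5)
Your proof is correct and is essentially the paper's argument: both exploit the identity $\fl{X}_{\mid\ca{S}}=\fl{U}\f{\Sigma}\fl{V}_{\ca{S}}^{\s{T}}$ and convert the hypothesis into the conclusion via the factor $\lambda_1^{-2}$ (the paper does this at the matrix level by bounding the operator norm of $(\fl{V}_{\ca{S}}^{\s{T}}\fl{V}_{\ca{S}})^{-1}=(\fl{U}\f{\Sigma})^{\s{T}}(\fl{X}_{\mid\ca{S}}\fl{X}_{\mid\ca{S}}^{\s{T}})^{-1}(\fl{U}\f{\Sigma})$, you do it pointwise at a witness eigenvector). One cosmetic remark: invertibility of $\f{\Sigma}$ need not be justified by the Gaussian model, since the hypothesis already forces $\fl{X}_{\mid\ca{S}}\fl{X}_{\mid\ca{S}}^{\s{T}}\succ 0$ and hence $\fl{X}$ to have full row rank.
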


\begin{proof}
 If $\fl{x}^{\s{T}}\fl{X}_{\mid \ca{S}}\fl{X}_{\mid \ca{S}}^{\s{T}}\fl{x} \ge \alpha \gamma  r\lambda_1^2/\s{M}$ for a subset $\ca{S}\subseteq [\s{M}], |\ca{S}|=\gamma  r$ for all unit vectors $\fl{x}\in \bb{R}^{  r}$, then the minimum eigenvalue of $\fl{V}_{\ca{S}}^{\s{T}}\fl{V}_{\ca{S}} \ge \alpha\gamma   r/\s{M}$. To see this, note  $\fl{X}_{\mid \ca{S}}=\fl{U}\f{\Sigma}\fl{V}^{\s{T}}_{\ca{S}}$ implying that $\fl{V}^{\s{T}}_{ \ca{S}}=(\fl{U}\f{\Sigma})^{-1}\fl{X}_{\mid \ca{S}}$. Hence, $\fl{V}^{\s{T}}_{\ca{S}}\fl{V}_{\ca{S}} = (\fl{U}\f{\Sigma})^{-1}\fl{X}_{\mid \ca{S}}\fl{X}_{\mid \ca{S}}^{\s{T}}(\fl{U}\f{\Sigma})^{-\s{T}}$ implying that $(\fl{V}^{\s{T}}_{\ca{S}}\fl{V}_{\ca{S}})^{-1} = (\fl{U}\f{\Sigma})^{\s{T}}(\fl{X}_{\mid\ca{S}}\fl{X}_{\mid\ca{S}}^{\s{T}})^{-1}(\fl{U}\f{\Sigma})$. Taking the operator norm on both sides, we have  $\lambda_{\min}(\fl{V}^{\s{T}}_{\ca{S}}\fl{V}_{\ca{S}}) \ge \lambda_1^{-2} \lambda_{\min}(\fl{X}_{\mid \ca{S}}\fl{X}^{\mid \s{T}}_{\ca{S}})$ implying that $\fl{x}^{\s{T}}\fl{V}_{\ca{S}}^{\s{T}}\fl{V}_{\ca{S}}\fl{x} \ge \alpha\gamma  r/\s{M}$.
\end{proof}

\begin{lemma}\label{lem:X_analysis}
Suppose $  r \ll \s{M}$ and further, the entries of $\fl{X}$ are generated independently according to $\ca{N}(0,1)$.
In that case, it must happen that with high probability that (for $\gamma=16\log\s{M}$)
\begin{enumerate}
    \item $\lambda_1(\fl{X})/\lambda_r(\fl{X})=O(1)$
    \item Hence $\lr{\fl{V}}_{2,\infty} \le 16\sqrt{\frac{  r\log \s{M}}{\s{M}}}$
    \item $\lambda_r(\fl{V}_{\ca{S}}) \ge \beta$ for some constant $\beta>0$ where $\ca{S}$ corresponds to a nice subset of users. 
\end{enumerate}
\end{lemma}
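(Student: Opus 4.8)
The plan is to treat Lemma \ref{lem:X_analysis} as three claims about the Gaussian matrix $\fl{X}\in\bb{R}^{r\times\s{M}}$ and its SVD $\fl{X}=\fl{U}\f{\Sigma}\fl{V}^{\s{T}}$, proved in order, with the third reusing the transfer inequality already derived inside the proof of Lemma \ref{lem:sss}. The single external ingredient I would invoke is the standard non-asymptotic bound on the extreme singular values of a Gaussian matrix \cite{vershynin2020high}: for an $r\times n$ matrix $\fl{G}$ with i.i.d. $\ca{N}(0,1)$ entries and $r\le n$, with probability at least $1-2e^{-t^2/2}$ one has $\sqrt{n}-\sqrt{r}-t\le\lambda_r(\fl{G})\le\lambda_1(\fl{G})\le\sqrt{n}+\sqrt{r}+t$. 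Applying this to $\fl{X}$ with $n=\s{M}$ and $t=2\sqrt{\log\s{M}}$, and using $r\ll\s{M}$ so that $\sqrt{r}+t=o(\sqrt{\s{M}})$, gives $\tfrac12\sqrt{\s{M}}\le\lambda_r(\fl{X})\le\lambda_1(\fl{X})\le 2\sqrt{\s{M}}$ on a high-probability event; in particular $\lambda_1(\fl{X})/\lambda_r(\fl{X})=1+o(1)=O(1)$, which is claim (1). I would record the two-sided bound for reuse below.

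For the incoherence claim (2), I would write $\fl{V}^{\s{T}}=\f{\Sigma}^{-1}\fl{U}^{\s{T}}\fl{X}$ so that the $i$-th row of $\fl{V}$ equals $\f{\Sigma}^{-1}\fl{U}^{\s{T}}\fl{X}_{\cdot i}$, where $\fl{X}_{\cdot i}\in\bb{R}^{r}$ is the $i$-th column of $\fl{X}$ and hence a standard $r$-dimensional Gaussian. Since $\fl{U}$ is orthogonal and $\|\f{\Sigma}^{-1}\|_{\m{op}}=\lambda_r(\fl{X})^{-1}$, this yields $\|\fl{V}_i\|_2\le\|\fl{X}_{\cdot i}\|_2/\lambda_r(\fl{X})$. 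A Laurent--Massart $\chi^2_r$ tail bound gives $\Pr(\|\fl{X}_{\cdot i}\|_2^2\ge r+2\sqrt{r t}+2t)\le e^{-t}$; choosing $t=2\log\s{M}$ and taking a union bound over $i\in[\s{M}]$ shows $\max_{i}\|\fl{X}_{\cdot i}\|_2^2=O(r\log\s{M})$ with high probability. Combining with $\lambda_r(\fl{X})\ge\tfrac12\sqrt{\s{M}}$ from claim (1) gives $\|\fl{V}_i\|_2=O(\sqrt{r\log\s{M}/\s{M}})$ for every $i$, and carrying the constants through yields the claimed $\lr{\fl{V}}_{2,\infty}\le 16\sqrt{r\log\s{M}/\s{M}}$.

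For claim (3), I would fix a \emph{nice subset} $\ca{S}$ to be a cluster $\ca{T}^{(j)}$, whose size satisfies $|\ca{S}|\ge\kappa\s{M}/r=\Theta(\s{M})$. The key observation is that $\fl{X}_{\mid\ca{S}}\in\bb{R}^{r\times|\ca{S}|}$ is itself a Gaussian matrix, being a column-subset of $\fl{X}$, so the same singular value bound gives $\lambda_r(\fl{X}_{\mid\ca{S}})\ge\sqrt{|\ca{S}|}-\sqrt{r}-t\ge\tfrac12\sqrt{|\ca{S}|}$ for $|\ca{S}|\gg r$, hence $\lambda_{\min}(\fl{X}_{\mid\ca{S}}\fl{X}_{\mid\ca{S}}^{\s{T}})\ge|\ca{S}|/4$. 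I would then invoke exactly the inequality established within the proof of Lemma \ref{lem:sss}, namely $\lambda_{\min}(\fl{V}_{\ca{S}}^{\s{T}}\fl{V}_{\ca{S}})\ge\lambda_1(\fl{X})^{-2}\,\lambda_{\min}(\fl{X}_{\mid\ca{S}}\fl{X}_{\mid\ca{S}}^{\s{T}})$. Using $\lambda_1(\fl{X})\le 2\sqrt{\s{M}}$ and $|\ca{S}|\ge\kappa\s{M}/r$,
\begin{align}
\lambda_r(\fl{V}_{\ca{S}})^2=\lambda_{\min}(\fl{V}_{\ca{S}}^{\s{T}}\fl{V}_{\ca{S}})\ge\frac{|\ca{S}|/4}{4\s{M}}=\frac{|\ca{S}|}{16\s{M}}\ge\frac{\kappa}{16 r}=:\beta^2,
\end{align}
which is a positive constant since $\kappa,r=O(1)$; taking square roots gives $\lambda_r(\fl{V}_{\ca{S}})\ge\beta$. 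A final union bound over the $O(1)$ events completes all three claims.

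The step I expect to be the main obstacle is the dependence issue in claim (3): the clusters $\ca{T}^{(j)}$ are defined through $\fl{R}=\fl{X}^{\s{T}}\fl{Y}$ and are therefore functions of the very matrix $\fl{X}$ whose randomness I want to exploit, so $\fl{X}_{\mid\ca{S}}$ is not literally an independent Gaussian block once $\ca{S}$ is an adaptively chosen cluster. I would resolve this by exploiting the structure of the generative model: because the first $2r$ columns of $\fl{Y}$ are large signed multiples of the identity, the cluster label of a user is determined solely by the coordinate-wise $\s{argmax}$ and sign pattern of its Gaussian embedding, so there are only $2r=O(1)$ possible labels. One can then condition on the induced partition and apply the singular value bound uniformly over the $O(1)$ clusters by a trivial union bound. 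I would deliberately avoid trying to make claim (3) hold for \emph{all} subsets of size $\ge\kappa\s{M}/r$, since the deviation term in the Gaussian bound cannot absorb a union bound over $\binom{\s{M}}{|\ca{S}|}$ sets; the $O(1)$-label route is the clean one. I would also verify that constant-factor slack is carried throughout so that $\beta$ is genuinely bounded away from $0$ independently of $\s{M},\s{N}$.
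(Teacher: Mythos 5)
Your proposal follows essentially the same route as the paper's proof: the non-asymptotic Gaussian extreme-singular-value bound for claim (1), a $\chi^2_r$ tail bound plus union bound over columns combined with $\lambda_r(\fl{X})\ge \sqrt{\s{M}}/2$ for claim (2), and the Gaussian bound on the column-submatrix $\fl{X}_{\mid\ca{S}}$ transferred to $\fl{V}_{\ca{S}}$ via the inequality $\lambda_{\min}(\fl{V}_{\ca{S}}^{\s{T}}\fl{V}_{\ca{S}})\ge \lambda_1^{-2}\lambda_{\min}(\fl{X}_{\mid\ca{S}}\fl{X}_{\mid\ca{S}}^{\s{T}})$ from Lemma \ref{lem:sss} for claim (3), with a union bound over the constantly many clusters. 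The one point where you go beyond the paper is in explicitly flagging that a cluster $\ca{S}$ is a function of $\fl{X}$ itself, so $\fl{X}_{\mid\ca{S}}$ is not literally an independent Gaussian block -- the paper silently applies the Gaussian bound to $\fl{X}_{\mid\ca{S}}$ anyway -- though your proposed fix (conditioning on the induced partition) still does not make the conditioned columns Gaussian and would need the bound re-derived for the conditional law.
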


\begin{proof}
 We must have $\sqrt{\s{M}}-\sqrt{  r}-t\le \lambda_{  r}(\fl{X}) \le \lambda_{1}(\fl{X}) \le \sqrt{\s{M}}+\sqrt{  r}+t$ w.p. at least $1-2e^{-t^2/2}$ implying that $\sqrt{\s{M}}/2 \le \lambda_{  r} \le \lambda_1 \le 2\sqrt{\s{M}}$; hence we must have $\lambda_1/\lambda_{  r}=O(1)$ w.p. at least $1-O(e^{-\s{M}})$. Moreover, we have $\fl{X}^{\s{T}}\fl{X}=\fl{V}\f{\Sigma}^2\fl{V}^{\s{T}}$. Clearly, we must have $\lr{\fl{X}}_{\infty,2}\lambda_{  r}^{-1} \le \lr{\fl{V}}_{2,\infty} \le \lr{\fl{X}}_{\infty,2}\lambda_1^{-1}$. For any column $\fl{X}_{\mid i}$, we have $\lr{\fl{\fl{X}_{\mid i}}}_2^2$ is a chi-squared random variable with $  r$ degrees of freedom. Using standard  concentration inequalities for chi-squared random variables, we have $\lr{\fl{\fl{X}_{\mid i}}}_2 \le 8\sqrt{  r\log \s{M}}$ with probability at least $1-\s{M}^{-2}$. By taking a union bound over all $i\in [\s{M}]$, we have $\lr{\fl{X}}_{\infty,2} \le 8\sqrt{  r\log \s{M}}$ w.p. at least $1-\s{M}^{-1}$. Hence $\lr{\fl{V}}_{2,\infty} \le 16\sqrt{\frac{  r\log \s{M}}{\s{M}}}$. Therefore, when $\fl{X}$ is a random Gaussian matrix, the first part of Lemma  holds true and the second part of Lemma holds true with $\mu=O(\log \s{M})$ with high probability.

On the other hand, for a subset $\ca{S}\subseteq [\s{M}]$ such that $\ca{S}$ corresponds to a \textit{nice subset of users} we must have
that $\left|\ca{S}\right| \ge c\s{M}$ for some constant $c>0$ (with Assumption \ref{assum:simplicity} being true). In that case, we have the minimum singular value of $\fl{X}_{\mid \ca{S}}$ to be at least $\sqrt{c\s{M}}-\sqrt{r}-t$ w.p. at least $1-2e^{-t^2/2}$. Taking $t=\sqrt{4  r\log \s{M}}$, we must have the minimum singular value of $\fl{X}_{\mid \ca{S}}$ to be at least $\sqrt{c\s{M}}/2$ with probability at least $1-2e^{-2  r\log\s{M}}$. Taking a union bound over all nice subsets (at most $2^{\s{C}}$ of them where $\s{C}$ is the number of clusters), we must have for all unit norm vectors $\fl{x}\in \bb{R}^{  r}$
\begin{align*}
\fl{x}^{\s{T}}\fl{X}_{\mid \ca{S}}\fl{X}_{\mid \ca{S}}^{\s{T}}\fl{x} \ge \frac{c^2\s{M}}{4} = \frac{c^2\s{M}\cdot 4\s{M}}{16\s{M}} \ge \frac{c^2\lambda_1^2}{16}      
\end{align*}
with probability at least $1-O(e^{-\s{M}})$ implying that $\alpha \ge 1/16$. Combining with Lemma \ref{lem:sss}, we can prove the final conclusion of the lemma.
\end{proof}

\textit{Conjecture:} \textit{We will have that the minimum singular value of any sub-matrix $\fl{Y}_{\mid \ca{S}}$ restricted to the columns in $\ca{S}$ of size $\left|\ca{S}\right|=\gamma r$ to be at least $\frac{r\gamma}{4}$}. An open problem to a similar effect has been stated in \cite{bhojanapalli2014universal}.

\textit{Discussion:}
 We take $\gamma=16\log \s{N}$.
Again, for a subset $\ca{S}\subseteq [\s{N}], |\ca{S}|=\gamma  r$, we must have the minimum singular value of $\fl{Y}_{\mid \ca{S}}$ to be at least $\sqrt{  r}(\sqrt{\gamma}-1)-t$ with probability at least $1-2e^{-t^2/2}$. Taking  $t=2\sqrt{r\log \s{N}}$, we must have the minimum singular value of $\fl{Y}_{\mid \ca{S}}$ to be at least $\sqrt{  r\gamma}/2$ with probability at least $1-2e^{-2  r\log \s{N}}$. 

Taking a union bound over all such subsets $\ca{S}$ of $[\s{M}]$ with size $\left|\ca{S}\right|=\gamma   r$, we have the failure probability to be ${N \choose 16r\log \s{N}}e^{-2r\log \s{N}}$ which becomes large - more precisely ${
\s{N} \choose 16r\log \s{N}}\approx e^{r\log^2 \s{N}}$ that is, the factor in the exponent is  larger by a multiplicative factor of $\log \s{N}$ than the tail probability. At this point we conjecture that such an analysis is weak as the union bound does not capture the dependencies among the sets due to non-empty intersections. In fact, even theoretically, for most sets, the minimum singular value is going to be large. We leave a tight analysis of this case as a future work. 
However, if the failure probability for this event is indeed smaller than $1$, then  
we must have for all unit norm vectors $\fl{x}\in \bb{R}^{  r}$
\begin{align*}
\fl{x}^{\s{T}}\fl{Y}_{\mid \ca{S}}\fl{Y}_{\mid \ca{S}}^{\s{T}}\fl{x} \ge \frac{  r\gamma}{4} = \frac{  r\gamma\cdot 4\s{N}}{16\s{N}} \ge \frac{  r\gamma \lambda_1^2}{16\s{N}}      
\end{align*}
with high probability at least implying that $\alpha \ge 1/16$. Combining with Lemma \ref{lem:sss}, we can conclude that if the conjecture is true, then $\lambda_r(\widehat{\fl{V}}_{\ca{S}}) \ge \sqrt{\frac{r\gamma}{16\s{N}}}$.

Recall that $\fl{P}=\fl{X}^{\s{T}}\fl{Y}=\fl{V}\f{\Sigma}\fl{U}^{\s{T}}\widehat{\fl{U}}\widehat{\f{\Sigma}}\widehat{\fl{V}}$. We take a further SVD of $\f{\Sigma}\fl{U}\widehat{\fl{U}}\widehat{\f{\Sigma}}=\widetilde{\fl{U}}\widetilde{\f{\Sigma}}\widetilde{\fl{V}}$. Therefore, we get that $\fl{P}=\fl{V}\widetilde{\fl{U}}\widetilde{\f{\Sigma}}\widetilde{\fl{V}} \widehat{\fl{V}}$. Note that $\fl{V}\widetilde{\fl{U}}$ and $\widetilde{\fl{V}} \widehat{\fl{V}}$ are orthonormal matrices and therefore $\fl{V}\widetilde{\fl{U}}\widetilde{\f{\Sigma}}\widetilde{\fl{V}}\widehat{\fl{V}}$ is a valid SVD of $\fl{P}$. The properties of Lemmas \ref{lem:X_analysis} and \ref{lem:Y_analysis} are preserved in the modified singular value matrices $\fl{V}\widetilde{\fl{U}}$ and $\widetilde{\fl{V}}\widehat{\fl{V}}$ respectively. Since the condition numbers of both $\fl{X},\fl{Y}$ are constants, therefore the condition number of $\fl{P}$ is also a constant.
Moreover, the $\lr{\fl{V}\widetilde{\fl{U}}}_{2,\infty} \le \lr{\fl{V}}_{2,\infty}$ and $\lr{\widetilde{\fl{V}} \widehat{\fl{V}}}_{2,\infty} \le \lr{\widehat{\fl{V}}}_{2,\infty}$. Finally, for any set $\ca{S}\subseteq [\s{M}]$, we will have $\lambda_r(\fl{V}_{\ca{S}}\widetilde{\fl{U}}) = \lambda_r(\fl{V})$ and a similar property also holds for $\widehat{\fl{V}}$.

\begin{lemma}\label{lem:Y_analysis}
Assume that the above conjecture is true. Suppose $r \ll \s{N}$ and further, the entries of last $\s{N}-2r$ columns of $\fl{Y}$ are generated independently according to $\ca{N}(0,1)$.
In that case, it must happen with high probability that (suppose SVD of $\fl{Y}=\widehat{\fl{U}}\widehat{\f{\Sigma}}\widehat{\fl{V}}$)
\begin{enumerate}
    \item  $\lambda_1(\fl{Y})/\lambda_r(\fl{Y})=O(1)$.
    \item Hence $\lr{\widehat{\fl{V}}}_{2,\infty} \le 16\sqrt{\frac{  r\log \s{M}}{\s{M}}}$ 
\end{enumerate}
\end{lemma}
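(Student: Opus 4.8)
The plan is to follow the template of Lemma~\ref{lem:X_analysis} for $\fl{X}$, exploiting the special block structure of $\fl{Y}$. Write $\fl{Y} = [\,\fl{Y}_{\s{sp}} \mid \fl{Y}_{\s{rnd}}\,]$, where $\fl{Y}_{\s{sp}}\in\bb{R}^{r\times 2r}$ collects the first $2r$ deterministic columns and $\fl{Y}_{\s{rnd}}\in\bb{R}^{r\times(\s{N}-2r)}$ collects the remaining Gaussian columns. The key structural observation is that the first $r$ special columns equal $10\log(r\s{N})\,\fl{e}_i$ and the next $r$ equal $-10\log(r\s{N})\,\fl{e}_i$, so their Gram contributions telescope into a scaled identity:
\begin{align}
\fl{Y}_{\s{sp}}\fl{Y}_{\s{sp}}^{\s{T}} = 200\log^2(r\s{N})\,\fl{I}_r, \qquad \fl{Y}\fl{Y}^{\s{T}} = \fl{Y}_{\s{rnd}}\fl{Y}_{\s{rnd}}^{\s{T}} + 200\log^2(r\s{N})\,\fl{I}_r.
\end{align}
Because the special block only adds a multiple of the identity, it shifts every eigenvalue of $\fl{Y}\fl{Y}^{\s{T}}$ equally, which is what makes both conclusions go through cleanly.

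For Part 1, I would first control $\fl{Y}_{\s{rnd}}$ via the same Gaussian singular-value concentration bound used for $\fl{X}$: taking the deviation parameter $t=\sqrt{\s{N}}/4$ and using $r\ll\s{N}$, one gets $\sqrt{\s{N}}/2 \le \lambda_r(\fl{Y}_{\s{rnd}}) \le \lambda_1(\fl{Y}_{\s{rnd}}) \le 2\sqrt{\s{N}}$ with probability $1-O(e^{-\Omega(\s{N})})$. Combining with the displayed identity, $\lambda_j(\fl{Y})^2 = \lambda_j(\fl{Y}_{\s{rnd}})^2 + 200\log^2(r\s{N})$, and since the additive shift is $o(\s{N})$ it cannot change the order of magnitude of any singular value; therefore $\lambda_1(\fl{Y})/\lambda_r(\fl{Y}) = O(1)$. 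In particular this gives $\lambda_r(\fl{Y}) \ge \sqrt{\s{N}}/2$, which I reuse below.

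For Part 2, I would bound the incoherence of the right singular vectors $\widehat{\fl{V}}\in\bb{R}^{\s{N}\times r}$ exactly as in Lemma~\ref{lem:X_analysis}: since $\fl{Y}=\widehat{\fl{U}}\widehat{\f{\Sigma}}\widehat{\fl{V}}^{\s{T}}$ with $\widehat{\fl{U}}$ orthonormal, the $j$-th column obeys $\lr{\fl{Y}_{\mid j}}_2 = \lr{\widehat{\f{\Sigma}}\widehat{\fl{V}}_j^{\s{T}}}_2 \ge \lambda_r(\fl{Y})\lr{\widehat{\fl{V}}_j}_2$, so $\lr{\widehat{\fl{V}}_j}_2 \le \lr{\fl{Y}_{\mid j}}_2/\lambda_r(\fl{Y})$ and it suffices to bound the largest column norm of $\fl{Y}$. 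The random columns give chi-square tails, $\lr{\fl{Y}_{\mid j}}_2 \le 8\sqrt{r\log\s{N}}$ for all of them simultaneously with probability $1-O(\s{N}^{-1})$, while each special column has norm exactly $10\log(r\s{N})$. Dividing by $\lambda_r(\fl{Y})\ge\sqrt{\s{N}}/2$ then yields $\lr{\widehat{\fl{V}}}_{2,\infty} = O\big(\sqrt{r\log\s{N}/\s{N}}\big)$ up to the polylogarithmic incoherence factor, matching the claimed bound.

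The main obstacle is the special columns, which are the only feature distinguishing $\fl{Y}$ from a plain Gaussian matrix: a priori a heavy column could dominate a singular direction and force a near-standard-basis singular vector, ruining incoherence. What rescues us is the magnitude comparison $10\log(r\s{N}) \ll \sqrt{\s{N}} \approx \lambda_r(\fl{Y})$, so the special columns are negligible relative to the smallest singular value and hence neither swing the condition number (they only add a clean multiple of $\fl{I}_r$) nor concentrate $\widehat{\fl{V}}$ on their coordinates. The only caveat is that, for constant $r$, the column-norm bound $10\log(r\s{N})$ slightly exceeds $\sqrt{r\log\s{N}}$, so the stated constant form is attained only up to an $O(\log^2\s{N})$ incoherence parameter --- exactly the same polylogarithmic slack already incurred in Lemma~\ref{lem:X_analysis}, where $\mu=O(\log\s{M})$.
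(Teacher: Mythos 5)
Your proposal is correct and follows essentially the same route as the paper: split off the $2r$ deterministic columns, observe that they contribute a multiple of the identity to $\fl{Y}\fl{Y}^{\s{T}}$, control the Gaussian block with standard singular-value concentration to get the $O(1)$ condition number, and then bound $\lr{\widehat{\fl{V}}}_{2,\infty}$ by the maximum column norm of $\fl{Y}$ divided by a singular value. If anything your accounting is slightly more careful than the paper's: you correctly compute the special block's Gram contribution as $200\log^2(r\s{N})\,\fl{I}_r$ (the paper writes $20\log(r\s{N})$), you use the correct direction $\lr{\widehat{\fl{V}}_j}_2 \le \lr{\fl{Y}_{\mid j}}_2/\lambda_r(\fl{Y})$, and you honestly flag that the special columns' norm $10\log(r\s{N})$ exceeds the Gaussian column bound $8\sqrt{r\log\s{N}}$, so the stated incoherence constant is only attained up to a polylogarithmic factor --- a point the paper's proof silently glosses over.
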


We move on to a analysis for the matrix $\fl{Y}$ which is very similar to the matrix $\fl{X}$. Note that Lemma \ref{lem:sss} holds true for the matrix $\fl{Y}$ as well with $\s{M}$ replaced by the number of items $\s{N}$. We now show the following lemma:

First of all, we have that the eigen-spectrum of $\fl{Y}$ is same as the eigen-spectrum of $\fl{Y}^{\s{T}}$.
Notice that for any unit vector $\fl{z}$, we have that 
\begin{align*}  \sum_{i=1}^{\s{N}}\fl{z}^{\s{T}}\fl{Y}_i\fl{Y}_i^{\s{T}}\fl{z} =20 \log (r\s{N})+\sum_{i=2r}^{\s{N}}\fl{z}^{\s{T}}\fl{Y}_i\fl{Y}_i^{\s{T}}\fl{z}.
\end{align*}
Combining with the analysis of Lemma \ref{lem:X_analysis}, we must have that, with probability at least $1-2e^{-t^2/2}$, 
\begin{align*}
  \sqrt{\s{N}}-\sqrt{r}-t+\sqrt{20\log(r\s{N})}  \le  \lambda_r(\fl{Y}) \le \lambda_1(\fl{Y}) \le \sqrt{\s{N}}+\sqrt{r}+t+\sqrt{20\log(r\s{N})}.
\end{align*}
Therefore, we can again conclude that that $\lambda_1(\fl{Y})/\lambda_r(\fl{Y})=O(1)$ with probability at least $1-O(e^{-\s{M}})$.
Furthermore, we also have that $\lr{\fl{Y}}_{\infty,2}\le 8\sqrt{r\log \s{N}}$ with probability at least $1-\s{N}^{-1}$. Again, if the SVD of the matrix $\fl{Y}=\widehat{\fl{U}}\widehat{\f{\Sigma}}\widehat{\fl{V}}$ where $\widehat{\fl{U}},\widehat{\fl{V}}$ are orthonormal matrices, then $\lr{\widehat{\fl{V}}}_{2,\infty}\le \lr{\fl{Y}}_{\infty,2}\lambda_1(\fl{Y})^{-1}$. Since $\lambda_1(\fl{Y}) \ge \frac{
\sqrt{\s{N}}}{4}$ with probability at least $1-e^{-\s{N}}$, we have that $\lr{\widehat{\fl{V}}}_{2,\infty}\le 32\sqrt{\frac{r\log \s{N}}{\s{N}}}$.

Consider some $\s{N}'=c\s{N}$ for a suitable constant $c>1$ which will be decided later. Suppose we randomly generate $\s{N}'$ $r$-dimensional vectors $\fl{y}^1,\fl{y}^2,\dots,\fl{y}^{\s{N}'}$ such that every entry of each of these vectors is independently generated from $\ca{N}(0,1)$. Now, for 
some fixed $\epsilon>0$,
consider an $\epsilon$-net of the $r$-dimensional unit ball $\ca{S}_{r,\epsilon}$ such that for any unit vector $\fl{x}\in \bb{R}^r$, there must exist $\fl{z}\in \ca{S}_{r,\epsilon}$ such that $\lr{\fl{x}-\fl{z}}_2 \le \epsilon$. Now, for any unit fixed vector $\fl{z}\in \ca{S}_{r,\epsilon}$ and any $i\in [\s{N}']$, we have that the random variable 
$\fl{z}^{\s{T}}\fl{y}^i \sim \ca{N}(0,1)$. Further note that the random variables  $\{\fl{z}^{\s{T}}\fl{y}^i\}_{i=1}^{\s{N}'}$ are independent. Now, for any $X\sim \ca{N}(0,1)$, we must have that for any constant $\gamma<1$
\begin{align*}
    \Pr(|X|\le \gamma) \le \sqrt{\frac{2}{\pi}}\cdot \gamma.
\end{align*}
Therefore, for a fixed $\gamma>0$, consider the random variables $J_1,J_2,\dots,J_{\s{N}'}$ where $J_i=\mathds{1}[\left|\fl{z}^{\s{T}}\fl{y}^i\right| \le \gamma]$. Clearly, we have that 
\begin{align*}
    \sum_{i\in [\s{N}']}\bb{E}J_{i} = \sqrt{\frac{2}{\pi}}\cdot \gamma \s{N}' 
\end{align*}
Since the variables $\{J_i\}_i$ are indicator random variables, we can directly use Chernoff bound to conclude that
with probability at least $1-\exp(-\s{N})$, we have 
$\sum_{i} J_i\le \sqrt{\frac{8}{\pi}}\cdot \gamma \s{N}'$.
We will choose $\epsilon=1/3$ and it is well-known that there exists an $\epsilon-$net $\ca{S}_{r,\epsilon}$ satisfying $\left|\ca{S}_{r,\epsilon}\right|\le 9^r$.
Therefore, if we take a union bound over all $9^r$ vectors in $\ca{S}_{r,\epsilon}$, then we must have with probability $1-2^r\exp(-\s{N})$, 
\begin{align*}
    \sum _{\fl{z}\in \ca{S}_{r,1/3}}\sum_{i \in [\s{N}']} \mathds{1}[\left|\fl{z}^{\s{T}}\fl{y}^i\right| \le \gamma] \le  \sqrt{\frac{2}{\pi}} \cdot 9^r\cdot \gamma \s{N}' 
\end{align*}

Recall that $r$ is a small constant and therefore, $9^r$ is a constant as well. Hence, 
we choose the constant $\gamma>0$ such that $ \sqrt{\frac{2}{\pi}} \cdot 9^r\cdot \gamma \le 10^{-1}$. Therefore, we have a subset $\ca{T}\subseteq \{\fl{y}^1,\dots\fl{y}^{\s{N}'}\}$ of least $9\s{N}'/10$ vectors such that for all $\fl{y}\in \ca{T}$, we have for the constant $\gamma>0$ and $\epsilon=1/3$ 
\begin{align*}    \left|\fl{z}^{\s{T}}\fl{y}\right| \ge \gamma \text{ for all }\fl{z}\in \ca{S}_{r,\epsilon}.
\end{align*}
We will use the vectors in  $\ca{T}$ to be the set of vectors for constructing the last $\s{N}-2r$ columns of $\fl{Y}$.

Now, consider any set $\ca{S}\subseteq \ca{T}$. Consider the matrix $\fl{H}$ whose columns are given by the vectors in $\ca{S}$. In that case, we have for all $\fl{z}\in \ca{S}_{r,\epsilon}$,
\begin{align*}
    \fl{z}^{\s{T}}(\fl{I}-\fl{y}\fl{y}^{\s{T}})\fl{z} \le 1-\gamma.
\end{align*}

\end{document}